\DeclareMathOperator*{\argmin}{arg\,min}
\DeclareMathOperator{\proj}{proj}
\newtheoremstyle{custom}  
  {10pt}   
  {10pt}   
  {\itshape}  
  {}        
  {\bfseries} 
  {.}       
  {5pt}     
  {}        
\theoremstyle{custom}
\newtheorem{theorem}{Theorem}[section]
\newtheorem{proposition}[theorem]{Proposition}
\newtheorem{lemma}[theorem]{Lemma}
\newtheorem{definition}{Definition}[section]
\newtheorem{claim}[theorem]{Claim}
\def\eps{\varepsilon}
 \def\H{{\bf H}}
\def\cC{\mathcal{C}}
\def\cE{\mathcal{E}}
\def\cL{\mathcal{L}}
\def\cM{\mathcal{M}}
\def\cN{\mathcal{N}}
\def\cP{\mathcal{P}}
\def\cR{\mathcal{R}}
\def\cS{\mathcal{S}}
\def\cT{\mathcal{T}}
\def\cV{\mathcal{V}}
\def\sE{{\mathbb{E}}}
\def\sM{{\mathbb{M}}}
\def\sP{\mathbb{P}}
\def\sQ{{\mathbb{Q}}}
\def\sR{{\mathbb R}}
\def\rmd{{\mathrm{d}}}
\def\Leb{{\mathrm{Leb}}}
\def\KL{{\mathrm{KL}}}
\def\H{{\mathrm{H}}}
\title{Schr\"odinger Bridge Matching for Tree-Structured Costs and Entropic Wasserstein Barycentres}
\author{%
  Samuel Howard \\
  Department of Statistics\\
  University of Oxford \\
  \And
  Peter Potaptchik \\
  Department of Statistics\\
  University of Oxford \\
  \And
  George Deligiannidis \\
  Department of Statistics\\
  University of Oxford \\
}
\newcommand\blfootnote[1]{%
  \begingroup
  \renewcommand\thefootnote{}\footnote{#1}%
  \addtocounter{footnote}{-1}%
  \endgroup
}
\begin{document}

\maketitle
\blfootnote{Corresponding author: \texttt{howard@stats.ox.ac.uk}}

\begin{abstract}
    \noindent Recent advances in flow-based generative modelling have provided scalable methods for computing the Schr\"odinger Bridge (SB) between distributions, a dynamic form of entropy-regularised Optimal Transport (OT) for the quadratic cost. The successful Iterative Markovian Fitting (IMF) procedure solves the SB problem via sequential bridge-matching steps, presenting an elegant and practical approach with many favourable properties over the more traditional Iterative Proportional Fitting (IPF) procedure.
    Beyond the standard setting, optimal transport can be generalised to the multi-marginal case in which the objective is to minimise a cost defined over several marginal distributions.
    Of particular importance are costs defined over a tree structure, from which Wasserstein barycentres can be recovered as a special case. In this work, we extend the IMF procedure to solve for the tree-structured SB problem. Our resulting algorithm inherits the many advantages of IMF over IPF approaches in the tree-based setting.
    In the case of Wasserstein barycentres, our approach can be viewed as extending the widely used fixed-point approach to use flow-based entropic OT solvers, while requiring only simple bridge-matching steps at each iteration. Our code is available at \url{https://github.com/samuel-howard/Tree_SB_Matching_Barycentres}.
\end{abstract}

\section{Introduction}

Transporting mass between two distributions is a ubiquitous problem with numerous applications in machine learning and beyond. Optimal Transport (OT) \citep{SantambrogioBook, CompOT_Peyre_Cuturi} provides a principled approach for such problems, by seeking to minimise the total cost of transportation according to a chosen cost function. Since the introduction of Sinkhorn's algorithm \citep{Cuturi13_sinkhorn} and more recent neural approaches \citep{Makkuva2020}, computational OT has seen great success across many domains such as biology \citep{Schiebinger2019, CellOT}, and extensively in machine learning \citep{genevay2018_sinkdiv,cuturi2019differentiable, corenflos2021differentiable}. Recently, ideas from the powerful flow-based approaches that have revolutionised generative modelling \citep{song2021scorebased, peluchetti2022nondenoising, lipman2023flow, liu2022, albergo2023building} have been leveraged to solve the entropy-regularised \textit{dynamic} OT problem, known as the Schr\"odinger Bridge (SB). Such approaches provide significant scalability advantages, enabling approximation of OT maps between high-dimensional continuous datasets such as image data. Early flow-based SB solvers were based on the classical Iterative Proportional Fitting (IPF) scheme \citep{bortoli2021DSB, vargas_SBGP, chen2022likelihood}, but such methods have since been superseded by those based on the Iterative Markovian Fitting (IMF) scheme \citep{shi2023diffusion, peluchetti_2023} due to its many superior properties.

Beyond the standard OT problem between two marginals, multi-marginal OT aims to find a joint coupling over \textit{multiple} marginals while minimising the total cost. Tree-structured costs are often considered \citep{haasler21_treeSB}, as they frequently arise in applications while also allowing for improved scalability by leveraging the tree structure.
Of particular significance are star-shaped trees, as these correspond to the prominent Wasserstein barycentre problem \citep{agueh11_barycentre}. 
The Wasserstein barycentre provides a natural notion of `average' for probability distributions, and is widely studied due to its importance in applications, including in Bayesian learning \citep{Srivastava18_barycentre}, clustering \citep{Ye17_clustering}, and representation learning \citep{singh20_contextmover} to name a few.
Computing barycentres is notoriously challenging \citep{altschuler22_NPhard}. Many successful approaches approximate the solution with a finite set of points, but these in-sample methods struggle to scale well as dimensionality increases. Alternative methods aim to provide continuous approximations \citep{Li2020_SCWB} using neural parameterisations, and often require multi-level optimisation procedures \citep{fan21_icnnbary, korotin2022wasserstein}.
Of particular relevance to our work is the diffusion-based approach of \citet{noble2023treebased} which extends the IPF-based Diffusion Schr\"odinger Bridge (DSB) method of \citet{bortoli2021DSB} to the tree-based SB setting, and is (to our knowledge) currently the only neural ODE/SDE approach for barycentre computation.

One of the most successful and elegant approaches for Wasserstein-2 barycentre computation is the iterative fixed-point approach, which involves iteratively updating a candidate barycentre $\nu$ by solving for the OT map from $\nu$ to each marginal $\mu_i$ and updating $\nu$ according to the induced coupling. This approach was popularised by the seminal work of \citet{alvarez16_fixedpoint} and has formed the basis of many algorithmic developments, including in machine learning \citep{korotin2022wasserstein}. It has strong performance, and has been observed to converge quickly in only a few iterations \citep{Lindheim2023_simplebaryapprox}. However, the procedure requires solving a complete OT problem to each marginal at each iteration, which is expensive as solving even a single OT problem can be challenging.

\begin{figure}[t]
    \centering
    \begin{subfigure}[t]{0.47\textwidth}
        \centering
         \includegraphics[width=1.0\linewidth]{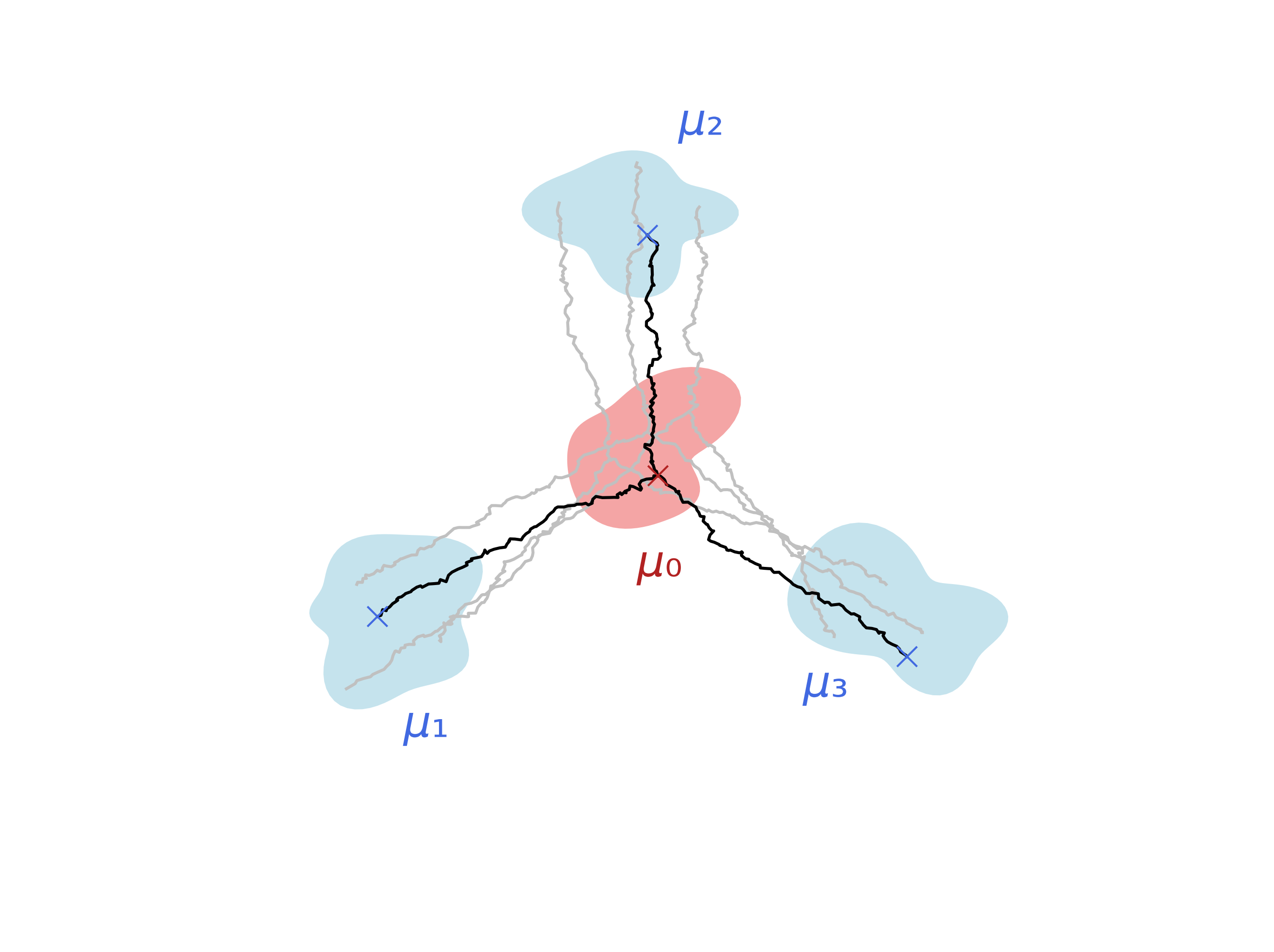}
        \vspace{-4em}
        \caption{Reciprocal process $\Pi$: The $Y_\cS$ (\textcolor{teal}{$\times$}) are sampled from the current coupling $\Pi_\cS$ over $\cS$. Conditional on the $Y_\cS$, points (\textcolor{red}{$\times$}) at unknown marginals $\cV \backslash \cS$ are sampled as $Y_{\cS^c | \cS} \sim \sQ_{\cS^c | \cS}$. Brownian bridges are drawn along the edges between the samples.}
        \label{fig:TreeIMF_diagram_recip}
    \end{subfigure}
    \hfill
    \begin{subfigure}[t]{0.47\textwidth}
        \centering
        \includegraphics[width=1.0\linewidth]{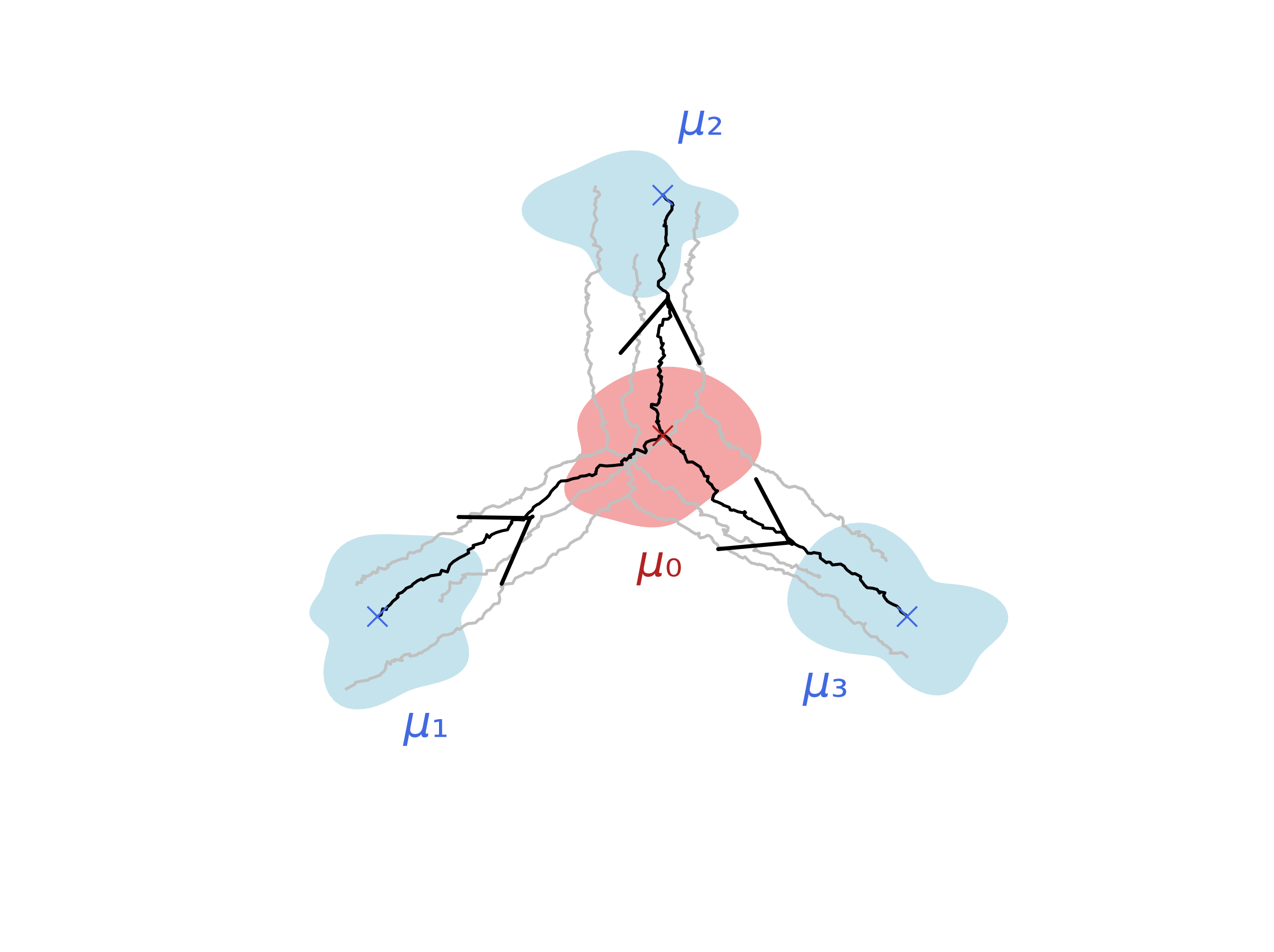}
        \vspace{-4em}
        \caption{Markovianised process $\sP$: Vector fields are trained by bridge-matching along each edge. Samples (\textcolor{teal}{$\times$}) from the next coupling $\Pi_\cS$ are obtained by simulating the resulting SDEs along the tree structure, started at one of the known marginals.}
        \label{fig:TreeIMF_diagram_markov}
    \end{subfigure}
    \caption{The two stages of TreeIMF. On this tree, the marginals at the leaf vertices $\cS$ (\textcolor{teal}{blue}) are fixed. The marginal at the vertex in $\cV \backslash \cS$ (\textcolor{red}{red}) is not fixed, and can change during the procedure.
    \vspace{-1em}}
    \label{fig:TreeIMF_diagram}
\end{figure}

\paragraph{Contributions}
In this work, we extend the IMF procedure to the tree-based SB problem \citep{haasler21_treeSB}. Our TreeDSBM algorithm provides an IMF counterpart to the TreeDSB method from \citet{noble2023treebased}, closing a clear gap in the existing literature (see \Cref{tab:sb-vs-treesb}) and translating the many benefits of IMF over IPF to the tree setting.
For the specific case of Wasserstein barycentre computation, our algorithm can be viewed as extending the commonly used fixed-point approaches to the case of flow-based entropic OT solvers.
In particular, we show that the iterations of the IMF scheme and the fixed-point barycentre solvers can be elegantly combined into a single iterative procedure, yielding a fixed-point-style algorithm that requires only inexpensive bridge-matching at each iteration.
We demonstrate significantly improved empirical performance over TreeDSB, and show that flow-based barycentre solvers can offer competitive performance against existing algorithms for continuous Wasserstein-2 barycentre computation.

\vspace{-1.0em}
\begin{table}[h]
\caption{Positioning of our TreeDSBM algorithm in the literature.}
\centering
\begin{tabular}{@{}lcc@{}}
\toprule
& \textbf{SB} & \textbf{TreeSB} \citep{haasler21_treeSB} \\
\midrule
\textbf{IPF} & DSB \citep{bortoli2021DSB} & TreeDSB \citep{noble2023treebased} \\
\textbf{IMF} & DSBM \citep{shi2023diffusion, peluchetti_2023}   & TreeDSBM \textbf{(Ours)} \\
\bottomrule
\end{tabular}
\vspace{0.0em}

\label{tab:sb-vs-treesb}
\end{table}

\section{Background}

\subsection{Schr\"odinger bridges, optimal transport, and Wasserstein barycentres}

We begin by reviewing the standard Schrödinger Bridge (SB) problem between two distributions, and its relation to Optimal Transport (OT). For notation used in the paper, see Appendix \ref{app:background}.
Given an initial and final measure $\mu_0$ and $\mu_T$ of a population, the SB problem aims to identify the most likely intermediate dynamics of the population under the assumption that the movement is driven by a stochastic reference process $\sQ$. The resulting \textit{dynamic} SB problem is defined as
\begin{equation}\label{eq:SB problem}
    \sP^{SB} = \argmin_{\sP \in P(\cC)} \{ \mathrm{KL}( \sP \Vert \sQ) ~|~ \sP_0=\mu_0, \sP_T=\mu_T \}.
\end{equation}
The \textit{static} SB problem instead considers only the coupling over the endpoints,
\begin{equation}
    \Pi_{0,T}^{SB} = \argmin_{\Pi_{0,T} \in \cP(\sR^d \times \sR^d)} \{ \KL( \Pi_{0,T} \Vert \sQ_{0,T}) ~|~ \Pi_0=\mu_0, \Pi_T=\mu_T \}.
\end{equation}
Under mild assumptions, the two problems are equivalent; the dynamic SB solution can be expressed as a mixture of bridges over the static SB coupling, $\sP^{SB} = \Pi_{0,T}^{SB} \sQ_{\cdot | 0,T}$ \citep{léonard2013survey}.

\paragraph{Connection to quadratic OT}
The reference path measure $\sQ$ is usually considered to be that of a Brownian motion $(\sigma B_t)_{t \in [0,T]}$. In this case, the static SB problem can be rewritten as
\begin{equation}
    \Pi_{0,T}^{SB} = \argmin_{\Pi \in \cP(\sR^d \times \sR^d)} \{ \sE \lVert X_0 - X_T \rVert_2^2 - 2 \sigma^2 T \H(\Pi) ~|~ \Pi_0=\mu_0, \Pi_T=\mu_T \},
\end{equation}
which is the entropy-regularised OT problem for the quadratic ground-cost $c(x_0,x_T) = \tfrac{1}{2}\lVert x_0 - x_T \rVert^2$ and $\eps = \sigma^2 T$ (see the Appendix for an overview of OT). In the sequel, we assume that the reference process is a Brownian motion $(\sigma B_t)_{t \in [0,T]}$.

\paragraph{Wasserstein barycentres}
A key motivation for the tree-structured setting that we will consider is the important Wasserstein barycentre problem \citep{agueh11_barycentre}, which provides a natural notion of `average' for probability distributions. Given $\ell$ measures $(\mu_1, ..., \mu_\ell)$ and weights $(\lambda_1, ..., \lambda_\ell)$ summing to 1, the Wasserstein-2 barycentre is defined as 
\begin{equation}
    \nu^* = \argmin_\nu \sum_i \lambda_i W_2^2(\mu_i, \nu),
\end{equation}
where $W_2^2(\mu_i, \nu)$ denotes the optimal transport cost between $\mu_i$ and $\nu$ for the quadratic ground-cost.

The Wasserstein barycentre problem is notoriously difficult to solve, as it involves several OT sub-problems for which one of the marginals can change in the optimisation. An elegant approach leverages the fixed-point property $x = \sum_i \lambda_i T_i(x)$ which holds at the solution (under mild assumptions), where each $T_i$ is the OT map from $\nu$ to $\mu_i$ \citep{agueh11_barycentre, alvarez16_fixedpoint}. Intuitively, this states that `each point in the support of the barycentre is the average of the corresponding points in the marginals'. This property has motivated an iterative fixed-point approach for barycentre computation, which involves iteratively updating a candidate barycentre $\nu$ by solving for the OT maps $T_i$ and constructing the next iterate as $\Bar{T} \# \nu$ using the pushforward map $\Bar{T}(x) = \sum_i \lambda_i T_i(x)$. Such methods have been shown to be highly successful in practice \citep{alvarez16_fixedpoint, korotin2022wasserstein, Lindheim2023_simplebaryapprox, tanguy2024genericbary}.

\subsection{Iterative Markovian Fitting}
We now outline recent flow-based generative modelling approaches for solving for the SB problem, which will form the basis for our approach.
The SB solution $\sP^{SB}$ can be characterised as the \textit{unique} path measure that is both Markov, and a mixture of bridges $\sP^{SB} = \sP_{0,T}^{SB} \sQ_{\cdot | 0,T}$, that has correct marginals $\sP^{SB}_0=\mu_0, \sP^{SB}_T=\mu_T$ \citep{léonard2013survey}.
This property motivates the Iterative Markovian Fitting (IMF) procedure \citep{shi2023diffusion, peluchetti_2023}, which solves for the SB solution by alternately projecting between Markovian processes and processes with the correct bridges. We refer to \citet{shi2023diffusion} for full details of the IMF procedure, but recall here the basic presentation. We recall the following definitions.

\begin{definition}[\textbf{Reciprocal class, Reciprocal projection}]
    \label{def:standard reciprocal proj}
     A path measure $\Pi \in \cP(\cC)$ is in the reciprocal class $\cR(\sQ)$ of $\sQ$ if it is a mixture of bridges of $\sQ$ conditional on their values at the endpoints, $\Pi = \Pi_{0,T} \sQ_{\cdot|0,T}$. For a path measure $\sP \in \cP(\cC)$, the reciprocal projection is defined to be the mixture of bridges according to its induced coupling, $\proj_{\cR(\sQ)}(\sP) = \sP_{0,T} \sQ_{\cdot | 0,T}$.
\end{definition}

\begin{definition}[\textbf{Markovian class, Markovian projection}]
    \label{def:standard Markov proj}
    Let $\cM$ denote the set of \textit{Markovian} path measures associated to a diffusion of the form $\mathrm{d} X_t = v(t,X_t) \mathrm{d} t + \sigma_t \mathrm{d} B_t$, with $v,\sigma$ locally Lipschitz. For reference process $(\sigma B_t)_{t \in [0,T]}$, the Markovian projection of a measure $\sP \in \cR(\sQ)$ is defined to be (when well-defined) the path measure associated to the SDE
    \begin{equation}
        \label{eq:Markov proj SDE}
        \mathrm{d} X_t = \Bigg[ \frac{\sE_{\sP_{T|t}}[X_T | X_t] - X_t}{T - t}\Bigg] \mathrm{d} t + \sigma \mathrm{d} B_t, \qquad X_0 \sim \mu_0.
    \end{equation}
\end{definition}

It can be shown that, under mild conditions, these definitions coincide with the following minimisation problems over path measures \citep{shi2023diffusion},
\begin{equation}
    \proj_{\cR(\sQ)}(\sP) = \argmin_{\Pi \in \cR(\sQ)} \KL(\sP \Vert \Pi), \qquad
    \proj_\cM(\Pi) = \argmin_{\sM \in \cM} \KL(\Pi \Vert \sM).
\end{equation}

The IMF iterations are defined below, and converge to a unique fixed point which is the SB solution.
\begin{equation}
    \sP^{2n+1} = \proj_\cM(\sP^{2n}), \qquad \sP^{2n+2} = \proj_{\cR(\sQ)}(\sP^{2n+1}).
\end{equation}

\paragraph{Training via bridge-matching} The IMF procedure requires learning the vector field corresponding the Markovian projections $\proj_\cM(\Pi)$. This is done by training a neural network $v_\theta$ with a \textit{bridge-matching} loss objective \citep{peluchetti2022nondenoising}, for which the drift in \eqref{eq:Markov proj SDE} is the optimum,
\begin{equation}
    \cL(\theta) = \int_0^T \mathop{\sE} \limits_{\substack{(X_0,X_T) \sim \Pi_{0,T} \\ X_t \sim \sQ(\cdot | X_0, X_T)}} \lVert v_\theta(X_t, t) - \frac{X_T - X_t}{T - t} \rVert^2 \rmd t.
\end{equation}

\paragraph{Comparison to Iterative Proportional Fitting}
Traditionally, the standard way to solve the SB problem is via the Iterative Proportional Fitting (IPF) procedure \citep{Fortet40_IPF, Kullback68_IPF, Ruschendorf95_IPF}, which minimises the KL divergence between subsequent iterations while alternating the endpoint measure that is fixed.
\begin{equation}
    \sP^{2n+1} = \argmin_{\sP : \sP_T = \mu_T} \KL(\sP \Vert \sP^{2n}), \qquad \sP^{2n+2} = \argmin_{\sP : \sP_0 = \mu_0} \KL(\sP \Vert \sP^{2n+1}).
\end{equation}
This method was implemented using diffusion model-based approaches in \citet{bortoli2021DSB}, giving the Diffusion Schr\"odinger Bridge (DSB) method (see also \citet{vargas_SBGP}, \citet{chen2022likelihood}). While allowing for arbitrary reference measures $\sQ$, this approach suffers from several shortfalls, such as only preserving both marginals at convergence (in comparison to IMF, which preserves both marginals at each iteration), as well as expensive trajectory caching, and `forgetting' of the original reference measure \citep{fernandes2022shooting}. As a result, IPF approaches have largely been superseded by IMF approaches which avoid these issues.

\subsection{Tree-structured Schr\"odinger bridge}
\label{sec:tree_SB}
In this work, we consider the tree-structured Schrodinger Bridge problem \citep{haasler21_treeSB}. In particular, we extend the IMF procedure to the \textit{dynamic} Tree SB setting of \citet{noble2023treebased}.

\paragraph{Stochastic processes on the tree}
In order to explain the dynamic Tree SB problem, we first need to define stochastic processes on a tree.
We will consider trees $\cT = (\cV, \cE, \ell)$ with vertex set $\cV$ and edge set $\cE$, where $\ell: \cE \rightarrow \sR^{>0}$ is an edge-length function.
One can extend the tree to a uniquely arcwise-connected metric space in the natural way; the arc connecting the two endpoints of an edge $e$ is identified with a line segment of corresponding length $T^e = \ell(e)$, and they are connected according to the graph structure. Via a slight abuse of notation we will also denote this metric space as $\cT$. As such, we can define the space $C(\cT, \sR^d)$ of continuous paths from $\cT$ into $\sR^d$, and we let $\cP(\cC_\cT) := \cP(C(\cT, \sR^d))$ denote the space of probability measures over such paths.
We will present the methodology according to a \textit{directed} tree $\cT_r$ rooted at a vertex $r$, for which we can choose an ordered edge set $\cE_r$ corresponding to a depth-first traversal (though we will often omit the dependence on $r$ in the notation). While this presentation may appear somewhat complex, the construction is quite natural; 
for an illustration of processes on a tree-structure, see \Cref{fig:TreeIMF_diagram,fig:TreeIMF_diagram_nonstar}.

We will consider running SDEs along each edge according to the ordering $\cE_r$ in the directed tree structure. We sample $X_r \sim \sP_r$ and then sequentially simulate SDEs along each edge as we traverse the directed tree. For each edge  $e = (u,v) \in \cE_r$, we run an SDE $\rmd X_t^{e} = v^{e}(t, X_t^{e}) \rmd t + \sigma_t^e \rmd B_t^e$ for time $t \in [0, T^{e}]$ initialised at $X_0^{e}=X_u$, and after simulation we let $X_v = X_{T^e}^{e}$.
Such stochastic processes induce a path measure $\sP \in \cP(\cC_\cT)$ over the whole tree. Note that when the tree is a branch with 2 vertices and 1 edge, this recovers the standard case described earlier.

\paragraph{Tree-structured Schr\"odinger bridges}
Now that we have constructed stochastic processes on the tree, the tree-structured Schr\"odinger Bridge problem is defined analogously to the standard case. 
However, now the marginals may be fixed only at a \textit{subset} of vertices $\cS \subset \cV$ (see \Cref{fig:TreeIMF_diagram}).
For a reference measure $\sQ \in \cP(\cC_\cT)$, the \textit{dynamic} and \textit{static} TreeSB problems are defined respectively as
\begin{align}
    \sP^{SB} &= \argmin_{\sP \in \cP(\cC_\cT)} \{ \KL( \sP \Vert \sQ) ~|~ \sP_i=\mu_i ~~ \forall i \in \cS \}, \tag{$\mathrm{TreeSB}_{\mathrm{dyn}}$} \label{eq:Dyn. TreeSB}\\
    \Pi_\cV^{SB} &= \argmin_{\Pi \in \cP((\sR^d)^{|\cV|})} \{ \KL( \Pi \Vert \sQ_{\cV}) ~|~ \Pi_i=\mu_i ~~ \forall i \in \cS\}. \tag{$\mathrm{TreeSB}_{\mathrm{stat}}$} \label{eq:Stat. TreeSB}
\end{align}
As in the standard case, using the chain rule for KL divergence gives (under mild conditions) equivalence of the two problems; the dynamic SB solution can be expressed as a mixture of bridges over the static SB coupling, $\sP^{SB} = \Pi_\cV^{SB} \sQ_{\cdot | \cV}$. In this work, we aim to solve the $\mathrm{TreeSB}_{\text{dyn}}$ problem.

\paragraph{Connection to multi-marginal optimal transport}
In the remainder of the work, we will consider the reference measure $\sQ$ to be associated with running Brownian motions $(\sigma B_t^e)_{t \in [0,T^e]}$ along each edge $e$. The induced reference coupling $\sQ_\cV$ over the vertices is characterised by having independent Gaussian edge increments $Y_u - Y_v \sim \cN(0, \sigma^2 T^{(u,v)})$. By evaluating the KL term, the static TreeSB problem can therefore be rewritten as
\begin{equation}
    \Pi_\cV^{SB} = \argmin_{\Pi \in P((\sR^d)^{|\cV|})} \Big\{ \sE_{X_\cV \sim \Pi} \Big[ \sum_{(u,v) \in \cE} \tfrac{1}{T^{(u,v)}} \lVert X_u - X_v \rVert_2^2 \Big] - 2 \sigma^2 \H(\Pi) | \Pi_i = \mu_i ~~ \forall i \in \cS \Big\}.
\end{equation}
This is precisely an entropy-regularised multi-marginal OT problem with a tree-structured quadratic cost function $c(x_{\cV}) = \sum_{(u,v) \in \cE} \tfrac{1}{T^{(u,v)}} \lVert x_u - x_v \rVert_2^2$ and entropy-regularisation $\eps = 2\sigma^2$ (see Appendix \ref{app:background} for an overview).
Note that each weight is the reciprocal of the corresponding edge length.

\paragraph{Connection to Wasserstein barycentres}
There are many ways to define \textit{entropy-regularised} Wasserstein barycentres (see Appendix \ref{app:background} for an overview). The work of \citet{chizat2023doublyreg} unifies several of these, combining $\eps$-regularised OT costs with an entropic penalty term on the barycentre weighted by $\tau$.
\citet{noble2023treebased} show that the following doubly-regularised barycentre problem
\begin{equation}
    \nu^* = \argmin_\nu \Big\{ \sum_i \lambda_i W_{2,\eps/\lambda_i}^2(\mu_i, \nu) + (\ell - 1) \eps \H(\nu) \Big\},
\end{equation}
is recovered from the TreeSB setting by considering a star-shaped tree with fixed marginals on the leaves and cost function
$c(x_{0:\ell}) = \sum_{i=1}^\ell \lambda_i \lVert x_0 - x_i \rVert_2^2.$

\section{Iterative Markovian Fitting for tree-structured costs}
\label{sec:IMF for trees}
In this work, we generalise the IMF procedure to the tree-based setting.
Recall that standard IMF proceeds by iteratively Markovianising a mixture of bridges, where each bridge is a stochastic process conditioned on endpoints $x_0$ and $x_1$, and the mixing coupling $\Pi_{0,1}$ is over the endpoint values $x_0, x_1$. 
Similarly, we consider Markovianising mixtures of bridges, but instead the mixing coupling is a joint distribution $\Pi_\cS$ over values at the observed vertices $\cS$, and the bridging processes are conditioned on these values. In the following, we justify that the IMF procedure extends to this setting. We then explain how the bridging and Markovianisation can be performed in practice.

\subsection{Markovian and reciprocal processes on the tree}

Recall that the IMF procedure is motivated by the characterisation of the SB as the unique Markov measure that is a mixture of its bridges. Our first contribution is to extend this characterisation to the tree-based setting. We begin with the following definitions, akin to those in the standard case.

\begin{definition}[\textbf{Markov class}]
    \label{def:tree markov class}
    Let $\cM_\cT$ denote the set of Markov path measures on the tree $\cT$, defined via a diffusion along each edge of the form $\rmd X_t^e = v^e(t,X_t) \rmd t + \sigma_t^e \rmd B_t^e$, with $v^e,\sigma^e$ locally Lipschitz.
\end{definition}

\begin{definition}[\textbf{Reciprocal class}]
    \label{def:tree reciprocal class}
    $\Pi \in \cP(\cC_\cT)$ is in the reciprocal class $\cR_\cS(\sQ)$ of $\sQ$ for observed vertices $\cS$ if $\Pi = \Pi_\cS \sQ_{\cdot|\cS}$.
    That is, $\Pi$ is a mixture of bridges of $\sQ$ conditioned on the values at $\cS$.
\end{definition}

Observe that compared to the corresponding \Cref{def:standard reciprocal proj} in standard IMF, the bridges are now conditioned on the values at vertices in $\cS$, rather than on only the endpoints of a single edge. We now state the following result characterising the TreeSB solution.

\begin{restatable}[\textbf{TreeSB characterisation}]{theorem}{SBfixedpoint}
    \label{th:SB fixed point}
    Under mild assumptions (in the Brownian case, namely that $\int \lVert x \rVert^2 \rmd \mu_i(x) < \infty$ and $\H(\mu_i) < \infty$ for each $i \in \cS$), there exists a unique solution to the dynamic TreeSB problem \eqref{eq:Dyn. TreeSB}. The solution is the unique process $\sP$ that is both Markov and in $\cR_\cS(\sQ)$ with correct marginals $\sP_i = \mu_i$ for $i \in \cS$.
\end{restatable}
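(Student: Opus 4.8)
My plan is to reduce the tree case to the two-marginal case by exploiting the tree structure recursively, mirroring the classical argument of \citet{léonard2013survey} that underpins \Cref{def:standard reciprocal proj,def:standard Markov proj}. The three things to establish are: (i) existence and uniqueness of a minimiser of \eqref{eq:Dyn. TreeSB}; (ii) that this minimiser is Markov and lies in $\cR_\cS(\sQ)$ with the prescribed marginals; and (iii) that it is the \emph{only} process with all three properties.

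For (i), I would first argue that \eqref{eq:Dyn. TreeSB} is equivalent to the static problem \eqref{eq:Stat. TreeSB} via the chain rule for KL divergence along the directed tree $\cT_r$: writing $\KL(\sP \Vert \sQ) = \KL(\sP_\cV \Vert \sQ_\cV) + \sE_{X_\cV \sim \sP_\cV}[\KL(\sP_{\cdot|\cV} \Vert \sQ_{\cdot|\cV})]$, the second term is minimised (to zero) exactly when $\sP = \sP_\cV \sQ_{\cdot|\cV}$, so the optimal dynamic process is a mixture of reference bridges over the optimal static coupling. Existence and uniqueness of the static minimiser then follows from lower semicontinuity and strict convexity of $\Pi_\cV \mapsto \KL(\Pi_\cV \Vert \sQ_\cV)$ on the (convex, weakly closed, and under the stated mild assumptions nonempty with finite objective) set of couplings matching $\mu_i$ on $\cS$ — this is the standard Csisz\'ar-type argument. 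Since $\sQ_\cV$ has the product-of-Gaussian-increments form noted in the excerpt, finiteness of the objective at some feasible point is where the ``mild assumptions'' (finite entropy / finite second moments of the $\mu_i$, mutual absolute continuity) get used.

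For (ii) and (iii), the key structural observation is that for a tree-indexed reference built from independent Brownian motions along edges, conditioning on the vertex values $X_\cV$ decouples the edges: $\sQ_{\cdot|\cV} = \bigotimes_{e=(u,v)\in\cE} \sQ^e_{\cdot|u,v}$, a product of independent Brownian bridges. Hence a mixture of bridges $\sP_\cV \sQ_{\cdot|\cV}$ is automatically in $\cR_\cS(\sQ)$ once $\sP_\cV$ is obtained by pushing forward along the tree, and it has the right marginals on $\cS$ by construction; Markovianity is the substantive claim. I would show the static optimiser $\Pi_\cV^{SB}$ factorises according to the tree — i.e.\ it is a Markov random field on $\cT$ — by a Lagrangian/Schr\"odinger-system argument: the optimality conditions force the density $\d\Pi_\cV^{SB}/\d\sQ_\cV$ to be a product of one potential per observed vertex, which composes with the tree-structured $\sQ_\cV$ to give a pairwise factorisation over edges. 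Combined with the edgewise-product form of $\sQ_{\cdot|\cV}$, this yields that $\sP^{SB}$, restricted to any edge and conditioned on its endpoints, is a time-inhomogeneous diffusion independent of the rest of the tree, which is precisely Markovianity on $\cT$ in the sense of \Cref{def:tree markov class}; the drift is the $h$-transform of Brownian motion given the edge endpoint-conditional, matching the form used in the Markovian projection. For uniqueness among Markov-and-reciprocal processes with correct $\cS$-marginals: any such $\sP$ is determined by its coupling $\sP_\cV$ (being in $\cR_\cS(\sQ)$) and being Markov on the tree forces $\sP_\cV$ to factorise over edges with Brownian-bridge-consistent pairwise marginals; a gluing/uniqueness argument then shows $\KL(\sP\Vert\sQ)$ equals the static objective at $\sP_\cV$, and strict convexity pins $\sP_\cV = \Pi_\cV^{SB}$.

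The main obstacle I anticipate is making the ``Markov on the tree $\Leftrightarrow$ tree-factorised coupling + edgewise Brownian bridges'' correspondence fully rigorous — in particular, verifying that a process which is Markov \emph{along each edge} and whose vertex coupling is a Markov random field on $\cT$ is genuinely a mixture of reference bridges (so that no extra ``non-reciprocal'' Markov degrees of freedom survive), and conversely that the reciprocal class contains a \emph{unique} Markov element. This is the tree analogue of the delicate step in \citet{shi2023diffusion, léonard2013survey} and will require care with the locally-Lipschitz-drift hypotheses in \Cref{def:tree markov class} and with the absolute-continuity assumptions that guarantee the bridges and the Markovian projection \eqref{eq:Markov proj SDE} are well defined edge-by-edge. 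Everything else is a fairly mechanical lift of the two-marginal theory, applied edge-by-edge and then glued along the (cycle-free) tree.
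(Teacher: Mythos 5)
Your plan for existence/uniqueness and for membership in $\cR_\cS(\sQ)$ matches the paper's route (chain rule for KL along the tree, then a Csisz\'ar-type lower-semicontinuity and strict-convexity argument on the static problem; cf.\ \Cref{prop:stat and dyn TreeSB,lemma:lemma 2.4 analogue,prop:TreeSB existence}). For Markovianity of the optimiser you take a genuinely different path: you propose to extract a product-of-potentials form of $\rmd\Pi_\cV^{SB}/\rmd\sQ_\cV$ from first-order optimality conditions and read off the Markov-random-field structure. The paper instead runs the primal argument of \citet{léonard2013survey} (Proposition 2.10): split the tree at an interior time $t_e$, and show via Jensen's inequality that a non-Markov optimiser could be replaced by the conditional product $\int \tilde{\sP}^{t_e,z}_{\leq}\otimes\tilde{\sP}^{t_e,z}_{\geq}\,\mu(\rmd z)$ with strictly smaller KL, a contradiction. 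Your duality route is viable in principle but quietly assumes existence and integrability of the Schr\"odinger potentials for the optimiser, which is itself a nontrivial step the primal argument avoids.

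The genuine gap is in the uniqueness half of the characterisation: your closing step ``a gluing/uniqueness argument shows $\KL(\sP\Vert\sQ)$ equals the static objective at $\sP_\cV$, and strict convexity pins $\sP_\cV=\Pi_\cV^{SB}$'' does not work. Strict convexity gives uniqueness of the \emph{minimiser}; it cannot show that an arbitrary process that is Markov, reciprocal, and has the correct $\cS$-marginals actually \emph{attains} the minimum, which is precisely the content of the theorem. The paper closes this by a structural-plus-verification argument: since $\sP^0$ is Markov and reciprocal, its restriction to each edge $e=(u,v)$ is Markov and reciprocal on that edge, so by Theorem 2.14 of \citet{leonard_reciprocal_2014} one has $\rmd\sP^0_e/\rmd\sQ_e=f_u(X_u)f_v(X_v)$; composing along the tree gives $\rmd\sP^0/\rmd\sQ=\prod_{i\in\cV}f_i(X_i)$, and intersecting with the fact that $\sP^0\in\cR_\cS(\sQ)$ (so the density depends only on $\{X_i\}_{i\in\cS}$) forces $\rmd\sP^0/\rmd\sQ=\prod_{i\in\cS}f_i(X_i)$. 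The additive separability of $\log(\rmd\Pi^0/\rmd\sQ_\cV)$ over $\cS$ then makes $\sE_\Pi[\log(\rmd\Pi^0/\rmd\sQ_\cV)]=\sum_{i\in\cS}\int\log f_i\,\rmd\mu_i$ constant over all feasible $\Pi$, whence $\KL(\Pi\Vert\sQ_\cV)\geq\KL(\Pi\Vert\sQ_\cV)-\KL(\Pi\Vert\Pi^0)=\KL(\Pi^0\Vert\sQ_\cV)$, following \citet{nutz2021_entropicOTnotes}. You correctly flag ``the reciprocal class contains a unique Markov element'' as the delicate point, but you do not supply the resolution; without the $\cS$-indexed potential decomposition and the verification inequality, the uniqueness claim is unproved.
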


\subsection{Iterative Markovian Fitting on the tree}
\Cref{th:SB fixed point} suggests that an analogous Iterative Markovian Fitting procedure could be used to solve the dynamic TreeSB problem; in this section, we show that this is indeed the case. The constructions and proofs proceed in much the same way as in \citet{shi2023diffusion}, but crucially rely on the following simple decomposition of the KL divergence according to the tree structure, which applies for measures in $\cM_\cT$ and $\cR_\cS(\sQ)$.

\begin{restatable}[\textbf{KL decomposition along tree}]{lemma}{kldecompositiontree}
    \label{lemma:kl_tree_decomposition}
    Take path measures $\sP, \Tilde{\sP}$ that share a marginal at the root, $\sP_{r} = \Tilde{\sP}_{r}$, and factorise along the tree. Under mild assumptions, we have the KL decomposition
    \begin{align}
        \KL(\sP \Vert \Tilde{\sP}) = \sum_{(u,v) \in \cE_r} \sE_{X_u \sim \sP_u} \Big[ \KL(\sP^{(u,v)}(\cdot | X_u) \Vert \Tilde{\sP}^{(u,v)}(\cdot | X_u)) \Big].
    \end{align}
\end{restatable}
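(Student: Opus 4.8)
`The plan is to prove the KL decomposition along a tree by induction on the tree structure, using the chain rule for KL divergence.`

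The plan is to derive the decomposition from a single structural ingredient --- the factorisation of each path measure into a root marginal times an ordered product of per-edge conditional laws --- followed by an iterated application of the chain rule for the Kullback--Leibler divergence, in which the assumed agreement of the root marginals annihilates the only boundary term.

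First I would make precise what ``factorise along the tree'' means. Fix the rooted directed tree $\cT_r$ and its depth-first ordered edge set $\cE_r = (e_1, \dots, e_m)$, $e_k = (u_k, v_k)$. The hypothesis is that $\sP$ disintegrates as
\[
    \sP = \sP_r \otimes \sP^{e_1}(\cdot \mid X_{u_1}) \otimes \cdots \otimes \sP^{e_m}(\cdot \mid X_{u_m}),
\]
meaning that the law of the edge-$e_k$ process, conditioned on the whole portion of the path already traversed (the root value together with the processes on $e_1, \dots, e_{k-1}$), depends on that portion only through the value $X_{u_k}$ at the parent vertex. This Markov-type property holds for $\sP \in \cM_\cT$ by construction, and under the stated mild assumptions also for $\sP \in \cR_\cS(\sQ)$. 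The same factorisation, with the same ordered edge set, is assumed for $\Tilde{\sP}$.

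Next I would apply the general chain rule for KL, coordinatising a path on $\cT$ by the sequence $(Z_0, Z_1, \dots, Z_m)$ where $Z_0 = X_r$ and $Z_k$ is the edge-$e_k$ process, each $Z_k$ valued in the Polish space $C([0, T^{e_k}], \sR^d)$ so that the required regular conditional probabilities exist. The chain rule gives
\[
    \KL(\sP \Vert \Tilde{\sP}) = \KL(\sP_r \Vert \Tilde{\sP}_r) + \sum_{k=1}^m \sE_{(Z_0, \dots, Z_{k-1}) \sim \sP}\big[ \KL( \sP_{Z_k \mid Z_{0:k-1}} \Vert \Tilde{\sP}_{Z_k \mid Z_{0:k-1}} ) \big].
\]
By the factorisation, $\sP_{Z_k \mid Z_{0:k-1}} = \sP^{e_k}(\cdot \mid X_{u_k})$ (and likewise for $\Tilde{\sP}$), with $X_{u_k}$ a deterministic function of $Z_{0:k-1}$; hence the $k$-th expectation reduces to $\sE_{X_{u_k} \sim \sP_{u_k}}[ \KL( \sP^{e_k}(\cdot \mid X_{u_k}) \Vert \Tilde{\sP}^{e_k}(\cdot \mid X_{u_k}) ) ]$. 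The boundary term $\KL(\sP_r \Vert \Tilde{\sP}_r)$ vanishes because $\sP_r = \Tilde{\sP}_r$, and relabelling the sum over $k$ as a sum over edges $(u,v) \in \cE_r$ delivers the claimed identity.

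The main obstacle is not the algebra but the measure-theoretic content hidden in ``under mild assumptions''. Two points need care. First, the chain rule must be legitimate in this path-space setting: the disintegrations above have to exist, and when $\sP \not\ll \Tilde{\sP}$ one must confirm that some per-edge conditional is itself singular, so that both sides equal $+\infty$ consistently --- this is where conditions ensuring mutual absolute continuity of the reference-driven edge dynamics enter. Second, and more delicate, is verifying the factorisation for the reciprocal class $\cR_\cS(\sQ)$: since $\Pi = \Pi_\cS \sQ_{\cdot \mid \cS}$ is conditioned only on the observed vertices $\cS \subsetneq \cV$, one must check that integrating out the unobserved vertex values leaves a process that still factorises edge-by-edge in depth-first order, with conditionals depending only on the parent vertex. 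This uses that $\sQ$ runs independent Brownian motions along the edges and that Brownian bridges paste together consistently at shared vertices; everything else is bookkeeping along the tree.
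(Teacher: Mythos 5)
Your proposal is correct and takes essentially the same route as the paper's proof: an iterated application of the KL chain rule along the depth-first edge ordering, with the tree factorisation collapsing each conditional to dependence on the parent vertex alone and the shared root marginal annihilating the boundary term $\KL(\sP_r \Vert \Tilde{\sP}_r)$. Your added remarks on the measure-theoretic content of the ``mild assumptions'' (existence of disintegrations on path space and verifying the edge-wise factorisation for the reciprocal class) go slightly beyond what the paper writes out, but do not change the argument.
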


To extend the IMF procedure to the TreeSB problem, we require to extend the definitions and propositions from the standard case to the tree-based setting.
The definition of the reciprocal class projection remains much the same as in the standard case, the difference being that now the bridging processes are conditioned on their values at vertices in $\cS$.
\begin{definition}[\textbf{Reciprocal projection}]
\label{def:Tree reciprocal proj}
    For a process $\sP \in \cP(\cC_\cT)$, the reciprocal projection is defined to be the induced mixture of bridges $\Pi^* = \proj_{\cR_\cS(\sQ)}(\sP) = \sP_\cS \sQ_{\cdot | \cS}$.
\end{definition}
\begin{restatable}{proposition}{reciprocalproj}
\label{prop:Tree reciprocal proj}
    For $\sP \in \cP(\cC_\cT)$, the reciprocal projection $\Pi^* = \proj_{\cR_\cS(\sQ)}(\sP)$ solves the minimisation problem $\Pi^* = \argmin_{\Pi \in \cR_\cS(\sQ)}\KL(\sP \Vert \Pi).$
\end{restatable}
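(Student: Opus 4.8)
The plan is to deduce the proposition from a single application of the chain rule (disintegration) for the Kullback--Leibler divergence, exactly mirroring the argument for standard IMF in \citet{shi2023diffusion} but with the endpoint coupling $\Pi_{0,T}$ replaced by the joint law $\Pi_\cS$ over the observed vertices. Write $\mathrm{ev}_\cS\colon C(\cT,\sR^d)\to(\sR^d)^{|\cS|}$ for evaluation at the vertices of $\cS$. Since $C(\cT,\sR^d)$ is Polish, any $\sP\in\cP(\cC_\cT)$ disintegrates as $\sP=\sP_\cS\,\sP_{\cdot|\cS}$ along $\mathrm{ev}_\cS$ via a regular conditional distribution, and likewise for any competitor $\Pi$. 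The disintegration identity for relative entropy then reads, with values in $[0,+\infty]$,
\[
  \KL(\sP\Vert\Pi)=\KL(\sP_\cS\Vert\Pi_\cS)+\sE_{X_\cS\sim\sP_\cS}\!\big[\KL(\sP_{\cdot|\cS}(\cdot\mid X_\cS)\,\Vert\,\Pi_{\cdot|\cS}(\cdot\mid X_\cS))\big].
\]

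First I would fix an arbitrary $\Pi\in\cR_\cS(\sQ)$. By \Cref{def:tree reciprocal class} we may write $\Pi=\Pi_\cS\,\sQ_{\cdot|\cS}$, so $\Pi_{\cdot|\cS}=\sQ_{\cdot|\cS}$ holds $\Pi_\cS$-a.s. Two cases: if $\sP_\cS\not\ll\Pi_\cS$ then $\KL(\sP_\cS\Vert\Pi_\cS)=+\infty$, hence $\KL(\sP\Vert\Pi)=+\infty$ and this $\Pi$ cannot be optimal; if $\sP_\cS\ll\Pi_\cS$ then $\Pi_{\cdot|\cS}=\sQ_{\cdot|\cS}$ also holds $\sP_\cS$-a.s., so the conditional term becomes
\[
  C:=\sE_{X_\cS\sim\sP_\cS}\!\big[\KL(\sP_{\cdot|\cS}(\cdot\mid X_\cS)\,\Vert\,\sQ_{\cdot|\cS}(\cdot\mid X_\cS))\big],
\]
which depends only on $\sP$ and $\sQ$ and not on the choice of $\Pi$. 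Thus in all cases $\KL(\sP\Vert\Pi)\ge C$, and minimising over $\cR_\cS(\sQ)$ reduces to minimising $\KL(\sP_\cS\Vert\Pi_\cS)$ over couplings $\Pi_\cS$, which is attained with value $0$ exactly at $\Pi_\cS=\sP_\cS$. The choice $\Pi_\cS=\sP_\cS$ yields $\Pi=\sP_\cS\,\sQ_{\cdot|\cS}=\Pi^{*}$, so $\Pi^{*}$ is a minimiser, with optimal value $C$; the degenerate case $C=+\infty$ is trivial since then every element of $\cR_\cS(\sQ)$, including $\Pi^{*}$, attains $+\infty$.

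I expect the only genuine subtlety to be the rigorous justification of the relative-entropy disintegration in this path-space setting: one must verify that $\mathrm{ev}_\cS$ is measurable, that the regular conditional distributions $\sP_{\cdot|\cS}$ and $\Pi_{\cdot|\cS}$ exist (Polishness of $C(\cT,\sR^d)$), and that the identity holds unconditionally in $[0,+\infty]$, which is precisely what lets us dispatch the non-absolutely-continuous case without extra hypotheses. Everything else is immediate: reading off $\Pi_{\cdot|\cS}=\sQ_{\cdot|\cS}$ from \Cref{def:tree reciprocal class}, and noting that the residual term is constant over $\cR_\cS(\sQ)$. It is worth recording that when $C<\infty$ the minimiser is in fact unique, since $\KL(\sP_\cS\Vert\cdot)$ vanishes only at $\sP_\cS$; this sharper statement is what the IMF fixed-point analysis uses downstream.
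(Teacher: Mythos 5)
Your proposal is correct and follows essentially the same route as the paper: both apply the KL chain rule conditioning on the values at the observed vertices $\cS$, observe that the conditional bridges $\Pi(\cdot\mid X_\cS)=\sQ(\cdot\mid X_\cS)$ are fixed across the reciprocal class so the second term is constant, and conclude that the minimum is attained at $\Pi_\cS=\sP_\cS$. Your additional care with the non-absolutely-continuous and infinite-value cases, and the uniqueness remark, are sound refinements of the same argument rather than a different approach.
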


The Markovian projection requires some alterations to account for the tree structure.
\begin{definition}[\textbf{Markovian projection}]
    \label{def:tree markov proj}
    For $\Pi \in \cR_\cS(\sQ)$, the Markovian projection onto $\sM$, denoted $\sM^* = \proj_\cM(\Pi)$, is defined as
    \begin{equation}
        \sM^* = \prod_{\cT} \proj_{\cM_e}(\Pi_e).
    \end{equation}
    $\Pi_e$ denotes the restriction of $\Pi$ to edge $e$, $\proj_{\cM_e}(\Pi_e)$ denotes its Markovian projection (\Cref{def:standard Markov proj}), and $\prod_{\cT}$ denotes the composition of the resulting measures according to the tree structure.
\end{definition}

\begin{restatable}{proposition}{markovproj}
\label{prop:tree markov proj}
    Under mild assumptions, the Markovian projection $\sM^* = \proj_\cM(\Pi)$ solves the minimisation problem
    $\sM^* = \argmin_{\sM \in \cM_\cT} \KL(\Pi \Vert \sM).$
\end{restatable}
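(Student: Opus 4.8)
\emph{Proof strategy.} The plan is to mimic the two-marginal argument of \citet{shi2023diffusion} one edge at a time, using a chain-rule decomposition of the KL divergence to absorb the tree structure. Fix the rooted tree $\cT_r$ with its depth-first edge ordering $\cE_r$. First observe that, since the reciprocal reference runs $\sigma$-Brownian motions along each edge, any $\sM \in \cM_\cT$ with $\KL(\Pi\Vert\sM) < \infty$ must carry the same per-edge volatilities (the quadratic variations along some edge would otherwise disagree); so it suffices to minimise over such $\sM$, and the ``mild assumptions'' are precisely those guaranteeing this, together with the integrability needed for Girsanov's theorem and the well-posedness of the Markovian-projection SDEs. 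Next, disintegrate $\KL(\Pi\Vert\sM)$ by the chain rule for KL according to the blocks given by the root coordinate followed by the edge-paths in the order $\cE_r$. Because $\sM$ is Markov along the tree (\Cref{def:tree markov class}), the conditional law it assigns to the path on an edge $(u,v)$ given everything earlier in the traversal depends on the past only through $X_u$, so
\[ \KL(\Pi\Vert\sM) = \KL(\Pi_r\Vert\sM_r) + \sum_{(u,v)\in\cE_r}\sE_{\Pi}\!\left[\KL\!\big(\Pi^{(u,v)}(\cdot\mid P_{uv})\;\big\Vert\;\sM^{(u,v)}(\cdot\mid X_u)\big)\right], \]
where $P_{uv}$ collects all coordinates preceding edge $(u,v)$ (and contains $X_u$) and the outer expectations are under $\Pi$. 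This is the one-sided analogue of \Cref{lemma:kl_tree_decomposition} that requires only the \emph{reference} side $\sM$ to factorise — note that $\Pi\in\cR_\cS(\sQ)$ need not factorise along the tree.

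Now the minimisation decouples: a measure in $\cM_\cT$ is specified freely by its root marginal together with one SDE drift $v^e$ per edge (volatilities fixed as above), and these parameters live in disjoint summands above. Minimising termwise, the first summand forces $\sM_r=\Pi_r$, and on each edge one must solve $\min_{v^e}\sE_{\Pi}[\KL(\Pi^{(u,v)}(\cdot\mid P_{uv})\Vert\sM^{(u,v)}(\cdot\mid X_u))]$ over diffusion kernels with drift $v^e$. Conditioning on $X_u$ and applying the elementary identity $\sE[\KL(\lambda_P\Vert m)]=\sE[\KL(\lambda_P\Vert\bar\lambda)]+\KL(\bar\lambda\Vert m)$ with $\bar\lambda:=\sE[\lambda_P\mid X_u]$, this edge term equals an $\sM$-independent quantity plus $\sE_{\Pi_u}[\KL(\Pi^{(u,v)}(\cdot\mid X_u)\Vert\sM^{(u,v)}(\cdot\mid X_u))]$. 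Since $\Pi\in\cR_\cS(\sQ)$, the edge-restriction $\Pi_e$ is a mixture of Brownian bridges (and conditioning further on $X_u$ merely fixes the bridge start), with instantaneous drift $\tfrac{X_v-x}{T^e-s}$ at state $x$ and edge-time $s$ given the bridge endpoint $X_v$. The Girsanov computation of \citet{shi2023diffusion} — in which the cross term vanishes because the optimum is a conditional expectation — then identifies the minimising drift as $v^e(s,x)=\tfrac{\sE_{\Pi_e}[X_{T^e}\mid X_s=x]-x}{T^e-s}$, i.e.\ the drift of $\proj_{\cM_e}(\Pi_e)$ from \Cref{def:standard Markov proj}.

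Finally, recompose. The minimiser over $\cM_\cT$ has root marginal $\Pi_r$ and, along each edge, the drift of $\proj_{\cM_e}(\Pi_e)$. By the marginal-preserving (mimicking) property of the standard Markovian projection, propagating these drifts from the root inductively gives $\sM_v=\Pi_v$ at every vertex; hence on each edge the resulting measure is exactly $\proj_{\cM_e}(\Pi_e)$, and the glued object is $\prod_{\cT}\proj_{\cM_e}(\Pi_e)=\proj_\cM(\Pi)=\sM^*$ as defined in \Cref{def:tree markov proj}. Since the edge and root minimisations are each strictly improved away from their optima, $\sM^*$ is the unique minimiser, and one reads off the Pythagorean identity $\KL(\Pi\Vert\sM)=\KL(\Pi\Vert\sM^*)+\KL(\sM^*\Vert\sM)$ as a by-product.

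The crux is the per-edge step. One must (i) use $\Pi\in\cR_\cS(\sQ)$ to know that each edge-restriction, and its further conditioning on the parent value, is a genuine mixture of Brownian bridges with a well-defined instantaneous drift, so that the Girsanov calculation applies; and (ii) be careful that although a transition kernel inside a tree-Markov measure may depend on $X_u$, the SDE drift defining it may not — the best admissible drift is the conditional expectation of the bridge drift given only the current edge-state, which is exactly why the \emph{unconditional} restriction $\Pi_e$ (and not $\Pi^{(u,v)}(\cdot\mid X_u)$) appears in \Cref{def:tree markov proj}. Everything else — verifying the chain-rule decomposition under the finiteness assumptions, and the well-posedness and marginal preservation of the Markovian-projection SDEs via a superposition/mimicking argument — is routine and carries over essentially verbatim from the two-marginal case.
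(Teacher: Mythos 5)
Your proposal is correct and follows essentially the same route as the paper: decompose $\KL(\Pi \Vert \sM)$ along the rooted tree, reduce each edge term to the two-marginal Girsanov computation of \citet{shi2023diffusion} whose minimiser is the per-edge bridge-matching drift, and recompose using the marginal-preserving (mimicking) property so that the glued object is $\prod_\cT \proj_{\cM_e}(\Pi_e)$. The one place you genuinely depart from the paper is the decomposition step, and your version is arguably the more careful one: the paper invokes \Cref{lemma:kl_tree_decomposition} directly, on the grounds that both Markov and reciprocal processes ``factorise along the tree'', whereas you observe that a reciprocal $\Pi \in \cR_\cS(\sQ)$ need not be tree-Markov (its edge law given the traversal past can depend on more than $X_u$) and therefore use a one-sided chain rule in which only $\sM$ factorises, followed by the compound identity $\sE[\KL(\lambda_P \Vert m)] = \sE[\KL(\lambda_P \Vert \bar\lambda)] + \sE[\KL(\bar\lambda \Vert m)]$ to collapse the conditioning to $X_u$ at the cost of an $\sM$-independent constant. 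This buys a decomposition that is valid for arbitrary reciprocal $\Pi$ and still yields the same argmin, and it makes explicit why the unconditional restriction $\Pi_e$ (rather than $\Pi^{(u,v)}(\cdot \mid X_u)$) is the right object in \Cref{def:tree markov proj}; the paper's route is shorter but leans on a factorisation claim for $\Pi$ that, as you note, is not needed and not literally true in general.
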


The TreeIMF procedure is then defined via the usual iterations, started from an initialisation ${\sP^0 \in \cR_\cS(\sQ)}$ such that $\sP^0_i = \mu_i$ for $i \in \cS$:
\begin{equation}
    \sP^{2n+1} = \proj_{\cM_\cT}(\sP^{2n}), ~~~ \sP^{2n+2} = \proj_{\cR_\cS(\sQ)}(\sP^{2n+1}).
\end{equation}
Note that this procedure recovers standard IMF when the tree has a bridge structure (2 vertices, 1 edge), just as TreeDSB recovers DSB in this setting.

We now state the following result, which shows convergence of the IMF iterates to the TreeSB solution. This resembles Theorem 8 in \citet{shi2023diffusion}, and the proof proceeds in largely the same way using the appropriate modifications and in particular leveraging the KL decomposition in \Cref{lemma:kl_tree_decomposition}. We defer the details to Appendix \ref{app:proofs}.

\begin{restatable}[\textbf{Convergence of TreeIMF}]{theorem}{treeIMFconvergence}
    Under mild conditions, the $\mathrm{TreeSB}_{\mathrm{dyn}}$ solution $\sP^*$ is the unique fixed point of the TreeIMF iterates $\sP^n$, and we have $\lim_{n \rightarrow \infty} \KL(\sP^n \Vert \sP) = 0$.
\end{restatable}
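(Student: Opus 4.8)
The plan is to mirror the proof of Theorem~8 in \citet{shi2023diffusion}, substituting the time-interval KL decomposition used there by the tree decomposition of \Cref{lemma:kl_tree_decomposition}. The ``unique fixed point'' assertion is essentially a restatement of \Cref{th:SB fixed point}: a measure fixed by both $\proj_{\cM_\cT}$ and $\proj_{\cR_\cS(\sQ)}$ with the correct constraints is precisely a Markov measure lying in $\cR_\cS(\sQ)$ with $\sP_i = \mu_i$ for $i \in \cS$, and \Cref{th:SB fixed point} identifies this with $\sP^*$. The real content is the convergence statement, which I would establish in three stages: (i) the iterates remain in the correct classes and retain the correct marginals; (ii) a pair of Pythagorean identities gives monotonicity of $\KL(\sP^n \Vert \sP^*)$ together with summability of the increments; (iii) a compactness argument identifies the limit point with $\sP^*$ via \Cref{th:SB fixed point}, whence $\KL(\sP^n \Vert \sP^*) \to 0$.

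For (i), note that by \Cref{def:tree markov proj} the tree Markovian projection is built edge-by-edge from the standard projection of \Cref{def:standard Markov proj}, which preserves the marginals at both endpoints of each edge; traversing the rooted tree from $r$, an induction along $\cE_r$ shows $\proj_{\cM_\cT}$ preserves every vertex marginal, in particular the fixed $\mu_i$, $i \in \cS$. The reciprocal projection $\proj_{\cR_\cS(\sQ)}(\sP) = \sP_\cS \sQ_{\cdot|\cS}$ visibly preserves the joint law on $\cS$. Hence, starting from $\sP^0 \in \cR_\cS(\sQ)$ with $\sP^0_i = \mu_i$, we get $\sP^{2n+1} \in \cM_\cT$, $\sP^{2n} \in \cR_\cS(\sQ)$, and $\sP^n_i = \mu_i$ for all $n$ and $i \in \cS$; under the ``mild conditions'' (which I would make precise: finite KL of $\sP^0$ to the reference, locally Lipschitz well-defined edgewise drifts) one also keeps $\KL(\sP^n \Vert \sP^0) < \infty$, hence $\KL(\sP^n \Vert \sP^*) < \infty$.

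For (ii), at a reciprocal step $\sP^{2n+2} = \proj_{\cR_\cS(\sQ)}(\sP^{2n+1})$ and $\sP^*$ share the bridge kernel $\sQ_{\cdot|\cS}$, so the chain rule for KL over the disintegration at $\cS$ yields the Pythagorean identity $\KL(\sP^{2n+1}\Vert\sP^*) = \KL(\sP^{2n+1}\Vert\sP^{2n+2}) + \KL(\sP^{2n+2}\Vert\sP^*)$ (this is the mechanism behind \Cref{prop:Tree reciprocal proj}). At a Markov step I would apply \Cref{lemma:kl_tree_decomposition} to write $\KL(\sP^{2n}\Vert\sP^*)$ and $\KL(\sP^{2n+1}\Vert\sP^*)$ as sums of conditional KLs along the edges, then invoke the standard-case Markovian-projection Pythagorean theorem of \citet{shi2023diffusion} on each edge (legitimate since $\sP^* \in \cM_\cT$ and $\sP^{2n} \in \cR_\cS(\sQ)$), obtaining $\KL(\sP^{2n}\Vert\sP^*) = \KL(\sP^{2n}\Vert\sP^{2n+1}) + \KL(\sP^{2n+1}\Vert\sP^*)$. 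Chaining the two identities shows $n \mapsto \KL(\sP^n\Vert\sP^*)$ is non-increasing and $\sum_{n \ge 0}\KL(\sP^n\Vert\sP^{n+1}) \le \KL(\sP^0\Vert\sP^*) < \infty$; in particular $\KL(\sP^n\Vert\sP^{n+1}) \to 0$ and $L := \lim_n \KL(\sP^n\Vert\sP^*)$ exists.

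For (iii), the uniform bound $\sup_n \KL(\sP^n\Vert\sP^*) < \infty$ makes $\{\sP^n\}$ tight on $C(\cT,\sR^d)$ (sublevel sets of $\KL(\cdot\Vert\sP^*)$ are weakly compact when $\sP^*$ is tight), so some subsequence converges weakly to a limit $\sP^\infty$; since $\KL(\sP^n\Vert\sP^{n+1})\to 0$ forces consecutive iterates to merge, the even and odd subsequences share the limit. As $\cR_\cS(\sQ)$ is closed under weak convergence (fixed bridge kernel), $\cM_\cT$ is closed (the subtle point, discussed below), and the marginal constraints pass to weak limits, $\sP^\infty$ is a Markov measure in $\cR_\cS(\sQ)$ with correct marginals; by \Cref{th:SB fixed point} it equals $\sP^*$, so $\sP^*$ is the only possible limit point and $\sP^n \to \sP^*$ weakly. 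Combining this with the monotonicity and lower semicontinuity of KL, together with the uniform integrability of $\{\d\sP^n/\d\sP^*\}$ implied by the KL bound, then pins the monotone limit to $L = 0$. I expect the main obstacle to be precisely this last limiting step on path space over the tree: establishing that $\cM_\cT$ is closed under the relevant convergence (i.e.\ that the diffusion/Markov structure along each edge survives the limit, which in \citet{shi2023diffusion} already requires care) and upgrading the subsequential weak convergence to the stated convergence of KL to zero; a secondary obstacle is carefully enumerating the ``mild conditions'' under which the edgewise Markovian projections are well-defined and all the KL quantities appearing in the Pythagorean identities are finite.
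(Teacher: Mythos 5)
Your proposal is correct and follows essentially the same route as the paper: the paper first establishes the two Pythagorean identities (its Lemma on the Pythagorean property, proved via the tree KL decomposition and edge-wise application of the arguments from Lemma 6 of \citet{shi2023diffusion} for the Markov step, and the shared-bridge chain rule at $\cS$ for the reciprocal step), and then invokes the same monotonicity/compactness argument of Proposition 7 and Theorem 8 in \citet{shi2023diffusion} to identify the limit with $\sP^*$ via the fixed-point characterisation. Your account is in fact more explicit than the paper's about the limiting step (closedness of $\cM_\cT$ and upgrading weak convergence to $\KL \to 0$), which the paper simply defers to \citet{shi2023diffusion}.
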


\subsection{Implementation}
\label{sec:implementation}

We have established the convergence of the TreeIMF procedure to the TreeSB solution. We now explain how the procedure can be implemented in practice.

\paragraph{Constructing the bridge processes}

Recall that the reciprocal projection relies on constructing bridges of the reference process $Y$ associated to $\sQ$, conditioned on the values at vertices in $\cS$. Due to the tree structure and the Brownian dynamics of $\sQ$, such bridges can be constructed by first sampling the values of $Y_{\cS^c | \cS} \sim \sQ_{\cS^c | \cS}$ at the unseen vertices in $\cS^c = \cV \backslash \cS$, and then sampling Brownian bridges along each edge $(u,v) \in \cE$ between $Y_u$ and $Y_v$.

Note in particular that the conditional coupling $\sQ_{\cS^c | \cS}$ is tractable because the static coupling $\sQ_{\cV}$ is a multivariate Gaussian. Specifically, $\sQ_{\cV}$ is characterised via independent Gaussian edge increments $Y_u - Y_v \sim \cN(0, \sigma^2 T^e)$, so $\sQ_{\cV}(Y) \propto \exp( -\sum_{(u,v) \in \cE} \tfrac{1}{2 \sigma^2T^e}\lVert Y_u - Y_v\rVert^2) \propto \exp(- \frac{1}{2}Y^\top L Y)$
where $L$ is the precision matrix given by
\begin{align*}
    L_{u,v} =
    \begin{cases}
        \sum_{e \in \cE(u)}\tfrac{1}{\sigma^2 T^{(u,v)}} & \textrm{if $u=v$, where $\cE(u)$ is set of edges incident to $u$},\\
        -  \tfrac{1}{ \sigma^2 T^{(u,v)}}, & \textrm{if $(u,v) \in \cE$}\\
        0 &\textrm{otherwise}.
    \end{cases}
\end{align*}
Using the formula for conditional multivariate Gaussians in terms of the precision matrix, we have
$Y_{\cS^c} | Y_\cS=y_\cS \sim \cN(\Tilde{\mu}, \Tilde{\Sigma}),$
using the block matrix expressions $\Tilde{\mu} = -(L_{\cS^c \cS^c})^{-1}L_{\cS^c \cS}^{} y_\cS$ and ${\Tilde{\Sigma} = (L_{\cS^c \cS^c})^{-1}}$.
Based on the structure of the tree, we can therefore obtain a sample of the bridging process by: (1) calculating this conditional joint distribution $\sQ_{\cS^c | \cS}$ over the unseen vertices, (2) sampling $Y_{\cS^c | \cS}$ according this distribution, and then (3) drawing Brownian bridges between $Y_u$ and $Y_v$ along each edge in the tree.

\paragraph{Performing the Markovian projection}
The above construction allows us to construct samples from a reciprocal process $\Pi$. Recall from \Cref{def:tree markov proj} that the Markovian projection then proceeds by performing individual Markovian projections along each edge. We therefore maintain a neurally-parameterised drift function $v_{\theta_e}$ for each edge $e=(u,v)$, each of which is trained according to the bridge-matching loss
\begin{equation}
    \label{eq:bridge loss}
    \cL(\theta_e) = \int_0^{T^e} \sE_{\substack{(X_u, X_v) \sim \Pi_{(u,v)} \\ X_t \sim \sQ_{\cdot | X_u, X_v}^e}} \lVert \frac{X_v - X_t}{T^e - t} - v_{\theta_e}(X_t, t)\rVert^2 \rmd t.
\end{equation}

\paragraph{Bidirectional training}
After training the vector fields, let $\sM$ denote the path measure of the resulting Markov process on $\cT$. Constructing the next reciprocal process requires samples from $\sM$, which can be obtained by running the learned diffusions from the root node $r \in \cS$ along the tree. Note that while in theory we have $\sM_i = \mu_i$ for each $i \in \cS$, errors will accumulate in practice. We therefore follow \citet{shi2023diffusion} and learn both forward and backward diffusion processes along each edge, which give equivalent representations of the path measures via the corresponding time-reversals (see Appendix \ref{app:imp_details} for details). This enables simulation along the tree from any vertex in $\cS$ to obtain samples from the coupling, helping to mitigate errors accumulating in the marginals. Note also that all edges train independently, so can be learned in parallel for faster computation.

\begin{wrapfigure}{r}{0.55\textwidth}
\vspace{-1.5em}
\begin{minipage}{0.55\textwidth}
\begin{algorithm}[H]
    \SetKwInOut{Input}{Input}
    \Input{Initial coupling $\Pi_\cS^0$ (e.g. independent),\\ number of iterations $N$.}
    Let $\Pi^0 = \Pi_\cS^0 \sQ_{\cdot|\cS}$\;
    \For{$n \in \{0, \dotsc, N-1\}$}{
        Learn $2|\cE|$ vector fields using \eqref{eq:bridge loss} with $\Pi = \Pi^n$, to obtain Markovian process $\sM^{n+1}$\;
        Simulate $\sM^{n+1}$ from a chosen root $r \in \cS$ to obtain samples from $\sM_\cS^{n+1}$\;
        Let $\Pi^{n+1} = \sM_\cS^{n+1} \sQ_{\cdot|\cS}$ using obtained samples from $\sM_\cS^{n+1}$\;
    }
    \caption{TreeDSBM}
    \label{alg:TreeDSBM}
\end{algorithm}
\end{minipage}
\vspace{-2em}
\end{wrapfigure}

In keeping with previous naming conventions, we call the proposed algorithm Tree Diffusion Schr\"odinger Bridge Matching (TreeDSBM). The method is summarised in \Cref{alg:TreeDSBM} with full implementation details in Appendix \ref{app:imp_details}, and we provide an illustration of the two-step procedure in \Cref{fig:TreeIMF_diagram}. The algorithm can be initialised at any coupling $\Pi_\cS^0$ over $\cS$ with correct marginals; a standard choice would be the independent coupling ${\Pi_\cS^0 = \bigotimes_{i \in \cS} \mu_i}$. We provide a simplified and more explicit version of the algorithm when used for barycentre computation in Appendix \ref{app:imp_details}.

\subsection{Connection to fixed-point Wasserstein barycentre algorithms}
Recall that the unregularised Wasserstein-2 barycentre can be computed using the iterative fixed-point method, where each iteration proceeds by pushing-forward the current iterate through the map $\sum_i \lambda_i T_i$.
In our setting, the barycentre problem corresponds to a star-shaped tree with fixed marginals on the leaves.
In this case, the bridges of $\sQ$ are conditioned on each of the leaf vertices and the only unknown point is $Y_0$ at the centre vertex, so the conditional distribution $\sQ_{\cS^c | \cS}$ simplifies to
\[Y_0 \sim \cN \Big(\sum \lambda_i Y_i, \sigma^2 Id \Big).\]
Our method can therefore be viewed as a natural counterpart of fixed-point methods for barycentre computation, adapted to the case of flow-based entropic OT solvers. While a naive approach might consider using IMF to solve each OT sub-problem in the fixed-point scheme, resulting in nested iterations, our approach shows that the IMF and fixed-point iterations can in fact be elegantly combined into a single iterative procedure, along with a well-understood theoretical grounding. In particular, the expensive OT map computations required for the fixed-point procedure can instead be switched out for inexpensive bridge-matching procedures. Each iteration is therefore cheap, and empirically we found the TreeDSBM algorithm to retain the fast convergence property of the fixed-point procedure in terms of the number of iterations required.
\section{Experiments}
\label{sec:experiments}

\begin{figure}[t]
    \centering
    \begin{minipage}[t]{0.55\textwidth}
        \centering
        \includegraphics[width=\linewidth]{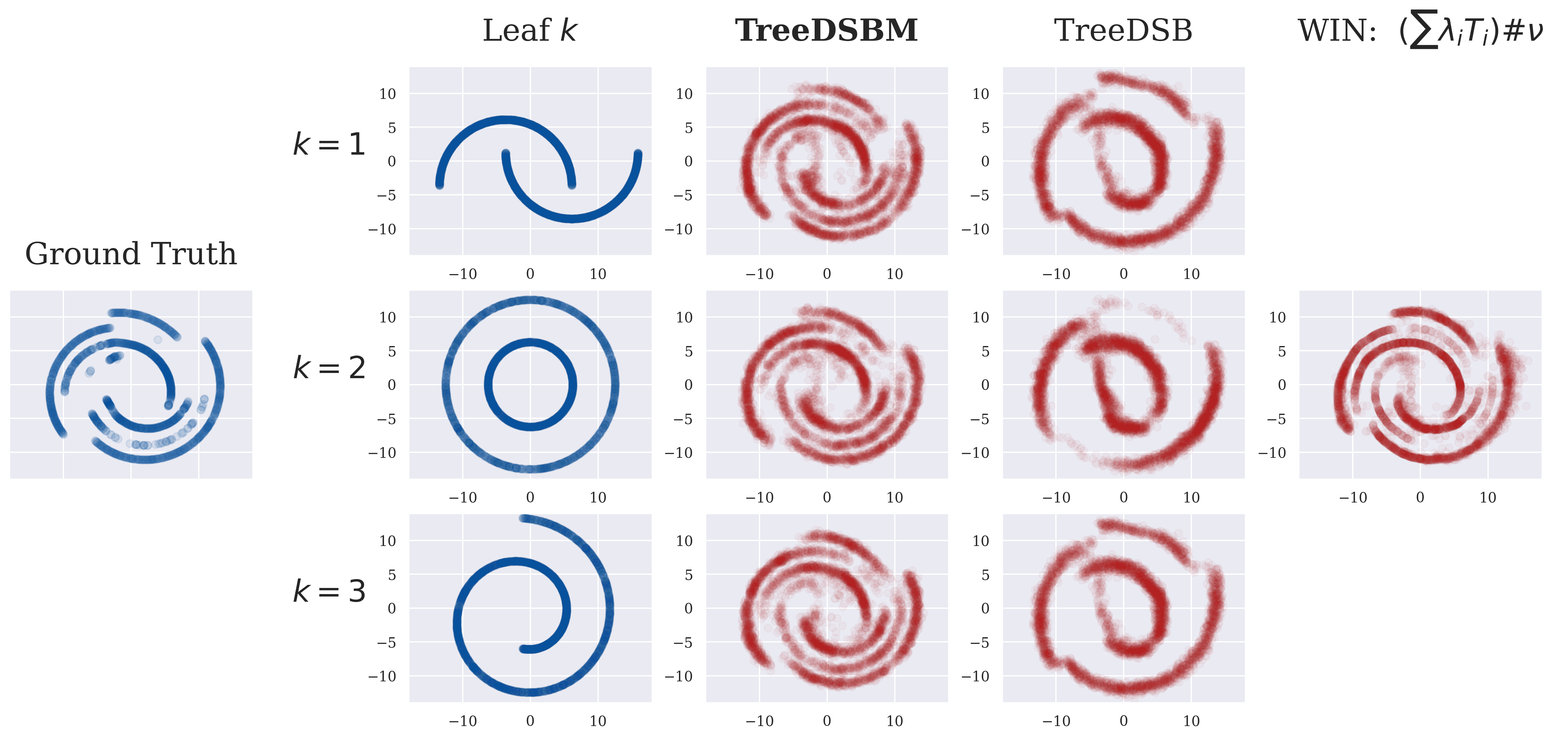}
        \caption{Comparison of the learned barycentre for TreeDSBM (6 IMF iterations) against TreeDSB (50 IPF iterations) and WIN. TreeDSB and TreeDSBM samples are generated from each leaf vertex $k$, and for WIN we plot samples using the weighted-pushforward expression for the barycentre. Also displayed is a close approximation to the ground-truth, using the in-sample method of \citet{cuturidoucet14}.
        }
        \label{fig:2d_3datasets_plot}
    \end{minipage}
    \hfill
    \begin{minipage}[t]{0.4\textwidth}
        \centering
        \vspace{-10em}
        \captionof{table}{Sinkhorn divergence to the `ground-truth' barycentre, for barycentre samples generated from each leaf vertex $k$. The Sinkhorn divergence is computed with entropy regularisation 0.01, using 5000 generated samples and 1500 points approximating the ground-truth (mean$\pm$std, over 5 runs).}

        {
        \renewcommand\arraystretch{1.1} 
        \setlength{\tabcolsep}{3pt} 
        \begin{tabular}{cccc}
            \toprule
            Method & $k=0$ & $k=1$ & $k=2$ \\
            \midrule
            \makecell[c]{TreeDSBM\\(6 IMF)} & \makecell[c]{1.14\\$\pm$ 0.07} & \makecell[c]{1.05\\$\pm$ 0.07} & \makecell[c]{1.08\\$\pm$ 0.11} \\
            \makecell[c]{TreeDSB\\(50 IPF)} & 2.35 & 4.04 & 2.35 \\
            \midrule
            \makecell[c]{WIN\\$(\sum_i \lambda_i T_i)\# \nu$}& ~ & 1.17 & ~ \\
            \bottomrule
        \end{tabular}
        \label{tab:2d_table}
        }
    \end{minipage}
    \vspace{-1em}
\end{figure}

\paragraph{Synthetic 2$d$ barycentre}
We first examine the performance of TreeDSBM in a low-dimensional synthetic example, using the experimental setup of \citet{noble2023treebased}. We compute the $(\tfrac{1}{3}, \tfrac{1}{3}, \tfrac{1}{3})$-barycentre of a moon, spiral, and circle dataset with $\eps=0.1$ (recall $\sigma=\sqrt{\tfrac{\eps}{2}}$). 
We compare TreeDSBM ran for 6 IMF iterations against TreeDSB ran for 50 IPF iterations (using checkpoints provided by \citet{noble2023treebased}). In \Cref{fig:2d_3datasets_plot}, we plot the obtained samples from both methods, and for comparison also display the barycentre obtained using the in-sample method from \citet{cuturidoucet14} to give a close approximation to the ground-truth. 
To quantitatively assess performance, we report the Sinkhorn divergence \citep{genevay2018_sinkdiv} relative to this `ground-truth' in \Cref{tab:2d_table}.
The results show that TreeDSBM \textit{significantly} improves over TreeDSB in this setting, approximating the barycentre to a high degree of accuracy and at a much lower computational cost (with good convergence after only a few IMF iterations). It is able to successfully capture the complex nature of the barycentre in this challenging example (previously suggested as a potential limitation of dynamic solvers in \citet{noble2023treebased}), and the barycentres generated from different vertices $k$ exhibit improved consistency. For details of the experimental setup, see Appendix \ref{app:synthetic_2d_details}.

We additionally report results for the iterative WIN method \citep{korotin2022wasserstein}. We see  that TreeDSBM performs competitively with this strong baseline for continuous Wasserstein-2 solvers.
The results reported for WIN are for the combined map $(\sum_i \lambda_i T_i) \# \nu$, as in our experiments the barycentre generator $\nu = G \# \rho$ was unable to fit the true barycentre accurately, nor were the maps $T_i^{-1} \# \mu_i$. Additionally, we applied the W2CB \citep{korotin2021_bary_no_minimax} and NOTWB \citep{kolesov2024estimating} algorithms to this example, but were unable find hyperparameters to make the algorithms to converge to the correct solution. We hypothesise that the neural maps may struggle to model the discontinuous transports well, and also that this challenging example may result in difficult loss landscapes.
We found TreeDSBM to exhibit stable training despite this challenging problem setting, and to also be the fastest of the algorithms to converge to the solution. We provide a runtime analysis, along with further results and discussion, in Appendix \ref{app:synthetic_2d_details}.

\paragraph{MNIST 2,4,6 barycentre}

\begin{wrapfigure}{r}{0.35\textwidth}
\vspace{-1em}
\label{fig:mnist_grids}
  \centering
  \begin{subfigure}{0.48\linewidth}
    \includegraphics[width=\linewidth]{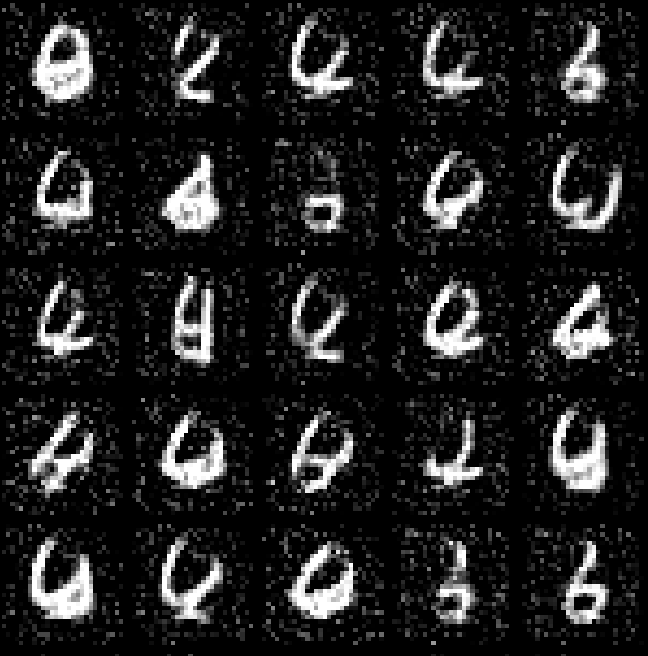}
    \caption{TreeDSB \\ ($\eps=0.5$)}
  \end{subfigure}
  \hfill
  \begin{subfigure}{0.48\linewidth}
    \includegraphics[width=\linewidth]{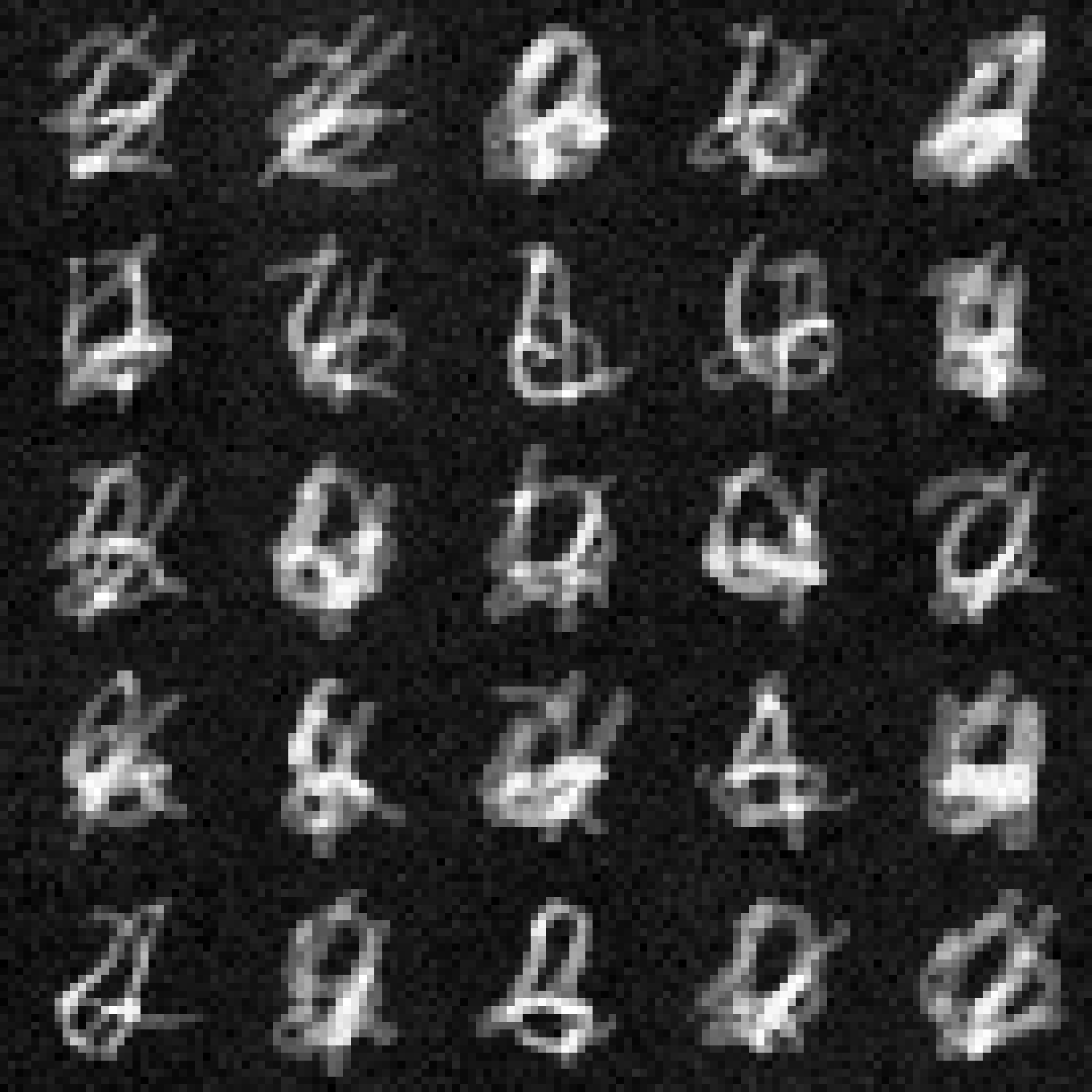}
    \caption{TreeDSBM \\($\eps=0.02$)}
    \label{fig:TreeDSBM_mnist}
  \end{subfigure}
  \caption{Samples from the 2,4,6 MNIST barycentre.}
  \vspace{-1em}
\end{wrapfigure}

We also compare performance of TreeDSBM with TreeDSB on a higher dimensional image dataset, computing the $(\tfrac{1}{3}, \tfrac{1}{3}, \tfrac{1}{3})$-barycentre between MNIST digits 2, 4, and 6 \citep{lecun2010mnist}.
In this setting, TreeDSB is reported in \citet{noble2023treebased} to exhibit training instability for low entropy regularisation $\eps$, causing it to struggle to match the marginal measures at the vertices. In contrast, the IMF approach of TreeDSBM ensures matching of these marginals, thus allowing the use of much smaller regularisation values.
In fact, using a large regularisation for TreeDSBM would limit sample quality, because of the noise added when sampling from $\sQ_{\cS^c | \cS}$ in the reciprocal process construction.
We therefore use $\eps=0.02$ for TreeDSBM with 4 IMF iterations, and display a visual comparison with results reported in \citet{noble2023treebased} in \Cref{fig:mnist_grids}.
While it is difficult to validate the accuracy of such solutions, the obtained samples from TreeDSBM appear to display a greater resemblance to state-of-the-art barycentre methods for similar problems (see e.g. \citet{kolesov2024estimating}), while again requiring significantly less computational cost than TreeDSB.
We remark that the TreeDSBM samples in \Cref{fig:TreeDSBM_mnist} contain a small amount of noise, which is to be expected as we are solving for an entropy-regularised problem. In Appendix \ref{app:mnist_details} we also display samples with the noise reduced or removed, along with full experimental details and additional results.

\paragraph{Subset posterior aggregation}
The previous experiments have shown TreeDSBM to improve over its IPF counterpart TreeDSB. We now provide a more detailed comparison with existing strongly-performing methods for continuous Wasserstein-2 barycentre estimation, namely WIN \citep{korotin2022wasserstein}, W2CB \citep{korotin2021_bary_no_minimax}, and NOTWB \citep{kolesov2024estimating}.
As a real-world applications experiment, we compare the algorithms in the subset posterior aggregation setting, a standard experiment in the barycentre literature \citep{staib17_parellelstream, Li2020_SCWB, fan21_icnnbary,korotin2021_bary_no_minimax}. The barycentre of subset posteriors is known to be close to the true posterior \citep{Srivastava18_barycentre}, and as such it can be efficient to compute posteriors on only subsets of datasets before aggregating them using a barycentre algorithm. We consider the experimental setup and dataset used in \citet{korotin2021_bary_no_minimax} (the same dataset was also used previously in \citet{Li2020_SCWB, fan21_icnnbary}), which uses Poisson and negative-binomial regressions on a bike-rental dataset \citep{bike_sharing_275}, and report results for the $\mathrm{BW}_2^2\mathrm{-UVP}$ metric. We ran TreeDSBM for 4 IMF iterations with $\eps=0.001$; for full experimental details, see Appendix \ref{app:subset_details}.

From \Cref{tab:subset posterior aggregation}, we see that all methods perform strongly. Given that we do not have perfect access to the ground truth barycentre, it is difficult to conclusively say which performs best,  but we see that TreeDSBM certainly performs competitively with these state-of-the-art approaches.
Moreover, we found TreeDSBM to display fast training—both TreeDSBM and NOTWB obtained good results after only around 3 minutes of training, while W2CB took approximately 10 minutes and WIN took around 45 minutes. For more details regarding runtimes, see Appendix \ref{app:subset_details}.

\paragraph{Higher-dimensional Gaussian experiments}

We also report results for computing the barycentre of Gaussian distributions, in increasingly high dimensions. This is a standard experiment in the literature, because in this setting the ground-truth barycentre is also Gaussian and the parameters can be calculated accurately using a fixed point method \citep{alvarez16_fixedpoint}.

We follow the experimental setup previously used in \citep{korotin2021_bary_no_minimax, korotin2022wasserstein, kolesov2024estimating}, in which 3 Gaussian distributions and its ground-truth ($\tfrac{1}{3},\tfrac{1}{3},\tfrac{1}{3}$)-barycentre are randomly generated, for each dimension in $\{64, 96, 128\}$. We report results for the $\mathrm{BW}_2^2\mathrm{-UVP}$ and $\mathrm{L}^2\mathrm{-UVP}$ metrics. We see that all the methods again perform well in this setting.
For the $\mathrm{BW}_2^2\mathrm{-UVP}$ metric, W2CB and NOTWB appear to have a slight edge, but TreeDSBM is only slightly higher and is comparable with results for WIN. For the $L^2\mathrm{-UVP}$ metric, results for TreeDSBM are higher than for W2CB and NOTWB, though the results are still low and are again comparable with WIN.

\begin{table}[t]
\caption{$\mathrm{BW}_2^2\mathrm{-UVP}, \%$ for different algorithms on the subset posterior aggregation experiment in \citet{korotin2021_bary_no_minimax}, evaluated using 100,000 samples.}
\vspace{1em}
\centering
\begin{tabular}{@{}lcccc@{}}
\toprule
& \textbf{WIN} & \textbf{W2CB} &  \textbf{NOTWB} & \textbf{TreeDSBM (Ours)} \\
\midrule
$\downarrow$ Poisson & 0.014 & 0.026 & 0.023 & 0.008\\
$\downarrow$ Negative Binomial & 0.009 & 0.024 & 0.018 & 0.012 \\
\bottomrule
\end{tabular}
\label{tab:subset posterior aggregation}
\end{table}

\begin{table}[t]
\caption{$L^2\mathrm{-UVP}, \%$ and $\mathrm{BW}_2^2\mathrm{-UVP}, \%$ for different algorithms on the high-dimensional Gaussian experiment, evaluated using 100,000 samples.}
\vspace{1em}
\centering
\begin{tabular}{@{}llcccc@{}}
\toprule
& & \textbf{WIN} & \textbf{W2CB} & \textbf{NOTWB} & \textbf{TreeDSBM (Ours)} \\
\midrule
~ &\textbf{$d=64$} 
& 0.20 & 0.04 & 0.08 & 0.14\\
$\downarrow \mathrm{BW}_2^2\mathrm{-UVP}, \%$ &\textbf{$d=96$} 
& 0.30 & 0.07 & 0.10 & 0.15 \\
~ &\textbf{$d=128$} 
& 0.38 & 0.12 & 0.14 & 0.27 \\
\midrule
~ &\textbf{$d=64$} 
& 0.96 & 0.17 & 0.10 & 1.18\\
$\downarrow L^2\mathrm{-UVP}, \%$ &\textbf{$d=96$} 
& 1.20 & 0.20 & 0.10 & 1.13 \\
~ &\textbf{$d=128$} 
& 1.46 & 0.25 & 0.13 & 1.23 \\
\bottomrule
\end{tabular}
\label{tab:high-dim gaussian}
\end{table}

\section{Discussion}
\label{sec:discussion}

\paragraph{Related work}
Beyond the IPF and IMF approaches, other approaches for the SB problem include adversarial solvers \citep{kim2023unsb, gushchin23_entropic}, parameterisation via potentials for Gaussian mixtures \citep{korotin2024light, gushchin2024light}, and variational approaches \citep{Deng24_varSB}.
For Wasserstein barycentre computation, standard approaches ensure tractability by representing the solution with a finite set of points (either updating only weightings \citep{ben15_bregproj, solomon15_conv, cut16_smoothed_var, staib17_parellelstream, dvu18_decentral}, or also the positions of the points in the support \citep{rabin12_texturemix, cuturidoucet14, claici18_stochbary, luise19_frankwolfebary}). Such approaches are effective in lower dimensions but scale poorly, cannot be used to generate new samples, and do not capture the true continuous nature of the barycentre.
To address these limitations, recent work has focused on learning continuous approximations. \citet{Li2020_SCWB} optimise for the regularised dual potentials for general costs, while others parameterise Wasserstein-2 potentials with convex neural architectures using adversarial losses \citep{fan21_icnnbary} or additional cycle-consistency regularisation \citep{korotin2021_bary_no_minimax}. \citet{korotin2022wasserstein} leverage the fixed-point property in \citet{alvarez16_fixedpoint}. We also highlight a recent line of works of \citet{kolesov24_energybary, kolesov2024estimating} which consider bi-level adversarial approaches for general cost functions.
Our approach utilises non-adversarial bridge-matching loss objectives, which provide stable training but requires multiple sequential IMF iterations. Finally, we emphasise that our approach tackles the tree-structured SB problem \citep{haasler21_treeSB, noble2023treebased},
and thus is applicable beyond only barycentre problems (we consider a toy example in Appendix \ref{app:non_star_shaped}).

\paragraph{Limitations}
Our method shares the same limitations as other flow-based SB solvers. It is restricted to quadratic cost functions for OT, and introduces an entropic bias. Inference is expensive in comparison to methods that use a single function evaluation, and our method is not sampling-free, requiring simulations of the current learned processes at each iteration. Future advances in the flow and SB literatures can aid in addressing these limitations. We also provide an additional experiment in Appendix \ref{app:additional_experiments} that highlights a possible limitation of our method in scenarios where there is a simple shared structure between the known marginals and the barycentre.

\paragraph{Conclusion and future work}
We have extended the IMF procedure to the tree-structure SB problem, providing a scalable flow-based approach that in particular can be used for entropic Wasserstein barycentre computation. Our TreeDSBM algorithm displays improved performance over its IPF counterpart TreeDSB, and demonstrates that flow-based approaches for barycentre estimation can offer a compelling alternative to established continuous Wasserstein-2 barycentre algorithms.
Future directions can investigate improved architectures and implementation techniques inspired by progress in the flow-matching literature, as well as extensions to other data modalities.

\section*{Acknowledgements}
SH is supported by the EPSRC CDT in Modern Statistics and Statistical Machine Learning [grant number EP/S023151/1].
PP is supported by the EPSRC CDT in Modern Statistics and Statistical Machine Learning [EP/S023151/1], a Google PhD Fellowship, and an NSERC Postgraduate Scholarship (PGS D).
GD was supported by the Engineering and Physical Sciences Research Council [grant number EP/Y018273/1]. The authors would like to thank James Thornton for helpful discussions.

\printbibliography


\newpage

\appendix
\section*{Appendix}

The Appendix is structured in the following way. In Appendix \ref{app:background}, we define the notation used and provide further background information regarding the problems discussed in the paper. In Appendix \ref{app:proofs}, we provide the proofs of the results stated in the main body. In Appendix \ref{app:imp_details}, we describe in more detail how to implement the TreeDSBM algorithm, and also discuss extensions of the methodology for improving convergence speed. In Appendix \ref{app:experimental_details}, we give a full overview of the experimental setups in the main body, and include further discussions of the findings. In Appendix \ref{app:additional_experiments} we provide additional experiments that could not be included in the main body due to space constraints. Finally, Appendix \ref{app:licenses} includes the licenses for assets used in this work.

\section{Background}
\label{app:background}

\subsection{Notation}
We denote by $\cT$ a tree with an edge length function, $\cT = (\cV, \cE, \ell)$, or its associated metric space via associating edges $e$ with the line segment of length $T^e = \ell(e)$ (see \Cref{sec:tree_SB}). We let $\cP(\cC) = \cP(C([0,T], \sR^d))$ and $\cP(\cC_\cT) := \cP(C(\cT, \sR^d))$ denote the space of path measures on the interval $[0,T]$ and $\cT$ respectively. For the interval $[0,T]$, $\cM$ denotes the set of Markov measures, and $\cR(\sQ)$ denotes the reciprocal class for a reference measure $\sQ \in \cM$ (see \Cref{def:standard Markov proj,def:standard reciprocal proj}). 
For a path measure $\sQ \in \cP(\cC)$, $\sQ_t$ denotes the marginal distribution at time $t$, and $\sQ_{\cdot|0,T}$ denotes the diffusion bridge conditional on values at $0,T$. We use $\Pi = \Pi_{0,T} \sQ_{\cdot | 0, T} \in \cP(\cC)$ to denote mixtures of bridges $\Pi(\cdot) = \int_{\sR^d \times \sR^d} \sQ_{\cdot| 0,T}(\cdot | x_0, x_T) \Pi_{0,T}(\rmd x_0, \rmd x_T)$. 
Similarly on the tree $\cT$, $\cM_\cT$ denotes the set of Markov measures, and $\cR_\cS(\sQ)$ denotes the reciprocal class for a reference measure $\sQ \in \cM_\cT$ and observed vertices $\cS \subset \cV$ (see \Cref{def:tree markov class,def:tree reciprocal class}). For a path measure $\sQ \in \cP(\cC_\cT)$, $\sQ_v$ denotes the marginal at vertex $v \in \cV$, $\sQ_\cV$ denotes the induced joint distribution over a set of vertices $\cV$, and $\sQ_{\cdot | \cS}$ denotes the bridge process conditional on values at vertices $\cS$. We denote mixtures of bridges as $\Pi = \Pi_\cS \sQ_{\cdot | \cS} \in \cP(\cC_\cT)$, defined similarly to above. Finally, $\KL(\pi \Vert \Tilde{\pi})$ denotes the Kullback-Leibler divergence between measures $\pi, \Tilde{\pi}$, and $\H(\pi)$ denotes the differential entropy.

\subsection{Optimal transport}

We here provide a brief overview of optimal transport; for more details we refer to \citet{SantambrogioBook}, \citet{CompOT_Peyre_Cuturi}.

\paragraph{Monge} For two measures $\mu_0$ and $\mu_1$, the original OT formulation given by \citet{Monge1781} aims to find a map $T^*: \sR^d \rightarrow \sR^d$ pushing $\mu_0$ onto $\mu_1$, that is $\mu_1 = T^* \# \mu_0$, while minimising the total cost of transportation according to a given cost function $c: \sR^d \times \sR^d \rightarrow \sR$,
\begin{equation} \label{eq:monge}
    \min_{T: T\#\mu_0 = \mu_1} \int_{\sR^d} c(x, T(x)) \rmd \mu_0(x).
\end{equation}

\paragraph{Kantorovich} While intuitive, the Monge formulation of OT is restrictive in that it does not permit splitting of mass in the transportation. Instead, it is common to consider the more general Kantorovich formulation \citep{kantorovich1942}, which searches over joint distributions $\pi \in \cP(\sR^d \times \sR^d)$ with the correct marginals,
\begin{equation} \label{eq:kantorovich}
    \min_{\pi: \pi_0 = \mu_0, \pi_1 = \mu_1} \iint_{\sR^d \times \sR^d} c(x,y) \rmd \pi(x,y).
\end{equation}

\paragraph{Entropic regularisation}
The entropy-regularised OT problem adds an entropic penalty term to the Kantorovich objective, which smooths the resulting transport plan.
\begin{equation*} \label{eq:entropic OT}
    \min_{\pi : \pi_0 = \mu_0, \pi_1 = \mu_1} \Bigg\{ \iint_{\sR^d \times \sR^d} c(x,y) \rmd \pi(x,y) - \eps \H(\pi) \Bigg\}.
\end{equation*}
The resulting problem has many desirable properties; it enables differentiability with respect to the inputs, relaxes constraints on the corresponding potentials \citep{Genevay16_largescale}, and for discrete measures enables efficient computation using Sinkhorn's algorithm \citep{Cuturi13_sinkhorn}.

\subsection{Multi-marginal optimal transport}
One can generalise the above optimal transport problem to the case of multiple marginal distributions. For a cost function $c: (\sR^d)^{\ell+1} \rightarrow \sR$ and an `observed' set $\cS \subset \{0, ..., \ell\}$, multi-marginal optimal transport (mmOT) searches over joint distributions $\pi \in \cP((\sR^d)^{\ell + 1})$ matching the prescribed marginals on $\cS$ to minimise the objective,
\begin{equation}
    \min_{\pi: \pi_i = \mu_i~ \forall i \in \cS} \int_{(\sR^d)^{\ell+1}} c(x_{0:\ell}) \rmd \pi(x_{0:\ell}).
\end{equation}

As in the standard case, one can define an entropy-regularised version of the mmOT problem,
\begin{equation}
    \min_{\pi: \pi_i = \mu_i~ \forall i \in \cS} \Bigg\{ \int_{(\sR^d)^{\ell+1}} c(x_{0:\ell}) \rmd \pi(x_{0:\ell}) - \eps \H(\pi) \Bigg\}.
\end{equation}

\paragraph{Multi-marginal Schr\"odinger bridges}
We emphasise here the distinction between the multi-marginal OT and corresponding SB problem that we consider in this work, compared to the multi-marginal SB problem that has been considered recently in works such as \citet{chen19_mmSB, chen23_deepmmSB, shen25_mmSB} to name a few. These lines of works aim to find processes that evolve in time while fitting to known marginals at several different timepoints, and are motivated by inferring population dynamics from snapshot data. We rather consider costs defined according to a tree structure, in which the marginals at some of the vertices may not be known (as considered in \citet{haasler21_treeSB, noble2023treebased}), with a particular focus on applications to Wasserstein barycentre problems.

\subsection{Wasserstein barycentres}
Given $\ell$ measures $(\mu_1, ..., \mu_\ell)$ and weights $(\lambda_1, ..., \lambda_\ell)$ summing to 1, the Wasserstein-2 barycentre \citep{agueh11_barycentre} is defined as
\begin{equation}
    \label{eq:wasserstein barycentre}
    \nu^* = \argmin_\nu \sum_i \lambda_i W_2^2(\mu_i, \nu),
\end{equation}
where $W_2^2(\mu_i, \nu)$ denotes the minimum attained by the OT solution in \eqref{eq:kantorovich} for quadratic cost function $c(x,y) = \tfrac{1}{2} \lVert x - y\rVert_2^2$.
Observe that this can be cast as an mmOT problem by using a star-shaped cost function $c(x_{0:\ell}) = \sum_{i=1}^\ell \lambda_i \lVert x_0 - x_i \rVert_2^2,$ where the marginals are prescribed on the leaf nodes $\cS = \{1, ..., \ell\}$ and the centre vertex $0$ is an unobserved measure (which at the solution is the barycentre).
The Wasserstein barycentre is widely studied due to its importance in applications, including in Bayesian learning \citep{Srivastava18_barycentre}, clustering \citep{Ye17_clustering}, and representation learning \citep{singh20_contextmover} to name a few.

\paragraph{Types of regularised barycentre}
\label{app:regularised_barycentres}
There are many different ways to define an entropy-regularised formulation of the Wasserstein barycentre problem. Some formulations add \textit{inner} regularisation, which replaces the Wasserstein distances with an entropy-regularised version, while others also consider \textit{outer} regularisation in which an entropic penalty on the barycentre is added. The $(\eps, \tau)$-doubly-regularised barycentre of \citet{chizat2023doublyreg} unifies many of these problems, and aims to minimise the objective
\begin{equation}
    \min_\nu \Big\{ \sum_i \lambda_i W_{2, \eps}^2(\mu_i, \nu) - \tau H(\nu) \Big\},
\end{equation}
where $W_{2, \eps}^2(\mu_i, \nu) = \min_{\pi: \pi_0 = \mu_i, \pi_1 = \nu} \big\{ \iint \tfrac{1}{2} \lVert x - y \rVert^2 \rmd \pi(x,y) + \eps \KL(\pi \Vert \mu \otimes \nu) \big\}$. We refer to \citet{chizat2023doublyreg} for more details regarding the different types of entropy-regularised Wasserstein barycentres.

\paragraph{Tree-structured costs}
The case of tree-structured costs for multi-marginal optimal transport is often specifically studied in the literature \citep{haasler21_treeSB, noble2023treebased}, as it recovers Wasserstein barycentres as a special case as described above, and also arises in the Wasserstein propagation problem \citep{solomon14_propagation, solomon15_conv}. In the discrete setting, \citet{haasler21_treeSB} show that the tree structure can be leveraged to design an efficient Sinkhorn-based algorithm.

\paragraph{The metric space $\cT$}
Recall from \Cref{sec:tree_SB} that we identify a tree $\cT$ (with vertices, edge set, and length function $(\cV, \cE, \ell)$) with a metric space, by associating each edge $e$ with the interval $[0, \ell(e)]$, which are connected according to the tree structure. The same construction is used in \citet{noble2023treebased} for defining the dynamic TreeSB problem that we study in this work. For a rigorous description of such constructions, see for example \citet{hambly2008notestreespaths}, \citet{bolin2024markovpropertiesgaussianrandom}.

\section{Proofs}
\label{app:proofs}

In this section, we give proofs of the results stated in the main paper. We first provide the proof of the TreeSB characterisation in \Cref{th:SB fixed point}. We will follow the presentation and proof techniques of \citet{léonard2013survey}, making the appropriate changes to extend to the tree-structured case. We then prove the convergence of the TreeIMF procedure stated in \Cref{th:SB fixed point}, following the presentation and proof techniques of \citet{shi2023diffusion}.

\subsection{Existence and uniqueness of TreeSB solution}

We first provide results pertaining to the existence and uniqueness of the TreeSB solution. Note that while we follow the presentation of \citet{léonard2013survey}, one could instead rely on the existence results in \citet{noble2023treebased}.

Let us define another static minimisation problem only over observed vertices in $\cS$ as
\begin{align}
    \Pi_\cS^{SB} &= \argmin_{\Pi \in \cP((\sR^d)^{|\cS|})} \{ \KL( \Pi \Vert \sQ_{\cS}) ~|~ \Pi_i=\mu_i ~~ \forall i \in \cS\}. \tag{$\mathrm{TreeSB}_{\mathrm{stat}}^\cS$} \label{eq:Stat. TreeSB - S}
\end{align}

We note here that the reference measures that we consider in the various SB problems are unbounded, as we consider the reference Brownian motions to be in stationarity. We will make use of properties of the KL divergence defined relative to these measures; see the Appendix of \citet{léonard2013survey} for a justification of why these properties still hold when the reference measures are unbounded.

We first provide a result detailing the relationships between the dynamic and static tree-structured SB problems.

\begin{proposition}[Compare to Proposition 2.3 in \citet{léonard2013survey}]
    \label{prop:stat and dyn TreeSB}
    The tree-structured Schr\"odinger Bridge problems \eqref{eq:Dyn. TreeSB}, \eqref{eq:Stat. TreeSB}, and \eqref{eq:Stat. TreeSB - S} admit at most one solution ${\sP^{SB} \in \cP(\cC_\cT)}$, $\Pi_\cV^{SB} \in \cP((\sR^d)^{|\cV|})$, and $\Pi_\cS^{SB} \in \cP((\sR^d)^{|\cS|})$ respectively.
    
    If $\sP^{SB}$ solves \eqref{eq:Dyn. TreeSB}, then $\sP^{SB}_\cV$ solves \eqref{eq:Stat. TreeSB}. Conversely, if $\Pi_\cV^{SB}$ solves \eqref{eq:Stat. TreeSB}, then \eqref{eq:Dyn. TreeSB} is solved by mixing Brownian bridges along each edge as $\sP^{SB}  = \Pi_\cV^{SB} \sQ_{\cdot | \cV}$.

    Moreover, if $\sP^{SB}$ solves \eqref{eq:Dyn. TreeSB}, then $\sP^{SB}_\cS$ solves \eqref{eq:Stat. TreeSB - S}. Conversely, if $\Pi_\cS^{SB}$ solves \eqref{eq:Stat. TreeSB - S}, then \eqref{eq:Dyn. TreeSB} is solved by the corresponding mixture of $\sQ$-bridges conditioned on values at $\cS$, $\sP^{SB}  = \Pi_\cS^{SB} \sQ_{\cdot | \cS}$.
\end{proposition}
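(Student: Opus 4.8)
The plan is to follow the template of Proposition 2.3 in \citet{léonard2013survey}, replacing the single bridge disintegration by the two tree-structured disintegrations of a path measure: over the tensorised value at \emph{all} vertices, and over the value at only the \emph{observed} vertices. Since $C(\cT,\sR^d)$ is Polish, regular conditional distributions exist and one may write $\sP = \sP_\cV\, \sP_{\cdot|\cV}$ and $\sP = \sP_\cS\, \sP_{\cdot|\cS}$, and likewise $\sQ = \sQ_\cV\, \sQ_{\cdot|\cV} = \sQ_\cS\, \sQ_{\cdot|\cS}$; because $\sQ$ runs independent Brownian motions along the edges, $\sQ_{\cdot|\cV}$ is a product of Brownian bridges and $\sQ_{\cdot|\cS}$ is the tree-structured $\sQ$-bridge conditioned on $\cS$ (made fully explicit in \Cref{sec:implementation}). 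Although $\sQ$ is unbounded (the reference Brownian motions are taken in stationarity), the additive chain rule for relative entropy and its non-negativity and strict convexity still apply with respect to such $\sigma$-finite references, as justified in the appendix of \citet{léonard2013survey}. The chain rule then gives, for any $\sP$,
\[
\KL(\sP \Vert \sQ) = \KL(\sP_\cV \Vert \sQ_\cV) + \int \KL\big(\sP_{\cdot|\cV}(\cdot \mid x) \,\Vert\, \sQ_{\cdot|\cV}(\cdot \mid x)\big)\, \sP_\cV(\rmd x),
\]
the analogous identity with $\cV$ replaced by $\cS$, and a third, purely finite-dimensional, application disintegrating $\KL(\Pi_\cV \Vert \sQ_\cV)$ over the $\cS$-coordinates as $\KL(\Pi_\cS \Vert \sQ_\cS) + \int \KL(\Pi_{\cS^c|\cS} \,\Vert\, \sQ_{\cS^c|\cS})\,\rmd\Pi_\cS$.

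Next I would exploit that in each of these decompositions the feasibility constraint $\{\Pi_i = \mu_i,\ i\in\cS\}$ depends only on the outermost factor ($\sP_\cV$, $\sP_\cS$, or $\Pi_\cS$), while the conditional term is non-negative and vanishes precisely when the conditional part of $\sP$ coincides with that of $\sQ$. Hence, for any fixed feasible $\sP_\cV$ (resp. $\sP_\cS$, $\Pi_\cS$), the infimum over the remaining freedom is attained by taking $\sP_{\cdot|\cV} = \sQ_{\cdot|\cV}$ (resp. $\sP_{\cdot|\cS}=\sQ_{\cdot|\cS}$, $\Pi_{\cS^c|\cS}=\sQ_{\cS^c|\cS}$), i.e.\ by a mixture of $\sQ$-bridges. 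Taking infima over feasible measures shows that the optimal values of \eqref{eq:Dyn. TreeSB}, \eqref{eq:Stat. TreeSB} and \eqref{eq:Stat. TreeSB - S} all coincide; any dynamic minimiser must satisfy $\sP^{SB}_{\cdot|\cV}=\sQ_{\cdot|\cV}$ (so it lies in the reciprocal class) and its vertex-marginal $\sP^{SB}_\cV$, resp. its $\cS$-marginal $\sP^{SB}_\cS$, is then a minimiser of the corresponding static problem; conversely, mixing Brownian bridges over a static minimiser $\Pi_\cV^{SB}$, resp. $\sQ$-bridges conditioned on $\cS$ over $\Pi_\cS^{SB}$, yields a feasible dynamic measure of the same value, hence a dynamic minimiser. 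This is exactly the stated correspondence, and $\Pi_\cV^{SB}\sQ_{\cdot|\cV} = \Pi_\cS^{SB}\sQ_{\cdot|\cS}$ follows as a consistency check from $\sQ_{\cS^c|\cS}\,\sQ_{\cdot|\cV}=\sQ_{\cdot|\cS}$.

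For uniqueness, I would invoke strict convexity of $\Pi \mapsto \KL(\Pi \Vert \sigma)$ on the set where it is finite: the feasible sets are convex (affine marginal constraints), so if two minimisers existed their midpoint would be feasible with strictly smaller divergence unless the two measures agree, contradicting optimality; the argument is identical for all three problems, and the case of infinite value is vacuous. The main delicate points are not the algebra but (i) the validity of the disintegrations and the chain rule for the unbounded, stationary reference $\sQ$, which I would dispatch by citing \citet{léonard2013survey}, and (ii) the well-definedness of $\sQ_{\cdot|\cS}$ — the $\sQ$-bridge conditioned on only a subset of vertices — together with the fact that the constraint is genuinely $\sigma(\sP_\cS)$-measurable; both follow from the Markov/tree structure and the explicit Gaussian conditionals described in \Cref{sec:implementation}. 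The remainder is a faithful transcription of the argument in \citet{léonard2013survey}.
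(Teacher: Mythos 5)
Your proposal is correct and follows essentially the same route as the paper's proof: the chain rule for relative entropy disintegrated first over the values at $\cV$ and then over the values at $\cS$, with the observation that the conditional term is non-negative and vanishes exactly when the bridges of $\sP$ agree with those of $\sQ$, plus strict convexity for uniqueness. The only addition is your explicit finite-dimensional disintegration linking $\Pi_\cV$ and $\Pi_\cS$, which the paper leaves implicit but which changes nothing of substance.
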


\begin{proof}
    The first statement follows from the strict convexity of the \eqref{eq:Stat. TreeSB}, \eqref{eq:Dyn. TreeSB}, and \eqref{eq:Stat. TreeSB - S} problems.

    The second follows by using the chain rule for KL divergence,  conditioning on the values at the vertices $\cV$. We obtain
    \begin{align}
        \KL(\sP \Vert \sQ) = \KL(\sP_\cV \Vert \sQ_\cV) + \int_{(\sR^d)^{|\cV|}} \KL(\sP(\cdot | X_\cV) \Vert \sQ(\cdot | X_\cV)) \rmd \sP_\cV(X_\cV).
    \end{align}
    We see that $\KL(\sP \Vert \sQ) \geq \KL(\sP_\cV \Vert \sQ_\cV)$, with equality if and only if $\sP(\cdot | X_\cV) = \sQ(\cdot | X_\cV)$ for $\sP_\cV$-a.e. $X_\cV$ (assuming $KL(\sP \Vert \sQ) < \infty$). Therefore, $\sP$ solves the dynamic problem if and only if it decomposes as a mixture over bridges $ \sQ_{\cdot | \cV}$ according to the coupling $\sP_\cV$ solving the static problem, i.e. $\sP^{SB}  = \Pi_\cV^{SB} \sQ_{\cdot | \cV}$ (if this were not true, then $\sP^{SB}_\cV \sQ_{\cdot | \cV}$ would be a valid solution with lower KL divergence relative to $\sQ$, contradicting optimality of $\sP^{SB}$). Note that such bridges just consist of Brownian bridges along the individual edges.

    The third part follows similarly to the second. Consider the KL decomposition but instead conditioning only on the values on the observed vertices $\cS$,
    \begin{align}
        \KL(\sP \Vert \sQ) = \KL(\sP_\cS \Vert \sQ_\cS) + \int_{(\sR^d)^{|\cS|}} \KL(\sP(\cdot | X_\cS) \Vert \sQ(\cdot | X_\cS)) \rmd \sP_S(X_\cS).
    \end{align}
    Now we have $\KL(\sP \Vert \sQ) \geq \KL(\sP_\cS \Vert \sQ_\cS)$, with equality if and only if $\sP(\cdot | X_\cS) = \sQ(\cdot | X_\cS)$ for $\sP_\cS$-a.e. $X_\cS$ (assuming $\KL(\sP \Vert \sQ) < \infty$). So in particular, we have that $\sP^{SB}$ must be a mixture of $\sQ$-bridges according to its own coupling over $\cS$, i.e. $\sP^{SB} = \sP_\cS^{SB} \sQ_{\cdot | \cS}$ .
\end{proof}

In the following, we will utilise this equivalence between the \eqref{eq:Dyn. TreeSB} and \eqref{eq:Stat. TreeSB - S} problems.
We now present an auxiliary result giving a criterion for the tree-structured SB problems to have a solution, in the vein of Lemma 2.4 in \citet{léonard2013survey}. This is a technical result required to deal with the fact that we are considering an unbounded reference measure $\sQ$.

\begin{lemma}[Compare to Lemma 2.4 in \citet{léonard2013survey}]
    \label{lemma:lemma 2.4 analogue}
    Let $B: \sR^d \rightarrow [0,\infty)$ be a measurable function such that
    \begin{equation}
        z_B := \int_{(\sR^d)^{|\cS|}} \exp \Big(-\sum_{i \in \cS} B(x_i)  \Big) \sQ_\cS(\mathrm{d} x_\cS )  < \infty,
    \end{equation}
    and for each $i \in \cS$ take a $\mu_i \in \cP(\cR^d)$ such that $\int B \mathrm{d} \mu_i < \infty$.

    Note that $\inf($\ref{eq:Stat. TreeSB - S}$) = \inf($\ref{eq:Dyn. TreeSB}$) \in (-\infty, \infty]$ (from the previous result). The static and dynamic tree-structured SB problems \eqref{eq:Stat. TreeSB - S} and \eqref{eq:Dyn. TreeSB} for the $\mu_i$ have a (unique) solution if and only if $\inf($\ref{eq:Stat. TreeSB - S}$) = \inf($\ref{eq:Dyn. TreeSB}$) < \infty$ (that is, if and only if the marginals $\mu_i$ are such that there exists some $\Pi^0 \in \cP((\sR^d)^{|\cS|})$ satisfying $\Pi_i^0 = \mu_i$ for each $i \in \cS$, and $\KL(\Pi^0 \Vert \sQ_\cS) < \infty)$.
\end{lemma}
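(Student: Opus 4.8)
The plan is to follow the strategy of Lemma~2.4 in \citet{léonard2013survey}: the only genuine difficulty is that the reference coupling $\sQ_\cS$ is merely $\sigma$-finite (the root of the reference process is taken in stationarity), so we cannot directly invoke the classical existence theory for entropic minimisation against a probability measure. The function $B$ is introduced precisely to circumvent this by an exponential tilt. Since $z_B < \infty$, we may define a probability measure $\hat\sQ_\cS \in \cP((\sR^d)^{|\cS|})$ by $\tfrac{\rmd\hat\sQ_\cS}{\rmd\sQ_\cS}(x_\cS) = z_B^{-1}\exp\!\big(-\sum_{i\in\cS} B(x_i)\big)$, a density that is strictly positive everywhere, so $\hat\sQ_\cS$ and $\sQ_\cS$ share the same null sets.

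The first step is the change-of-reference identity. For any $\Pi \in \cP((\sR^d)^{|\cS|})$ with $\Pi_i = \mu_i$ for all $i \in \cS$, writing $\tfrac{\rmd\Pi}{\rmd\sQ_\cS} = \tfrac{\rmd\Pi}{\rmd\hat\sQ_\cS}\tfrac{\rmd\hat\sQ_\cS}{\rmd\sQ_\cS}$ and integrating $\log$ against $\Pi$ gives
\begin{equation}
    \KL(\Pi \Vert \sQ_\cS) = \KL(\Pi \Vert \hat\sQ_\cS) - \log z_B - \sum_{i\in\cS}\int B\,\rmd\mu_i ,
\end{equation}
where the correction term $\int \log\tfrac{\rmd\hat\sQ_\cS}{\rmd\sQ_\cS}\,\rmd\Pi$ has been evaluated using $\Pi_i=\mu_i$, and is a finite constant because $\log z_B \in \sR$ and each $\int B\,\rmd\mu_i < \infty$ by hypothesis (this is the step relying on the justifications in the appendix of \citet{léonard2013survey} for handling the unbounded $\sQ_\cS$). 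Consequently $\KL(\Pi\Vert\sQ_\cS)<\infty$ iff $\KL(\Pi\Vert\hat\sQ_\cS)<\infty$, the two constrained problems have the same minimisers, and their infima differ by that same finite constant; in particular $\inf(\ref{eq:Stat. TreeSB - S}) < \infty$ iff the tilted problem has finite value. The ``only if'' direction of the lemma is now immediate: a solution is a feasible point of finite objective, hence witnesses $\inf(\ref{eq:Stat. TreeSB - S})<\infty$.

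For the ``if'' direction, we argue existence for the tilted problem. The feasible set $\{\Pi:\Pi_i=\mu_i\ \forall i\in\cS\}$ is tight — each $\mu_i$ is tight, so a suitable product of compact sets confines all but arbitrarily little $\Pi$-mass — and it is closed under weak convergence, hence weakly compact by Prokhorov. Since $\hat\sQ_\cS$ is a genuine probability measure, $\Pi\mapsto\KL(\Pi\Vert\hat\sQ_\cS)$ is weakly lower semicontinuous and bounded below by $0$. A lower semicontinuous function on a weakly compact set attains its infimum, so provided $\inf(\ref{eq:Stat. TreeSB - S})<\infty$ — equivalently the tilted functional is finite somewhere on the feasible set — a minimiser exists; uniqueness is already guaranteed by the strict convexity of relative entropy (\Cref{prop:stat and dyn TreeSB}). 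Finally, \Cref{prop:stat and dyn TreeSB} identifies $\inf(\ref{eq:Stat. TreeSB - S})=\inf(\ref{eq:Dyn. TreeSB})$ and shows a solution of one problem yields a solution of the other via $\sP^{SB}=\Pi_\cS^{SB}\sQ_{\cdot|\cS}$, so the equivalence transfers verbatim from \eqref{eq:Stat. TreeSB - S} to \eqref{eq:Dyn. TreeSB}.

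The main obstacle is therefore not the compactness/semicontinuity argument, which is standard once the reference is a probability measure, but the careful bookkeeping around the $\sigma$-finite $\sQ_\cS$: one must check that the change-of-reference identity and the relevant properties of $\KL(\cdot\Vert\sQ_\cS)$ remain valid in this setting, and that the tilt costs only the finite amount $\log z_B + \sum_{i\in\cS}\int B\,\rmd\mu_i$ — which is exactly what the hypotheses $z_B<\infty$ and $\int B\,\rmd\mu_i<\infty$ buy us.
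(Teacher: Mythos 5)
Your proposal is correct and follows essentially the same route as the paper's proof: tilt $\sQ_\cS$ by $\exp(-\sum_{i\in\cS}B(x_i))/z_B$ to obtain a probability reference measure, observe that the KL objective then differs from $\KL(\cdot\Vert\hat\sQ_\cS)$ by the finite constant $\log z_B+\sum_{i\in\cS}\int B\,\rmd\mu_i$ on the constraint set, and conclude existence from tightness/compactness of $\Gamma(\{\mu_i\}_{i\in\cS})$ together with lower semicontinuity and lower-boundedness of the tilted functional. (Incidentally, your $-\log z_B$ in the change-of-reference identity is the correct normalisation, matching Léonard's Appendix A.)
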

\begin{proof}
    In light of the equivalence in \Cref{prop:stat and dyn TreeSB}, we can consider just the static problem \eqref{eq:Stat. TreeSB - S}.
    Since the marginals $\mu_i$ are tight on $\sR^d$, it easily follows that the closed constraint set $\Gamma(\{\mu_i\}_{i \in \cS}) := \{\Pi \in \cP((\sR^d)^{|\cS|} : \Pi_i = \mu_i ~~ \forall  i \in \cS\}$ is uniformly tight and thus compact in $\cP((\sR^d)^{|\cS|}$. From the characterisation of KL divergence with respect to the unbounded measure $\sQ_\cS$ (see \citet{léonard2013survey}, Appendix A), we have $\KL(\Pi \Vert \sQ_\cS) = \KL(\Pi \Vert \sQ_\cS^B) - \int_{(\sR^d)^{|\cS|}} \sum_{i \in \cS} B(x_i) \rmd \Pi(x_\cS) - z_B$, where $\sQ_\cS^B$ is the normalised measure $\sQ_\cS^B = \tfrac{1}{z_B} \exp(-\bigoplus_\cS B) \sQ_\cS$. For $\Pi \in\Gamma(\{\mu_i\}_{i \in \cS})$, we have
    \[\KL(\Pi \Vert \sQ_\cS) = \KL(\Pi \Vert \sQ_\cS^B) - \sum_{i \in \cS} \int_{\sR^d} B \rmd \mu_i - z_B.\]
    The lower-semicontinuity of $\Pi \mapsto \KL(\Pi \Vert \sQ_\cS^B)$, and the assumption $\int B \mathrm{d} \mu_i < \infty$ for each $i \in \cS$, together imply that $\KL(\Pi \Vert \sQ_\cS)$ is lower bounded and lower semi-continuous on the compact set $\Gamma(\{\mu_i\}_{i \in \cS})$. Thus the static problem \eqref{eq:Stat. TreeSB - S} admits a solution if and only if $\inf\eqref{eq:Stat. TreeSB - S}<\infty$.
\end{proof}

In light of \Cref{lemma:lemma 2.4 analogue}, we can state the following result which provides conditions for the existence of the \eqref{eq:Dyn. TreeSB} solution. Recall we will consider an unnormalised Brownian reference measure, which satisfies $\sQ_i = \Leb$ for each $i \in \cS$, so the following result applies with $m = \Leb$.
\begin{proposition}[Compare to  Proposition 2.5 in \citet{léonard2013survey}]
\label{prop:TreeSB existence}
    Suppose that $\sQ_i = m$ for each $i \in \cS$, for a positive measure $m$.
    We have the following results.
    \begin{itemize}
        \item[(a)] For \eqref{eq:Stat. TreeSB - S} and \eqref{eq:Dyn. TreeSB} to have a solution, it is necessary to have ${\KL(\mu_i \Vert m) < \infty}$ for each $i \in \cS$.
        \item[(b)] For sufficient conditions: Suppose there exists measurable functions $A,B : \sR^d \rightarrow \sR^{\geq 0}$ satisfying
        \begin{itemize}
            \item[(i)] $\sQ_\cS(\rmd x_\cS) \geq \exp \big( - \sum_{i \in \cS} A(x_i) \big) \rmd m^{\otimes_\cS}(x_\cS)$,
            \item[(ii)] $\int_{(\sR^d)^{|\cS|}} \exp \Big(-\sum_{i \in \cS} B(x_i) \Big) \sQ_\cS(\mathrm{d} x_\cS) < \infty,$
            \item[(iii)] $\int_{\sR^d} (A + B) \rmd \mu_i < \infty$ for each $i \in \cS$.
            \item[(iv)] $\KL(\mu_i \Vert m) < \infty$ for each $i \in \cS$.
        \end{itemize}
    Then there exists a unique solution to the SB problems \eqref{eq:Stat. TreeSB - S} and \eqref{eq:Dyn. TreeSB}.
    \end{itemize}
\end{proposition}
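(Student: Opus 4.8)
I would reduce both parts to the static problem \eqref{eq:Stat. TreeSB - S}. By \Cref{prop:stat and dyn TreeSB} the three TreeSB problems share a common infimum and their solvability is equivalent, and by \Cref{lemma:lemma 2.4 analogue} a solution exists precisely when $\inf\eqref{eq:Stat. TreeSB - S}<\infty$, i.e. when the constraint set $\Gamma(\{\mu_i\}_{i\in\cS})$ contains some $\Pi^0$ with $\KL(\Pi^0\Vert\sQ_\cS)<\infty$. So the whole proposition amounts to exhibiting, or ruling out, such a competitor. Throughout I would lean on the properties of relative entropy with respect to the unbounded measure $\sQ_\cS$ collected in the appendix of \citet{léonard2013survey}: that it is bounded below, lower semicontinuous, and non-increasing under pushforward.

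\emph{Part (a).} If \eqref{eq:Stat. TreeSB - S} has a solution $\Pi^{SB}$, then $\KL(\Pi^{SB}\Vert\sQ_\cS)<\infty$. Pushing $\Pi^{SB}$ and $\sQ_\cS$ forward onto coordinate $i\in\cS$ yields $\mu_i$ and $\sQ_i=m$, so monotonicity of relative entropy under pushforward gives $\KL(\mu_i\Vert m)\le\KL(\Pi^{SB}\Vert\sQ_\cS)<\infty$, which is the claimed necessary condition.

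\emph{Part (b).} I would take the independent coupling $\Pi^0:=\bigotimes_{i\in\cS}\mu_i$, which lies in $\Gamma(\{\mu_i\}_{i\in\cS})$, and show $\KL(\Pi^0\Vert\sQ_\cS)<\infty$. Following the device used in \Cref{lemma:lemma 2.4 analogue}, pass to the tilted \emph{probability} measure $\sQ_\cS^B:=z_B^{-1}\exp(-\bigoplus_\cS B)\,\sQ_\cS$, finite by (ii), so that $\KL(\Pi^0\Vert\sQ_\cS)=\KL(\Pi^0\Vert\sQ_\cS^B)-\log z_B-\sum_{i\in\cS}\int B\,\rmd\mu_i$, where the last sum is finite by (iii). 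For the remaining term, (iv) gives $\Pi^0\ll m^{\otimes_\cS}$ with density $\prod_{i}\tfrac{\rmd\mu_i}{\rmd m}$, while the inequality in (i) gives $m^{\otimes_\cS}\ll\sQ_\cS$ with $\tfrac{\rmd m^{\otimes_\cS}}{\rmd\sQ_\cS}\le\exp(\bigoplus_\cS A)$ ($\sQ_\cS$-a.e.); chaining these Radon--Nikodym derivatives bounds $\log\tfrac{\rmd\Pi^0}{\rmd\sQ_\cS^B}$ by $\log z_B+\sum_{i\in\cS}\big(A(x_i)+B(x_i)+\log\tfrac{\rmd\mu_i}{\rmd m}(x_i)\big)$. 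Since $\sQ_\cS^B$ is a probability measure this relative entropy is nonnegative, so integrating the bound against $\Pi^0$ and using (ii)--(iv) (noting $\int\log\tfrac{\rmd\mu_i}{\rmd m}\,\rmd\mu_i=\KL(\mu_i\Vert m)$) shows $\KL(\Pi^0\Vert\sQ_\cS^B)<\infty$, hence $\KL(\Pi^0\Vert\sQ_\cS)<\infty$. Then \Cref{lemma:lemma 2.4 analogue} gives a unique solution to \eqref{eq:Stat. TreeSB - S}, and \Cref{prop:stat and dyn TreeSB} transfers it to \eqref{eq:Dyn. TreeSB}.

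The main obstacle is not the skeleton of the argument—it mirrors Proposition~2.5 of \citet{léonard2013survey}—but the bookkeeping forced by the unnormalised reference $\sQ_\cS$: guaranteeing that every relative entropy appearing is a well-defined element of $(-\infty,\infty]$ rather than an indeterminate $\infty-\infty$, justifying the chain rule and the pushforward monotonicity for $\KL(\cdot\Vert\sQ_\cS)$ in this setting (which is exactly where the tilt by $B$ and the lower bound (i) via $A$ do the real work), and checking the absolute-continuity chain $\Pi^0\ll m^{\otimes_\cS}\ll\sQ_\cS$ together with the a.e.\ pointwise control of $\tfrac{\rmd m^{\otimes_\cS}}{\rmd\sQ_\cS}$ extracted from the measure inequality in (i).
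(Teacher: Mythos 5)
Your proposal is correct and follows essentially the same route as the paper's proof: part (a) via monotonicity of relative entropy under marginalisation onto coordinate $i$, and part (b) by testing the independent coupling $\Pi^0=\bigotimes_{i\in\cS}\mu_i$ in \Cref{lemma:lemma 2.4 analogue} and bounding $\KL(\Pi^0\Vert\sQ_\cS)$ using (i)--(iv). In fact you spell out the tilting by $B$, the absolute-continuity chain $\Pi^0\ll m^{\otimes_\cS}\ll\sQ_\cS$, and the integrability bookkeeping in more detail than the paper, which only sketches this step.
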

\begin{proof}
    The first statement (a) follows by applying the result in \Cref{lemma:lemma 2.4 analogue} and using the fact that $\KL(\mu_i \Vert \Leb) \leq \KL(\Pi^0 \Vert \sQ_\cS) < \infty$.

    For the second, we take the independent coupling $\Pi^0_\cS = \bigotimes_{i \in \cS} \mu_i$, and observe that (under the above assumptions) it satisfies the conditions in \Cref{lemma:lemma 2.4 analogue}. Clearly, it has the correct marginals $\Pi_i^0 = \mu_i$ by construction, so it remains only to check that $\KL(\Pi^0_\cS \Vert \sQ_\cS) < \infty$. By expanding the KL divergence and using the inequalities in (i) and (iii), one sees that the inequalities in (iv) are sufficient to ensure $\KL(\Pi^0_\cS \Vert  \sQ_\cS) < \infty$.
\end{proof}

\subsection{Characterisation of TreeSB solution}

We now show the characterisation of the solution to \Cref{eq:Dyn. TreeSB} stated in \Cref{th:SB fixed point}, upon which the IMF procedure depends. We first recall that the solution $\sP^{SB}$ is a mixture of bridges of $\sQ$, conditional on the values at vertices in $\cS$.

\begin{proposition}[\textbf{Reciprocal process}]
\label{prop:sol is reciprocal process}
    The solution $\sP^{SB}$ to \eqref{eq:Dyn. TreeSB} (if it exists) is in the reciprocal class $\cR_\cS(\sQ)$, i.e. $\sP^{SB} = \sP^{SB}_\cS \sQ_{\cdot | \cS}$.
\end{proposition}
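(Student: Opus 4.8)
The plan is to re-run the disintegration argument already used in the third part of the proof of \Cref{prop:stat and dyn TreeSB}, but arranged so as to conclude membership in $\cR_\cS(\sQ)$ directly. Concretely, I would fix a solution $\sP^{SB}$ of \eqref{eq:Dyn. TreeSB} (which exists by hypothesis, so in particular $\KL(\sP^{SB} \Vert \sQ) < \infty$), disintegrate both $\sP^{SB}$ and $\sQ$ with respect to the evaluation map at the observed vertices $\cS$, and exhibit the reciprocal projection $\sP^{SB}_\cS \sQ_{\cdot | \cS}$ as a feasible competitor whose KL cost to $\sQ$ is no larger. Optimality then forces the two to coincide.

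First I would invoke the chain rule for the KL divergence, conditioning on the vertex values $X_\cS$,
\begin{equation}
    \KL(\sP^{SB} \Vert \sQ) = \KL(\sP^{SB}_\cS \Vert \sQ_\cS) + \int_{(\sR^d)^{|\cS|}} \KL(\sP^{SB}(\cdot | X_\cS) \Vert \sQ(\cdot | X_\cS)) \, \rmd \sP^{SB}_\cS(X_\cS),
\end{equation}
the same identity used in the proof of \Cref{prop:stat and dyn TreeSB}. Since the integral term is nonnegative, $\KL(\sP^{SB} \Vert \sQ) \ge \KL(\sP^{SB}_\cS \Vert \sQ_\cS)$, with equality precisely when $\sP^{SB}(\cdot | X_\cS) = \sQ(\cdot | X_\cS)$ for $\sP^{SB}_\cS$-a.e.\ $X_\cS$, i.e.\ precisely when $\sP^{SB} = \sP^{SB}_\cS \sQ_{\cdot | \cS}$. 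Next I would note that the candidate $\Tilde{\sP} := \sP^{SB}_\cS \sQ_{\cdot | \cS}$ shares the marginals of $\sP^{SB}$ on $\cS$ (hence $\Tilde{\sP}_i = \mu_i$ for all $i \in \cS$, so $\Tilde{\sP}$ is feasible for \eqref{eq:Dyn. TreeSB}) and satisfies $\KL(\Tilde{\sP} \Vert \sQ) = \KL(\sP^{SB}_\cS \Vert \sQ_\cS) \le \KL(\sP^{SB} \Vert \sQ)$. By optimality of $\sP^{SB}$ this inequality is an equality, so the equality case above applies and $\sP^{SB} = \sP^{SB}_\cS \sQ_{\cdot | \cS} \in \cR_\cS(\sQ)$, as claimed.

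The main obstacle — the content hidden in the ``mild assumptions'' — is the measure-theoretic bookkeeping required to make the disintegration rigorous when $\sQ$ is the unnormalised, stationary Brownian reference on the tree metric space $\cT$. One has to check that the regular conditional distributions $\sQ(\cdot | X_\cS)$ and $\sP^{SB}(\cdot | X_\cS)$ exist on the Polish space $C(\cT, \sR^d)$, that the resulting bridge kernel $\sQ_{\cdot | \cS}$ coincides with the tree-structured Brownian-bridge construction of \Cref{sec:implementation}, and that the chain rule for KL holds against an unbounded reference; this last point is handled exactly as in the appendix of \citet{léonard2013survey} via a reference-tilting argument, using that $\sQ_i = \Leb$ and the finiteness already established in \Cref{prop:TreeSB existence}. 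Once these are in place the argument above is a short variational calculation — indeed the statement can alternatively be read off directly from the third assertion of \Cref{prop:stat and dyn TreeSB}.
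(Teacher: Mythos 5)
Your argument is correct and is essentially the paper's own proof: the paper simply cites the third part of \Cref{prop:stat and dyn TreeSB}, whose proof is exactly the KL chain rule conditioned on $X_\cS$ followed by the equality-case/optimality argument you spell out. You have merely made explicit the feasible-competitor step and the measure-theoretic caveats that the paper leaves implicit.
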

\begin{proof}
    This is a consequence of the third part of \Cref{prop:stat and dyn TreeSB}.
\end{proof}

We now show that for a Markov reference process $\sQ$, the \eqref{eq:Dyn. TreeSB} solution is also Markov. Following \citet{léonard2013survey}, to present the results we will use the following characterisation of Markovianity for a path measure $\sP$ on the tree structure. Consider an edge $e\in \cE$ and a time $t_e \in [0, T^e]$. Consider the time $t_e$ as splitting the continuous tree into two distinct sections (note that such a splitting may not be unique, as the chosen point may correspond to a vertex; in such cases consider any such split into two distinct parts). Denote the restrictions of a process $X$ to these two parts as $X_\leq$ and $X_\geq$. Then we say $\sP$ is Markov to mean that $\sP(X_\leq \in \cdot, X_\geq \in \cdot \cdot | X_t^e) = \sP(X_\leq \in \cdot | X_t^e) \sP(X_\geq \in \cdot \cdot | X_t^e)$ for any such split (together with the technical assumption that some time-marginal of $\sP$ is $\sigma$-finite; see discussion in \citet{léonard2013survey}). Note that the Brownian reference process considered in the main paper is Markov.

\begin{proposition}[\textbf{Markov process}]
\label{prop:sol is markov process}
    For Markov reference measure $\sQ$, the solution $\sP^{SB}$ to \eqref{eq:Dyn. TreeSB} (if it exists) is Markov.
\end{proposition}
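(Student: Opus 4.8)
The plan is to mimic the two-marginal argument of \citet{léonard2013survey}, exploiting two facts already at hand: by \Cref{prop:sol is reciprocal process} the solution decomposes as $\sP^{SB} = \sP^{SB}_\cS\,\sQ_{\cdot|\cS}$ (a mixture of $\sQ$-bridges conditioned on the observed vertices $\cS$), and by \Cref{prop:stat and dyn TreeSB} the coupling $\sP^{SB}_\cS$ solves the static problem \eqref{eq:Stat. TreeSB - S}. Each individual bridge $\sQ_{\cdot|\cS}(\cdot\,|\,x_\cS)$ is itself Markov on $\cT$ (it is $\sQ$ tied down at finitely many points), but an arbitrary mixture of such bridges over a coupling $\Pi_\cS$ need not be — exactly as in the two-marginal case, Markovianity of the mixture is what the \emph{optimality} of the coupling buys us, via a multiplicative factorisation of its density. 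So the two steps are: (1) establish the factorisation; (2) deduce the Markov property of $\sP^{SB}$ from it together with the Markov property of $\sQ$.

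For Step 1, I would run the standard variational argument on the strictly convex problem \eqref{eq:Stat. TreeSB - S}, namely minimising $\Pi \mapsto \KL(\Pi \Vert \sQ_\cS)$ over the affine set $\{\Pi : \Pi_i = \mu_i\ \forall i \in \cS\}$. Perturbing $\sP^{SB}_\cS$ along directions that preserve all the vertex-marginal constraints and invoking first-order optimality shows that $\log(\rmd \sP^{SB}_\cS/\rmd\sQ_\cS)$ lies in the closed linear span of functions of the form $x_\cS \mapsto \sum_{i\in\cS} a_i(x_i)$, i.e.
\[
\frac{\rmd \sP^{SB}_\cS}{\rmd\sQ_\cS}(x_\cS) \;=\; \prod_{i\in\cS} f_i(x_i)
\]
for measurable $f_i \geq 0$; the fact that the factors are indexed by single vertices (not edges) simply reflects that the only constraints are the vertex marginals on $\cS$. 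Since $\sP^{SB} = \sP^{SB}_\cS\,\sQ_{\cdot|\cS}$ and $\sQ = \sQ_\cS\,\sQ_{\cdot|\cS}$ share the same conditional kernel, disintegration gives $\frac{\rmd \sP^{SB}}{\rmd\sQ}(X) = \frac{\rmd \sP^{SB}_\cS}{\rmd\sQ_\cS}(X_\cS) = \prod_{i\in\cS} f_i(X_i)$.

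For Step 2, fix an edge $e$ and a time $t_e \in [0,T^e]$, splitting $\cT$ into the two pieces carrying the restrictions $X_{\leq}, X_{\geq}$, and let $\cS = \cS_{\leq} \sqcup \cS_{\geq}$ be the induced partition. Then $\frac{\rmd \sP^{SB}}{\rmd\sQ} = g(X_{\leq})\,h(X_{\geq})$ with $g = \prod_{i\in\cS_{\leq}} f_i$ and $h = \prod_{i\in\cS_{\geq}} f_i$. For bounded measurable $u(X_{\leq})$, $v(X_{\geq})$, I would compute the $\sP^{SB}$-conditional expectation through the Radon--Nikodym formula and then use that under the Markov measure $\sQ$ one has $X_{\leq} \perp X_{\geq} \mid X^e_{t_e}$, obtaining
\[
\sE_{\sP^{SB}}\!\big[u(X_{\leq})\,v(X_{\geq}) \,\big|\, X^e_{t_e}\big]
\;=\; \frac{\sE_{\sQ}[ug \mid X^e_{t_e}]\,\sE_{\sQ}[vh \mid X^e_{t_e}]}{\sE_{\sQ}[g \mid X^e_{t_e}]\,\sE_{\sQ}[h \mid X^e_{t_e}]}.
\]
Taking $v\equiv 1$ and then $u\equiv 1$ identifies the two factors as $\sE_{\sP^{SB}}[u \mid X^e_{t_e}]$ and $\sE_{\sP^{SB}}[v \mid X^e_{t_e}]$, so $X_{\leq} \perp X_{\geq} \mid X^e_{t_e}$ under $\sP^{SB}$; since this holds at every admissible split — including those falling on a vertex, where one absorbs the corresponding factor $f_w(X_w) = f_w(X^e_{t_e})$ into either $g$ or $h$ — $\sP^{SB}$ is Markov on $\cT$.

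The hard part will be Step 1: making the f-factorisation rigorous with the \emph{unbounded} reference $\sQ$ — measurability of the $f_i$, $\sQ_\cS$-a.e.\ finiteness and integrability of the product, and the functional-analytic closure in the variational characterisation of the minimiser's log-density. These are precisely the technicalities handled in \citet[Appendix A]{léonard2013survey} for two marginals, and the arguments transfer verbatim with ``$a(x_0)+b(x_T)$'' replaced by ``$\sum_{i\in\cS} a_i(x_i)$'', the tree contributing nothing beyond the Markov structure used in Step 2. If one instead has the Pythagorean identity $\KL(\sP \Vert \sQ) = \KL(\sP \Vert \proj_{\cM_\cT}(\sP)) + \KL(\proj_{\cM_\cT}(\sP) \Vert \sQ)$ for $\sP \in \cR_\cS(\sQ)$, a shorter route is available: $\proj_{\cM_\cT}(\sP^{SB})$ is Markov, is feasible for \eqref{eq:Dyn. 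TreeSB} since Markovian projection preserves all vertex marginals, and has $\KL$ to $\sQ$ no larger than $\sP^{SB}$, so uniqueness of the solution forces $\proj_{\cM_\cT}(\sP^{SB}) = \sP^{SB}$, i.e.\ $\sP^{SB}$ is Markov.
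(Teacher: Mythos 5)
Your argument is correct in outline but takes a genuinely different route from the paper. The paper proves Markovianity by a local contradiction argument in the style of Proposition 2.10 of \citet{léonard2013survey}: if $\sP^{SB}$ failed the conditional-independence property at some split point $t_e$, one could replace its conditional law given $X_{t_e}^e$ by the product of the ``before'' and ``after'' conditionals; a Jensen-type claim shows this strictly decreases $\KL(\cdot\Vert\sQ)$ while preserving every time-marginal on the tree (hence feasibility), contradicting optimality. That route needs only convexity of the KL divergence and the Markovianity of $\sQ$. Your main route instead derives Markovianity from the product factorisation $\rmd\sP^{SB}_\cS/\rmd\sQ_\cS=\prod_{i\in\cS}f_i(x_i)$; granting that factorisation, your Step 2 computation (transferring conditional independence from $\sQ$ to $\sP^{SB}$ via a split density $g(X_\leq)h(X_\geq)$) is clean and correct. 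The trade-off is that the factorisation is the expensive ingredient: first-order optimality only places $\log(\rmd\sP^{SB}_\cS/\rmd\sQ_\cS)$ in the \emph{closure} of the sum space $\{\sum_{i\in\cS}a_i(x_i)\}$, and upgrading closure membership to an actual product of measurable potentials is the (multi-marginal) Schr\"odinger-system existence problem, which is strictly harder than anything the paper's proof of \Cref{prop:sol is markov process} invokes. You flag this honestly, but note that the paper deliberately sidesteps it here and only uses a factorisation of this type in the \emph{converse} direction in the proof of \Cref{th:SB fixed point}, where Markov plus reciprocal yields the product form via Theorem 2.14 of \citet{leonard_reciprocal_2014} --- a direction that does not require solving the Schr\"odinger system. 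Your alternative route via the Pythagorean identity (\Cref{lemma:pythag}) together with marginal preservation of the Markovian projection (\Cref{prop:tree markov proj}) is valid, non-circular, and is essentially a global repackaging of the paper's local Markovianisation argument; it is probably the shortest rigorous path if one is willing to import that later machinery.
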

\begin{proof}
    The proof follows that of Proposition 2.10 in \citet{léonard2013survey}. We outline the following changes to the notation for our setting, then the argument follows the same way.

    Consider an edge $e\in \cE$ and a time $t_e \in [0, T^e]$, along with a corresponding split of the tree at time $t_e$ as described above. We define the following notation: Let $\cC_\cT^{\leq} = \{\omega^{\leq} : \omega \in \cC_\cT \}$ and $\cC_\cT^{\geq} = \{\omega^{\geq} : \omega \in \cC_\cT \}$ be the spaces of continuous paths on the two sections of the tree respectively. For a path measure $\sP \in \cP(\cC_\cT)$, let $\sP^{t_e, z} = \sP( \cdot | X_{t_e}^e=z) \in \cP(\cC_\cT)$ be the measure conditioned on the process $X$ taking value $x$ at the time $t_e$, and moreover define its restrictions to the two sections as $\sP^{t_e, z}_{\leq t_e}$ and $\sP^{t_e, z}_{\leq t_e}$ respectively.

    Similarly to \citet{léonard2013survey} we now make the following claim, from which the result follows.
    \begin{claim}
        Fix a time on the tree $t_e$ as described above. Fix a $z \in \sR^d$, a measure $\mu \in \cP(\sR^d)$, and path measures on the `before' and `after' sections $\Tilde{\sP}^{t_e, z}_{\leq} \in \cP(\cC_\cT^{\leq} \cap \{X_{t_e} = z\})$ and $\Tilde{\sP}^{t_e, z}_{\geq} \in \cP(\cC_\cT^{\geq} \cap \{X_{t_e} = z\})$. Consider minimising $\KL(\cdot \Vert \sQ)$ over path measures $\sP \in \cP(\cC_\cT)$ constrained to satisfy $\sP_{t_e} = \mu$, $\sP^{t_e, z}_{\leq} = \Tilde{\sP}^{t_e, z}_{\leq}$, and $\sP^{t_e, z}_{\geq} = \Tilde{\sP}^{t_e, z}_{\geq}$. Then the objective $\KL(\cdot \Vert \sQ)$ attains its unique minimum at $\sP^*(\cdot) = \int_{\sR^d} \Tilde{\sP}^{t_e, z}_{\leq} \otimes \Tilde{\sP}^{t_e, z}_{\geq} \mu(\rmd z)$.
    \end{claim}
    Given the claim, the result follows according to the following argument: Suppose for a contradiction that the SB solution, here denoted $\Tilde{\sP}$, was not Markov. Then, there exists some time $t_e$ and a correspond split of the tree such that ${\Tilde{\sP}(\cdot | X_{t_e}^e) \neq \Tilde{\sP}_{\leq}(\cdot | X_{t_e}^e) \otimes \Tilde{\sP}_{\geq}(\cdot | X_{t_e}^e)}$. Applying the above claim with $\mu = \Tilde{\sP}_{t_e}$, we see that $\sP^*$ and $\Tilde{\sP}$ have the same marginals at all time-points on the tree, but $\sP^*$ attains a strictly lower KL divergence $\KL(\sP^* \Vert \sQ) < \KL(\Tilde{\sP} \Vert \sQ)$, contradicting the optimality of $\Tilde{\sP}$.

    The proof of the claim uses Jensen's inequality and proceeds exactly as the proof of Claim 2.11 in \citet{léonard2013survey}, with the appropriate notation changes.
\end{proof}

We now provide the characterisation of the TreeSB solution in \Cref{th:SB fixed point}. While the previous results have been for a general reference measure $\sQ$, we present the following results for $\sQ$ associated to running Brownian motions $(\sigma B_t^e)_{t \in [0,T^e]}$ along each edge, as considered in the main paper.

\SBfixedpoint*
\begin{proof}
    As we are considering a Brownian reference process, the assumptions in question are that $\int \lVert x \rVert^2 \rmd \mu_i(x) < \infty$, and $\H(\mu_i) < \infty$ for each $i \in \cS$. One can then verify (as in \citet{bortoli2024SBflow}, Lemma D.2) that the criteria in \Cref{prop:TreeSB existence} hold by taking functions $A$ and $B$ to be quadratic, from which uniqueness and existence of the solution follow. From \Cref{prop:sol is reciprocal process} we have that the solution is in $\cR_\cS(\sQ)$, and from \Cref{prop:sol is markov process} we have that it is Markov.

    We now need to show that if a measure $\sP^0$ is Markov and in $\cR_\cS(\sQ)$, and has the correct marginals $\sP^0_i = \mu_i$ for $i \in \cS$, then it is the TreeSB solution.
    Note first that as $\sP^0$ is Markov and reciprocal, its restriction to each edge $e=(u,v)$ is also Markov and reciprocal along that edge. Thus, by Theorem 2.14 in \citet{leonard_reciprocal_2014} (noting that the required criterion holds for the Brownian reference measure) we have that $\frac{\rmd \sP_e^0}{\rmd \sQ_e} = f_u(X_u) f_v(X_v), ~~\sQ_e\textrm{-a.e.}$ for some non-negative measurable functions $f_u, f_v$. Recall that the path measures are a composition of the path measures along each edge according to the tree structure, so this means that $\frac{\rmd \sP^0}{\rmd \sQ} = \prod_{i \in \cV} f_i(X_i), ~~\sQ\textrm{-a.e.}$ for some non-negative measurable functions $f_i$ (via relabelling of the functions).
    Note too that $\sP^0$ is in $\cR_\cS(\sQ)$, so we can also express the Radon-Nikodym derivative as $\frac{\rmd \sP^0}{\rmd \sQ} = h(\{X_i\}_{i \in \cS})$ for some non-negative measurable function $h$. Equating the two expressions, we see that we in fact must have a decomposition only over vertices in $\cS$, $\frac{\rmd \sP^0}{\rmd \sQ} = \frac{\rmd \Pi^0}{\rmd \sQ_\cV} = \prod_{i \in \cS} f_i(X_i), ~~\sQ\textrm{-a.e.}$ for some non-negative measurable functions $f_i$ (where $\Pi^0$ denotes the static coupling of $\sP^0$ over the vertices $\cV$).

    The remainder follows the standard argument characterising the SB solution using the decomposition according to potentials (see e.g. \citet{nutz2021_entropicOTnotes}).
    Consider static couplings in the constraint set $\Pi \in \Gamma(\{\mu_i\}_{i \in \cS}) := \{\Pi \in \cP((\sR^d)^{|\cV|} : \Pi_i = \mu_i ~~ \forall  i \in \cS\}$ such that $\KL(\Pi | \sQ_\cV) < \infty$. By the above, we have that $\sE_{\Pi}[\log(\tfrac{\rmd \Pi^0}{\rmd \sQ_\cV})] = \sum_{i \in \cS} \int \log f_i \rmd \mu_i$, which is in particular independent of the choice of $\Pi$ (for a precise statement taking care regarding the integrability of the potentials, follow the argument of Proposition 2.17 in \citet{nutz2021_entropicOTnotes}).
    Therefore, for any such $\Pi$ we have
    \begin{align}
         \KL(\Pi \Vert \sQ_\cV) &\geq \KL(\Pi \Vert \sQ_\cV) - \KL(\Pi \Vert \Pi^0) \\
         &= \sE_{\Pi}[\log(\tfrac{\rmd \Pi^0}{\rmd \sQ_\cV})] \\
         &= \sE_{\Pi_0}[\log(\tfrac{\rmd \Pi^0}{\rmd \sQ_\cV})] \\
         &= \KL(\Pi^0 \Vert \sQ_\cV).
    \end{align}
    and we thus see that $\Pi^0$ is the minimiser of the (\ref{eq:Stat. TreeSB}) problem.
\end{proof}

\subsection{Properties of the tree-structured projections}

We now move on to proving properties of the TreeIMF procedure. We follow the presentation of \citet{shi2023diffusion}. We begin by proving the properties of the reciprocal and Markovian projections defined in \Cref{def:Tree reciprocal proj,def:tree markov proj}. For a full set of required assumptions, see the assumptions A.1, A.2 and A.3 in \citet{shi2023diffusion} Appendix C.2, which are standard in the literature and we assume to hold along each edge.

The following result follows similarly to the standard IMF case.
\reciprocalproj*
\begin{proof}
    This follows from the KL decomposition used in \Cref{prop:stat and dyn TreeSB}, conditioning on the values on the observed vertices $\cS$:
    \begin{align}
        \KL(\sP \Vert \Pi) = \KL(\sP_\cS \Vert \Pi_\cS) + \int_{(\sR^d)^{|\cS|}} \KL(\sP(\cdot | X_\cS) \Vert \Pi(\cdot | X_\cS)) \rmd \sP_S(X_\cS).
    \end{align}
    Given we are optimising $\Pi$ over the reciprocal class $\cR_\cS(\sQ)$, we have that the bridges $\Pi(\cdot | X_\cS) = \sQ(\cdot | X_\cS)$ are fixed. Thus, the minimiser is achieved by taking $\Pi_\cS = \sP_\cS$, that is $\Pi = \proj_{\cR_\cS(\sQ)}(\sP)$.
\end{proof}

To prove subsequent results, we require the following decomposition of the KL divergence according to the tree structure. Note that we consider Markov and reciprocal processes on the tree, both of which factorise according to the tree structure so the following decomposition can be applied.

\kldecompositiontree*
\begin{proof}
    This is a consequence of the iterative application of the chain rule for KL divergence applied according to the tree structure, and the conditional independence caused by the tree structure. By first applying the chain rule for the KL divergence conditional on the value at the root vertex $r$, we have
    \begin{align}
        \KL(\sP \Vert \Tilde{\sP}) &= \KL(\sP_r \Vert \Tilde{\sP}_r) + \sE_{X_r \sim \sP_r}\big[ \KL \big( \sP(\cdot | X_r) \Vert \Tilde{\sP}(\cdot | X_r) \big) \big] \\
        &= \sE_{X_r \sim \sP_r}\big[ \KL \big( \sP(\cdot | X_r) \Vert \Tilde{\sP}(\cdot | X_r) \big) \big].
    \end{align}
    Recall that the edge set $\cE$ is depth-wise ordered. We can again apply the KL chain rule to the term inside the expectation, now conditioned on the process along the first edge $e_1$.
    \begin{align}
        \KL \big( \sP(\cdot | X_r) \Vert \Tilde{\sP}(\cdot | X_r) \big) 
        &= \KL \big( \sP^{e_1}(\cdot | X_r) \Vert \Tilde{\sP}^{e_1}(\cdot | X_r) \big) \notag \\
        & \qquad \qquad + \sE_{X^{e_1} \sim \sP^{e_1}(\cdot | X_r)}\big[ \KL \big( \sP(\cdot | X^{e_1}, X_r) \Vert \Tilde{\sP}(\cdot | X^{e_1}, X_r) \big) \big].
    \end{align}
    We can iteratively apply similar decompositions as we traverse the edges according to the ordered edge set.
    \begin{align}
        \KL \big(& \sP(\cdot | X^{e_k},..., X^{e_1}, X_r) \Vert \Tilde{\sP}(\cdot | X^{e_k},..., X^{e_1}, X_r) \big)  \notag \\
        &= \KL \big( \sP^{e_{k+1}}(\cdot | X^{e_k},..., X^{e_1}, X_r) \Vert \Tilde{\sP}^{e_{k+1}}(\cdot | X^{e_k},..., X^{e_1}, X_r) \big) \notag \\
        &+ \sE_{X^{e_{k+1}} \sim \sP^{e_{k+1}}(\cdot | X^{e_k},..., X^{e_1}, X_r)}\big[ \KL \big( \sP(\cdot | X^{e_{k+1}},..., X^{e_1}, X_r) \Vert \Tilde{\sP}(\cdot | X^{e^{k+1}},..., X^{e_1}, X_r) \big) \big] \\
        &= \KL \big( \sP^{e_{k+1}}(\cdot | X_{s(e_{k+1})}) \Vert \Tilde{\sP}^{e_{k+1}}(\cdot | X_{s(e_{k+1})}) \big) \notag \\
        &+ \sE_{X^{e_{k+1}} \sim \sP^{e_{k+1}}(\cdot | X^{e_k},..., X^{e_1}, X_r)}\big[ \KL \big( \sP(\cdot | X^{e_{k+1}},..., X^{e_1}, X_r) \Vert \Tilde{\sP}(\cdot | X^{e_{k+1}},..., X^{e_1}, X_r) \big) \big],
    \end{align}
    where in the second line we use the factorisation property along the tree structure (here, $s(e)$ denotes the starting vertex of an edge $e$).
    Applying such decompositions for each edge in the ordered edge set $\cE$, one obtains
    \begin{align}
        \KL(\sP \Vert \Tilde{\sP}) &= \sum_{k=1}^{|\cE|} \sE_{\substack{
        X_{e_{k-1}} \sim \sP^{e_{k-1}}(\cdot | X^{e_{k-2}},..., X^{e_1}, X_r) \\
        \vdots \\
        X_{e_{1}} \sim \sP^{e_{1}}(\cdot | X_r) \\
        X_r \sim \sP_r
        }} 
        \Big[ \KL \big( \sP^{e_{k}}(\cdot | X_{s(e_{k})}) \Vert \Tilde{\sP}^{e_{k}}(\cdot | X_{s(e_{k})}) \big) \Big] \\
        &= \sum_{(u,v) \in \cE_r} \sE_{X_u \sim \sP_u} \Big[ \KL(\sP^{(u,v)}(\cdot | X_u) \Vert \Tilde{\sP}^{(u,v)}(\cdot | X_u)) \Big].
    \end{align}
    as required.
\end{proof}

We now consider the Markov projection defined in \Cref{def:tree markov proj}.
Note that the restriction of the TreeSB solution to each edge is itself an SB (because it is Markov and reciprocal), and thus can be associated with an SDE \citep{DaiPra1991}.
In the definition of the Markov class in \Cref{def:tree markov class} we define the Markov class $\cM_\cT$ via considering SDEs with locally Lipschitz drifts.
The restriction to locally Lipschitz drifts is a technical requirement for applying the entropic version of Girsanov's theorem; this is standard in the literature and does not affect our methodology. Following \citet{shi2023diffusion}, we now provide a result showing that the Markov projection also solves a minimisation problem.

\markovproj*
\begin{proof}
    Applying the KL decomposition in \Cref{lemma:kl_tree_decomposition}, we have
    \begin{equation}
        \label{eq:KL decomp in markov proof}
        \KL(\Pi \Vert \sM)
        = \sum_{e=(u,v) \in \cE} \sE_{X_u \sim \Pi_u} \Big[ \KL(\Pi^{e}(\cdot | X_u) \Vert \sM^{e}(\cdot | X_u)) \Big].
    \end{equation}
    We now analyse the individual KL expressions $\KL(\Pi^{e}(\cdot | X_u) \Vert \sM^{e}(\cdot | X_u))$ along each edge $e=(u,v)$, using the proof techniques of Proposition 2 in \citet{shi2023diffusion}.

    In particular, applying the argument in the proof of Proposition 2 in \citet{shi2023diffusion}, one sees that each conditional process $\Pi_{\cdot | 0}^{e}$ is Markov and can be associated with $(X_t^e)_{t \in [0, T^e]}$ given by
    \begin{equation}
        X_t^e = \sigma^2 \int_0^t \sE_{\Pi_{T^e | s, 0}^e}[\nabla \log \sQ_{T^e | s}^e(X_{T^e}^e | X_s^e) | X_s^e, X_0^e] \rmd s + \sigma \int_0^t \rmd B_s^e.
    \end{equation}
    Therefore, letting the restriction of the Markov process to edge $e$ (denoted above as $\sM^{e}$) be associated with a process $\rmd Y_t^e = v_e(t, Y_t^e) \rmd t + \sigma \rmd B_t^e$ such that $\KL(\Pi^{e}(\cdot | X_u) \Vert \sM^{e}(\cdot | X_u)) < \infty$, with $v_e$ locally Lipschitz, then (using e.g. \citet{Léonard2012_girsanov_finite_entropy}, Theorem 2.3) one obtains
    \begin{align}
        \KL(\Pi^{e}(\cdot | X_u) \Vert \sM^{e}(\cdot | X_u)) = \tfrac{1}{2\sigma^2} \int_0^{T^e} \sE_{\Pi_{t | 0}^e} \big[
        \lVert \sigma^2 \sE_{\Pi_{T^e|t,0}^e} [\nabla \log \sQ_{T^e | t}(X_{T^e}^e | X_t^e) | X_t^e, X_0^e] \notag \\
        - v_e(t, X_t^e)\rVert^2 
        \big] \rmd t.
    \end{align}

    Thus, substituting back into \eqref{eq:KL decomp in markov proof} we have
    \begin{align}
        \KL&(\Pi \Vert \sM)
        = \sum_{e=(u,v) \in \cE} \sE_{X_u \sim \Pi_u} \Big[ \KL(\Pi^{e}(\cdot | X_u) \Vert \sM^{e}(\cdot | X_u)) \Big] \\
        &= \sum_{e \in \cE} \sE_{X_u \sim \Pi_u} \Big[ \tfrac{1}{2\sigma^2} \int_0^{T_{e}} \sE_{\Pi_{t | 0}^{e}} \bigg[ \lVert \sigma^2 \sE_{\Pi_{T|0,t}^e} \big[ \nabla \log \sQ_{T|t}^e(X_{T}^e | X_t^e) | X_t^e, X_0^e \big]  - v_e(X_t^e, t) \rVert^2 \bigg] \rmd t \Big] \\
        &= \sum_{e \in \cE} \tfrac{1}{2\sigma^2} \int_0^{T_e} \sE_{X_0, X_t \sim \Pi_{0,t}^{e}} \bigg[ \lVert \sigma^2 \sE_{\Pi_{T|0,t}^e} \big[ \nabla \log \sQ_{T|t}^e(X_{T}^e | X_t^e) | X_t^e, X_0^e \big]  - v_e(X_t^e, t) \rVert^2 \bigg] \rmd t.
    \end{align}
    This expression is minimised by taking $v_e^*(t, x) = \sigma^2 \sE_{\Pi_{T|t}^e} \big[ \nabla \log \sQ_{T|t}^e(X_{T}^e | X_t^e) | X_t^e=x \big]$ along each edge $e \in \cE$. This corresponds to performing a Markovian projection along each edge $e$ according to the coupling $\Pi^e$, which is exactly the definition of the tree-based Markovian projection in \Cref{def:tree markov proj}.

    We also note that, as an instance of bridge matching, along each edge the process $\Pi_t^e$ and its corresponding Markovian projection $\sM_t^{e,*}$ satisfy the same Fokker-Planck equation (\citet{peluchetti2022nondenoising}, Theorem 2). Thus by the uniqueness of the solutions of the Fokker-Planck equations under A.1 and A.3 in \citet{shi2023diffusion} (see e.g. \citet{Bogachev_2021}), they share the same marginals $\sM_t^{e,*} =\Pi_t^e$.
\end{proof}

\subsection{TreeIMF convergence}

We follow the presentation of convergence in \citet{shi2023diffusion}, but with the appropriate modifications to the proofs for the tree-based setting.

\begin{restatable}[\textbf{Pythagorean property}, compare to \citet{shi2023diffusion}, Lemma 6]{lemma}{treeIMFpythag}
    \label{lemma:pythag}
    Take a Markovian process $\sM \in \cM_\cT$ and a reciprocal process $\Pi \in \cR_\cS(\sQ)$. Under mild assumptions, if $\KL(\Pi \Vert \sM) < \infty$ then we have
    \begin{equation}
        \label{eq:pythag 1}
        \KL(\Pi \Vert \sM) = \KL(\Pi \Vert \proj_\cM(\Pi)) + \KL(\proj_\cM(\Pi) \Vert \sM).
    \end{equation}
    If $\KL(\sM\Vert \Pi) < \infty$ then we have
    \begin{equation}
        \label{eq:pythag 2}
        \KL(\Pi \Vert \sM) = \KL(\Pi \Vert \proj_{\cR_\cS(\sQ)}(\Pi)) + \KL(\proj_{\cR_\cS(\sQ)}(\Pi)) \Vert \sM).
    \end{equation}
\end{restatable}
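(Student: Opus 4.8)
The plan is to reduce both identities to their single-interval counterparts in \citet{shi2023diffusion} (their Lemma 6) by slicing along the tree with \Cref{lemma:kl_tree_decomposition}. Consider first the Markovian identity \eqref{eq:pythag 1} and write $\sM^\star = \proj_{\cM_\cT}(\Pi)$. By the marginal-preservation noted at the end of the proof of \Cref{prop:tree markov proj} one has $\sM^\star_v = \Pi_v$ at every vertex $v$, so $\Pi$, $\sM^\star$ and $\sM$ all share the root marginal (this is where the ``mild assumptions'' enter: the common root $r\in\cS$ carries the prescribed marginal $\mu_r$ for all three measures, which holds throughout the IMF iterations). \Cref{lemma:kl_tree_decomposition} then applies to each of $\KL(\Pi\Vert\sM)$, $\KL(\Pi\Vert\sM^\star)$ and $\KL(\sM^\star\Vert\sM)$; after rewriting $\sE_{X_u\sim\sM^\star_u}$ as $\sE_{X_u\sim\Pi_u}$ in the last one (again using $\sM^\star_u=\Pi_u$), \eqref{eq:pythag 1} reduces to establishing, for each edge $e=(u,v)$ and $\Pi_u$-a.e.\ $X_u$, the single-interval identity
\[
\KL\big(\Pi^{e}(\cdot|X_u)\Vert\sM^{e}(\cdot|X_u)\big)=\KL\big(\Pi^{e}(\cdot|X_u)\Vert\sM^{\star,e}(\cdot|X_u)\big)+\KL\big(\sM^{\star,e}(\cdot|X_u)\Vert\sM^{e}(\cdot|X_u)\big).
\]

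Each such edgewise claim is exactly the one-interval Pythagorean identity of \citet{shi2023diffusion}: conditioned on $X_u$, the restriction $\Pi^{e}(\cdot|X_u)$ is, since $\Pi\in\cR_\cS(\sQ)$, a mixture of Brownian bridges of $\sQ^{e}$ over the coupling $\Pi_{v|u}(\cdot|X_u)$ and hence lies in the single-edge reciprocal class; $\sM^{\star,e}(\cdot|X_u)$ is its Markovian projection by \Cref{def:tree markov proj}; and $\sM^{e}(\cdot|X_u)$ is a Markov diffusion along $e$ started at $X_u$. I would then reproduce the standard argument: write $\KL(\Pi^{e}(\cdot|X_u)\Vert\sM^{e}(\cdot|X_u)) = \sE_{\Pi^{e}(\cdot|X_u)}[\log\tfrac{\rmd\Pi^{e}}{\rmd\sM^{\star,e}}] + \sE_{\Pi^{e}(\cdot|X_u)}[\log\tfrac{\rmd\sM^{\star,e}}{\rmd\sM^{e}}]$; the finite-entropy Girsanov formula (e.g.\ \citet{Léonard2012_girsanov_finite_entropy}) expresses $\log\tfrac{\rmd\sM^{\star,e}}{\rmd\sM^{e}}$ as a drift-difference stochastic integral plus a Lebesgue integral of squared drift differences, and because the Markovian projection matches all time-marginals of $\Pi^{e}$ and its drift is the conditional expectation of the instantaneous drift of $\Pi^{e}$ given the current state, the $\Pi^{e}$-expectation of that log-density coincides with its $\sM^{\star,e}$-expectation, which is $\KL(\sM^{\star,e}(\cdot|X_u)\Vert\sM^{e}(\cdot|X_u))$. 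Summing the edgewise identities over $\cE_r$ gives \eqref{eq:pythag 1}.

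For the reciprocal identity \eqref{eq:pythag 2} — the decomposition of $\KL(\sM\Vert\Pi)$ through $\proj_{\cR_\cS(\sQ)}(\sM)$ — no Girsanov input is needed. I would apply the chain rule for KL, conditioning on the values at the observed vertices $\cS$, exactly as in the proofs of \Cref{prop:stat and dyn TreeSB} and \Cref{prop:Tree reciprocal proj}: from $\KL(\sM\Vert\Pi)=\KL(\sM_\cS\Vert\Pi_\cS)+\sE_{\sM_\cS}[\KL(\sM_{\cdot|\cS}\Vert\Pi_{\cdot|\cS})]$, together with $\Pi_{\cdot|\cS}=\sQ_{\cdot|\cS}$ (as $\Pi\in\cR_\cS(\sQ)$) and $\proj_{\cR_\cS(\sQ)}(\sM)=\sM_\cS\sQ_{\cdot|\cS}$, one reads off $\KL(\sM\Vert\proj_{\cR_\cS(\sQ)}(\sM))=\sE_{\sM_\cS}[\KL(\sM_{\cdot|\cS}\Vert\sQ_{\cdot|\cS})]$ and $\KL(\proj_{\cR_\cS(\sQ)}(\sM)\Vert\Pi)=\KL(\sM_\cS\Vert\Pi_\cS)$, whose sum is $\KL(\sM\Vert\Pi)$.

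I expect the main obstacle to be bookkeeping around the hypotheses rather than any new idea: one must check that $\KL(\Pi\Vert\sM)<\infty$ propagates to $\KL(\Pi\Vert\sM^\star)<\infty$ and $\KL(\sM^\star\Vert\sM)<\infty$ (so the Girsanov manipulations are legitimate), that assumptions A.1--A.3 of \citet{shi2023diffusion} hold along each edge so the one-interval results transfer verbatim, and --- most delicately --- that conditioning an element of $\cR_\cS(\sQ)$ on the source-vertex value of an edge genuinely produces an element of the single-edge reciprocal class, which is where the product (independent Gaussian increments) structure of $\sQ$ along the tree and the depth-first ordering of $\cE_r$ are used.
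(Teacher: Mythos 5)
Your treatment of \eqref{eq:pythag 2} is correct and is essentially the paper's own proof: both amount to the chain rule for KL conditioned on the values at $\cS$, together with $\Pi_{\cdot|\cS}=\sQ_{\cdot|\cS}$ and $(\proj_{\cR_\cS(\sQ)}(\sM))_\cS=\sM_\cS$ (and you have, reasonably, read the second identity the way the paper's proof does, i.e.\ as a statement about $\proj_{\cR_\cS(\sQ)}(\sM)$ rather than the literal $\proj_{\cR_\cS(\sQ)}(\Pi)$, which is just $\Pi$).

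For \eqref{eq:pythag 1} your overall strategy --- decompose all three KL divergences along the edges via \Cref{lemma:kl_tree_decomposition} and run the algebra of Lemma 6 of \citet{shi2023diffusion} edge by edge --- is also the paper's, but the precise reduction you state is false. You claim the identity reduces to a \emph{pointwise} Pythagorean identity holding for $\Pi_u$-a.e.\ $X_u$ on each edge, with $\sM^{\star,e}(\cdot|X_u)$ playing the role of the Markovian projection of $\Pi^{e}(\cdot|X_u)$. It does not play that role: the drift of $\sM^{\star,e}$ is $v^*(t,x)=\sigma^2\,\sE_{\Pi^e_{T|t}}[\nabla\log\sQ^e_{T|t}(X_T\mid x)\mid X_t=x]$, a conditional expectation under the \emph{unconditional} edge coupling $\Pi^e$, whereas the Markovian projection of the conditioned process $\Pi^{e}(\cdot|x_u)$ would condition on $X_0=x_u$ as well. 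Consequently the cross term
\[
\sE_{\Pi^e_{t|0}(\cdot|x_u)}\Big[\big\langle\, b_\Pi(t,X_t,x_u)-v^*(t,X_t),\ v^*(t,X_t)-v_\sM(t,X_t)\,\big\rangle\Big]
\]
does not vanish for fixed $x_u$, and likewise $\sM^{\star,e}_{t|0}(\cdot|x_u)\neq\Pi^e_{t|0}(\cdot|x_u)$ (marginal preservation of the Markovian projection holds unconditionally, not conditionally on the starting point), so the third term also fails to match pointwise. The tower-property cancellation you invoke in the next sentence is the right mechanism, but it only operates after integrating over $X_u\sim\Pi_u$: one must keep the outer expectation in place, write each of the three KLs as $\sum_{e}\tfrac{1}{2\sigma^2}\int_0^{T^e}\sE_{(X_0,X_t)\sim\Pi^e_{0,t}}[\cdots]\,\rmd t$ as in the proof of \Cref{prop:tree markov proj}, and expand the square there --- which is exactly how the paper proceeds. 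The fix is routine, but as written the edgewise pointwise claim is a step that would fail.
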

\begin{proof}
    \textbf{Proof of \eqref{eq:pythag 1}:}  The first identity follows from algebraic manipulations of expressions for the relevant KL divergences.
    From the proof of \Cref{prop:tree markov proj}, we have
    \begin{align}
        \KL(\Pi \Vert \sM) &= \sum_{e \in \cE} \tfrac{1}{2\sigma^2} \int_0^{T_e} \sE_{\Pi_{0,t}^{e}} \bigg[ \lVert \sigma^2 \sE_{\Pi_{T|0,t}^e} \big[ \nabla \log \sQ_{T|t}(X_{T} | X_t) | X_t, X_0 \big] - v_e(X_t, t) \rVert^2 \bigg] \rmd t
    \end{align}
    (where we suppress the superscript $e$ on the $X_t$ for notational convenience).
    Likewise, it can be shown that
    \begin{align}
        \KL(\proj_\cM(\Pi) \Vert \sM) = \sum_{e \in \cE} \tfrac{1}{2\sigma^2} \int_0^{T_{e}} \sE_{X_t \sim \Pi_{t}^{e}} \bigg[ \lVert \sigma^2 \sE_{\Pi_{T|t}^e} \big[ \nabla \log \sQ_{T|t}(X_{T} | X_t) | X_t \big]  - v(X_t, t) \rVert^2 \bigg] \rmd t.
    \end{align}
    From another application of the expression in \Cref{prop:tree markov proj}, we have
    \begin{align}
        \KL(\Pi \Vert \proj_\cM(\Pi)) &= \sum_{e \in \cE} \tfrac{1}{2\sigma^2} \int_0^{T_e} \sE_{X_0, X_t \sim \Pi_{0,t}^{e}} \bigg[ \lVert \sigma^2 \sE_{\Pi_{T|0,t}^e} \big[ \nabla \log \sQ_{T|t}(X_{T} | X_t) | X_t, X_0 \big] \notag \\
        &\qquad \qquad - \sigma^2 \sE_{\Pi_{T|t}^e} \big[ \nabla \log \sQ_{T|t}(X_{T} | X_t) | X_t \big]\rVert^2 \bigg] \rmd t \\
        &= \sum_{e \in \cE} \tfrac{\sigma^2}{2} \int_0^{T_e} \sE_{X_0, X_t \sim \Pi_{0,t}^{e}} \bigg[ \lVert \sE_{\Pi_{T|0,t}^e} \big[ \nabla \log \sQ_{T|t}(X_{T} | X_t) | X_t, X_0 \big] \rVert^2 \bigg] \notag \\
        &\qquad \qquad - \sE_{X_t \sim \Pi_{t}^{e}} \bigg[ \lVert \sE_{\Pi_{T|t}^e} \big[ \nabla \log \sQ_{T|t}(X_{T} | X_t) | X_t \big]\rVert^2 \bigg] \rmd t,
    \end{align}
    where to obtain the second line, we have expanded out the square and taken expectations over $X_0$ in the cross-term.
    
    Using these expressions, by applying the same algebraic manipulations as the proof of Lemma 6 in \citet{shi2023diffusion} to each term in the summations, one obtains
    \begin{equation}
        \KL(\Pi \Vert \proj_\cM(\Pi)) + \KL(\proj_\cM(\Pi) \Vert \sM) = \KL(\Pi \Vert \sM)
    \end{equation}
    as required.
    
    \textbf{Proof of \eqref{eq:pythag 2}:} 
    The second part also follows similarly to \citet{shi2023diffusion}, but instead conditioning on the values at $\cS$.
    Let $\Pi^* = \proj_{\cR_\cS(\sQ)}(\sP) = \sP_\cS \sQ_{\cdot | \cS}$. Using the change of measure formula for KL divergence and the fact that $\Pi$ and $\Pi^*$ have the same bridges conditional on $\cS$,
    \begin{align}
        \KL(\sP \Vert \Pi) &= \KL(\sP \Vert \Pi^*) + \int_{\cC_\cT} \log(\frac{\rmd \Pi^*}{\rmd \Pi}(\omega)) \rmd \sP(\omega) \\
        &= \KL(\sP \Vert \Pi^*) + \int_{(\sR^s)^{|\cS|}} \log(\frac{\rmd \Pi_\cS^*}{\rmd \Pi_\cS}(x_\cS)) \rmd \sP_\cS(x_\cS) \\
        &= \KL(\sP \Vert \Pi^*) + \int_{(\sR^s)^{|\cS|}} \log(\frac{\rmd \Pi_\cS^*}{\rmd \Pi_\cS}(x_\cS)) \rmd \Pi_\cS^*(x_\cS) \\
        &= \KL(\sP \Vert \Pi^*) + \KL(\Pi^* \Vert \Pi)
    \end{align}
    as required. Note that in the second line we have used the fact that $\Pi$ and $\Pi^*$ have the same bridges conditional on $\cS$, and in the third line we have used that $\sP_\cS = \Pi_\cS^*$ by construction.
\end{proof}

We can finally state the result regarding convergence of the TreeIMF iterates to the TreeSB solution.

\treeIMFconvergence*
\begin{proof}
    In light of the Pythagorean property in \Cref{lemma:pythag}, this follows from the same compactness argument as in Proposition 7 and Theorem 8 in \citet{shi2023diffusion}.
\end{proof}

\section{Implementation details and extensions}
\label{app:imp_details}

\paragraph{Algorithm for barycentre setting}
In \Cref{alg:TreeDSBM_barycentre}, we provide a simplified and more explicit version of \Cref{alg:TreeDSBM} for the case of computing Wasserstein barycentres.

\paragraph{Illustration of TreeDSBM}
The TreeDSBM procedure for Wasserstein barycentre computation (that is, for a star-shaped tree) is illustrated in \Cref{fig:TreeIMF_diagram}. We also provide a diagram for a more general tree structure in \Cref{fig:TreeIMF_diagram_nonstar}.

\begin{algorithm}[t]
    \SetKwInOut{Input}{Input}
    \Input{Initial coupling $\Pi_\cS^0$ over measures $\mu_i$ (e.g. independent),\\ Number of IMF iterations $N$, \\
    Entropic regularisation parameter $\sigma$.}
    Construct initial reciprocal process as $\Pi^0 = \Pi_\cS^0 \sQ_{\cdot|\cS}$. That is,
    \begin{itemize}[noitemsep, topsep=0pt,leftmargin=2.0em]
        \item Sample $Y_\cS$ from the initial coupling $\Pi_\cS^0$ over the marginals  $\mu_i$\;
        \item Sample from the unknown central marginal as $Y_0 \sim \cN \big(\sum \lambda_i Y_i, \sigma^2 Id \big)$\;
        \item Training samples are obtained from Brownian bridges along each edge $i$ between $Y_i$ and $Y_0$.
    \end{itemize}
    \For{$n \in \{0, \dotsc, N-1\}$}{
        Learn $2|\cE|$ vector fields using bridge-matching loss \eqref{eq:bridge loss} along each edge, using samples from current reciprocal process $\Pi^{n}$, to obtain Markovian process $\sM^{n+1}$\;
        Construct next reciprocal process $\Pi^{n+1} = \sM_\cS^{n+1} \sQ_{\cdot|\cS}$ using samples from $\sM_\cS^{n+1}$. That is,
        \begin{itemize}[noitemsep, topsep=0pt,leftmargin=1.1em]
            \item Simulate $\sM^{n+1}$ starting from a chosen root $r \in \cS$ (one of the known measures $\mu_i$) to obtain samples $Y_\cS$ from $\sM_\cS^{n+1}$\;
            \item Sample from the unknown central marginal as $Y_0 \sim \cN \big(\sum \lambda_i Y_i, \sigma^2 Id \big)$\;
            \item Training samples are obtained from Brownian bridges along each edge $i$ between $Y_i$ and $Y_0$.
        \end{itemize}
    }
    \caption{TreeDSBM for Wasserstein barycentre computation}
    \label{alg:TreeDSBM_barycentre}
\end{algorithm}

\subsection{Implementation details and design choices}

We implement the TreeDSBM procedure using the JAX framework \citep{jax2018github}. Below, we outline some of the design considerations for implementing the method.

\paragraph{Vector field parameterisation}
In our implementation, we use separate neural networks to parameterise the vector fields for each direction along each edge, totalling 2$|\cE|$ networks in total. Note that one could alternatively use a shared network along each edge with an additional binary input indicating the direction; this parameterisation was used in \citet{bortoli2024SBflow}.

\paragraph{Loss function}
We incorporate both the forwards and backwards losses in \Cref{eq:bridge loss} into a single loss function, and thus optimise the forward and backwards directions simultaneously. One could incorporate all edges into a single loss function and train simultaneously, or alternatively could parallelise the edge optimisations across devices because the edges are optimised independently. This is a strength of our approach and can lead to large speed-ups in training, as training time does not need to increase in proportion to the number of edges.

\paragraph{Simulation}
As we train both directions along each edge, simulation of the SDEs along the tree structure can be initialised from any of the observed nodes in $\cS$. Following \citet{shi2023diffusion}, one can rotate the starting vertex between IMF iterations. This helps to mitigate any drift that accumulates in the marginals, as the coupling samples will only use true samples from the current starting marginal; for an analysis of this, see \citet{bortoli2024SBflow}. In our experiments, we often did not observe noticeable drift in the marginals, and instead generated samples in the coupling by initialising equally across the observed vertices in $\cS$.
In our sampling, we used an Euler-Maruyama discretisation scheme with 50 uniformly-spaced steps.

\paragraph{Initial coupling} Unless otherwise stated, we used the independent coupling $\Pi_\cS^0 = \bigotimes_{i \in \cS} \mu_i$ as the initial coupling. We note that any coupling over $\cV$ with correct marginals on $\cS$ could be used, and we discuss some possible alternatives in subsequent sections.

\paragraph{Architectures}
Other than image experiments, we use a basic MLP-based vector field model. It consists of an MLP spatial embedding with hidden layers [128, 256] to embed into dimension 32, a time embedding consisting of a sine positional-encoding and an MLP with hidden layers [128, 256] also embedding into dimension 32, before concatenating the embeddings and passing through an MLP with hidden layers [512, 256, 128]. This is the same architecture used in \citet{bortoli2021DSB, noble2023treebased}.
In image experiments, we use the UNet architecture \citep{ronneberger15_unet} with the improvements from \citet{Dhariwal21_diffusion_beats_GANs}, using the JAX implementation from \citet{song23_consistency}. For all experiments, we use the Adam optimiser \citep{kingma15_adam} with default parameters 0.9 and 0.999.

We note that for pointcloud experiments, capacity of the neural networks is not a limiting factor—any sufficiently-expressive network will work similarly well and fairly small MLP networks suffice, so using several networks for the different edges does not pose issues. One could also use a single network across the edges and additionally condition on the edge. This makes particular sense for problems with shared structure between edges, such as those for image data (and indeed these are settings for which the networks would be larger, and maintaining multiple networks could become a computational bottleneck).

\begin{figure}[t]
    \centering
    \begin{subfigure}[t]{0.47\textwidth}
        \centering
         \includegraphics[width=0.9\linewidth]{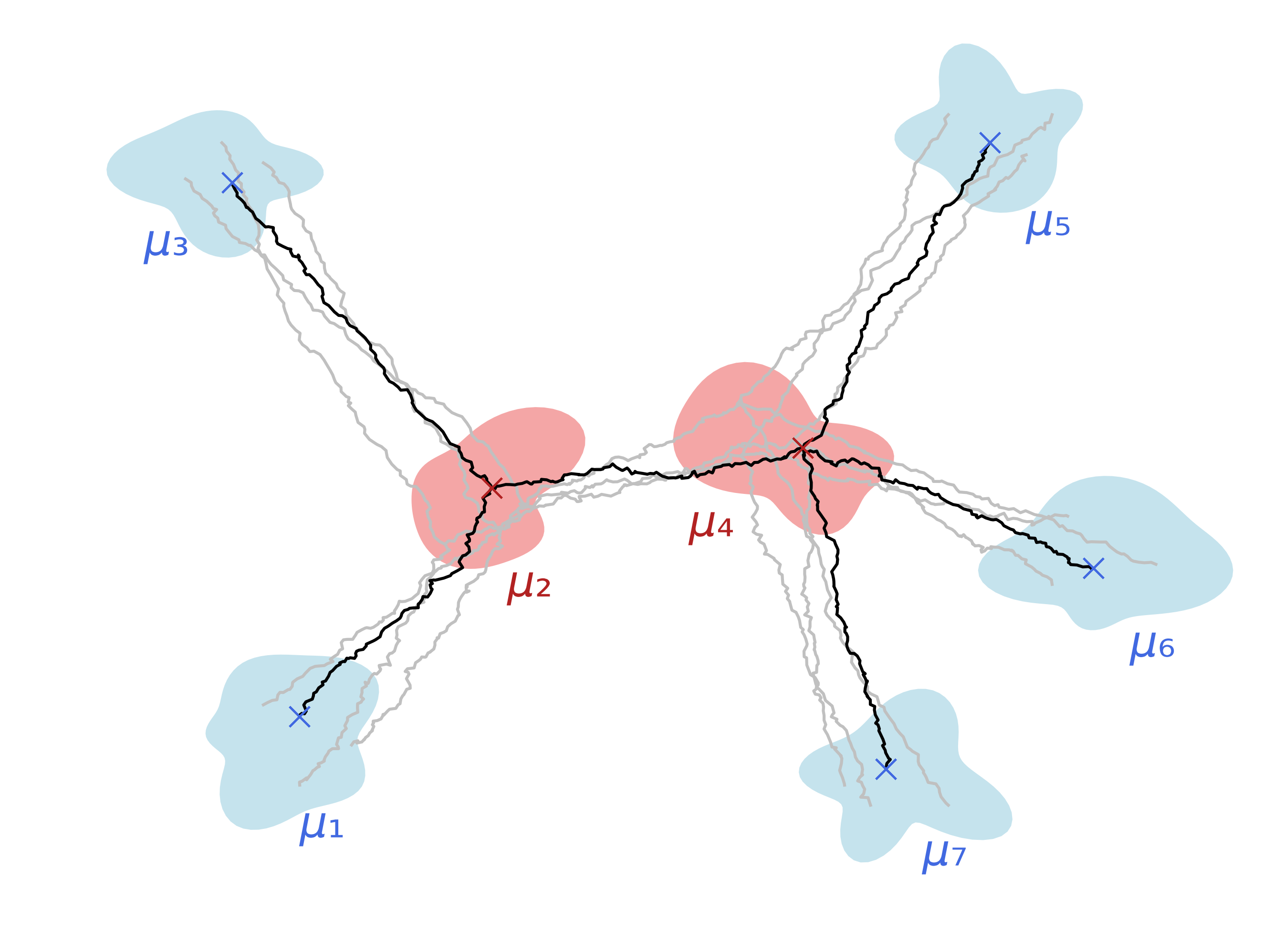}
        \vspace{-1em}
        \caption{Reciprocal process $\Pi$: The $Y_\cS$ (\textcolor{teal}{$\times$}) are sampled from the current coupling $\Pi_\cS$ over $\cS$. Conditional on the $Y_\cS$, points (\textcolor{red}{$\times$}) at marginals $\cV \backslash \cS$ are sampled as $Y_{\cS^c | \cS} \sim \sQ_{\cS^c | \cS}$. Brownian bridges are drawn along the edges between the samples.}
    \end{subfigure}
    \hfill
    \begin{subfigure}[t]{0.47\textwidth}
        \centering
        \includegraphics[width=0.9\linewidth]{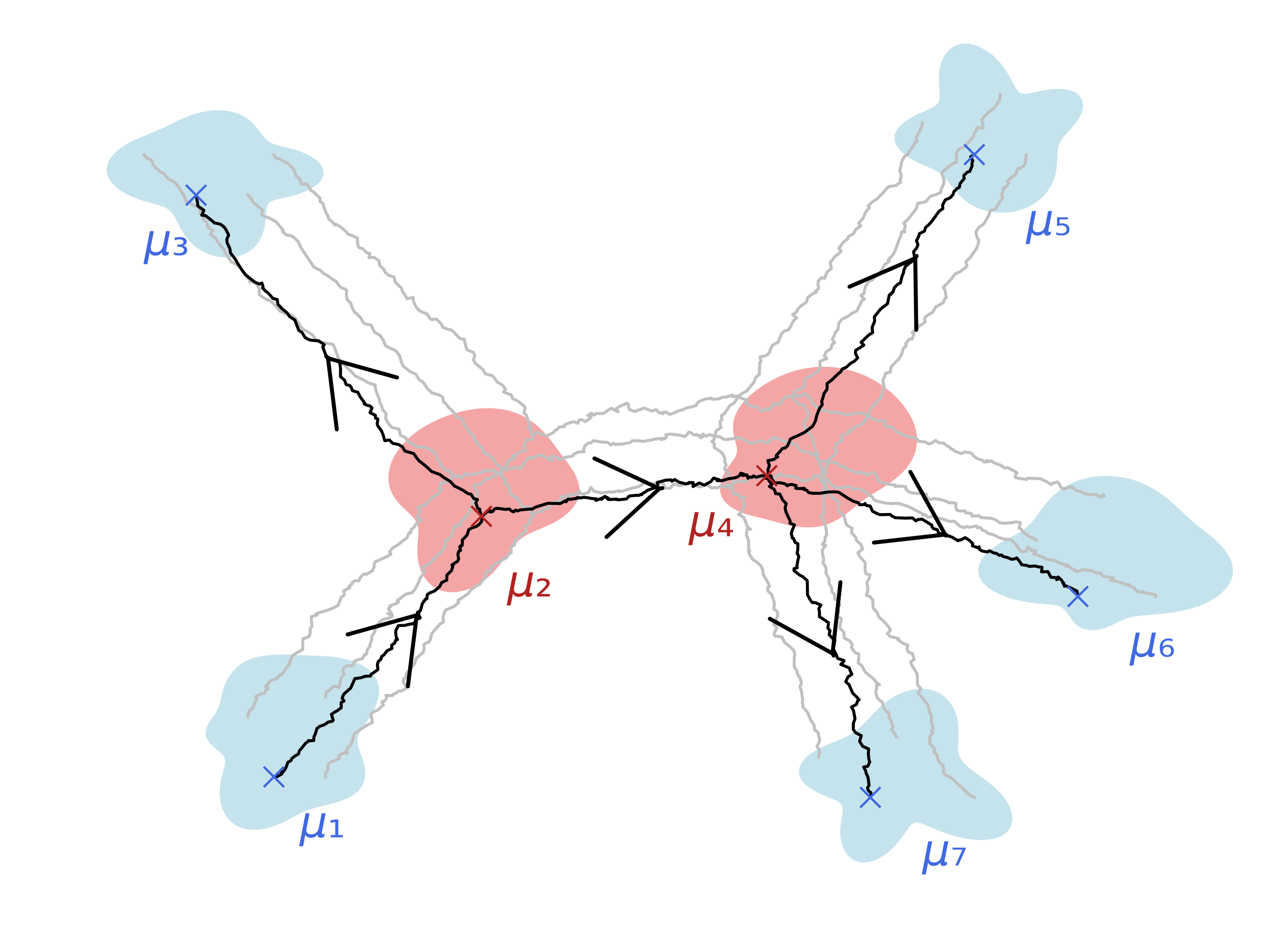}
        \vspace{-1em}
        \caption{Markovianised process $\sP$: Vector fields are trained by bridge-matching along each edge. Samples (\textcolor{teal}{$\times$}) from the next coupling $\Pi_\cS$ are obtained by simulating the resulting SDEs along the tree structure, started at one of the known marginals.}
    \end{subfigure}
    \caption{The two stages of the TreeIMF procedure, for a non-star-shaped tree structure. On this tree, the marginals at the leaf vertices $\cS = \{1,3,5,6,7\}$ (\textcolor{teal}{blue}) are fixed. The marginals at vertices $\cV \backslash \cS$ (\textcolor{red}{red}) are not fixed, and change during the procedure.}
    \label{fig:TreeIMF_diagram_nonstar}
\end{figure}

\paragraph{Memory requirements}
The parallel nature of the TreeDSBM algorithm during training provides practitioners with a trade-off between memory consumption and wall-clock time. Namely, one can train the edges simultaneously if compute allows (either on a single GPU if enough memory, or parallelised across GPUs). If this cannot be done, one can train sequentially instead (in which case memory requirements for training would be comparable with standard bridge-matching).
We report GPU consumption for different experiments in \Cref{tab:memory_usage} (for sequential and joint training), for the hyperparameters used in the paper. These can of course be changed significantly by changing hyperparameters such as batch size. We note that in some of these experiments (e.g. the Gaussian experiments), peak GPU usage is due to the simulation and storage of the training samples for subsequent IMF steps, rather than during the network training. This can be reduced significantly by simulating in smaller batches, or by updating the cache during training rather than simulating all beforehand.

We remark that TreeDSBM has improved memory requirements compared to TreeDSB. In TreeDSB, training a time-reversal along an edge requires saving entire trajectories simulated along the reverse direction, whereas TreeDSBM only requires storing endpoints (this is one of the most significant benefits of IMF over IPF).

\begin{table}[t]
\caption{Comparing memory usage (in MB) for sequential and joint training, for the hyperparameters used.}
\vspace{1em}
\centering
\centering
\begin{tabular}{@{}lcc@{}}
\toprule
& \textbf{Sequential Training (MB)} & \textbf{Joint Training (MB)} \\
\midrule
2d & 447 & 687 \\
Data Aggregation (Poisson) & 705 & 1219 \\
Gaussian ($d=64$) & 1239 & 1239 \\
MNIST 2,4,6 & 4683 & 6407 \\
\bottomrule
\end{tabular}
\label{tab:memory_usage}
\end{table}

\paragraph{Alternative reference measures}
We have presented the methodology according to using Brownian reference measures along each edge. However, the methodology extends to other reference process, as long as the respective bridging processes conditioned on the values at $\cS$ are tractable (for example, this is true for Ornstein-Uhlenbeck processes). The connection to quadratic-cost optimal transport is however less simple beyond the Brownian case. See \citet{shi2023diffusion} for a more detailed treatment of this general case.

\subsection{Methodology extensions for improving convergence speed}

We now discuss possible extensions of the TreeDSBM algorithm to improve convergence speed, in the vein of existing extensions of Schr\"odinger bridge methodology for the standard two-marginal setting.

\paragraph{Warmstarting with minibatch mmOT couplings}
The TreeIMF procedure can be initialised with any coupling $\Pi_\cS^0$ over $\cS$ with correct marginals. For the standard SB problem, \citet{tong24_simfreeSB} note that the SB solution is a mixture of bridges mixed by a static $\eps$-OT solution, and thus propose a single iteration of bridge matching on samples generated by a static $\eps$-OT solver applied on minibatches (see also \citet{pooladian23a_multisample, tong2024improving, fatras2021minibatchOT}).

Similarly, one can initialise TreeIMF using samples obtained from static mmOT solvers applied to minibatches. Such a procedure can speed up convergence to the TreeSB solution by initialising closer to the true solution. We remark, however, that such minibatching approaches can incur large errors (particularly in higher dimensions or for small minibatches; for example the Wasserstein-1 error grows as $O(B^{-1/(2d)})$ \citep{Sommerfeld19_OTfastsolvers}), and so the advantages of such methods are lessened as dimensionality increases.

\paragraph{Flow-based IMF on the tree}
Iterative Markovian Fitting presents a mathematically elegant approach for solving the SB problem, with significant practical improvements over the IPF procedure. However, the iterative nature of the algorithm remains a downside - each iteration of the two-step procedure involves first simulating the current Markovian process, and then retraining a neural network with the bridge-matching loss. In practice, one might wonder if it is possible to perform the simulations and bridge-matching procedure simultaneously to avoid the expensive iterative nature of the algorithm. The recent work of \citet{bortoli2024SBflow} answered this in the affirmative and propose the $\alpha$-IMF procedure, which instead corresponds to a discretisation of a \textit{continuous flow} of processes that converge to the SB.
We anticipate one could design an analogous methodology for the tree-based setting that we consider; we leave such extensions for future work.

\section{Experimental details}
\label{app:experimental_details}

Here, we provide experimental details and additional results and discussion regarding the experiments included in the main body.

\subsection{Synthetic 2$d$ barycentre}
\label{app:synthetic_2d_details}

\begin{figure}[t]
  \centering
  \begin{minipage}[t]{0.6\textwidth}
    \centering
    \includegraphics[width=\linewidth]{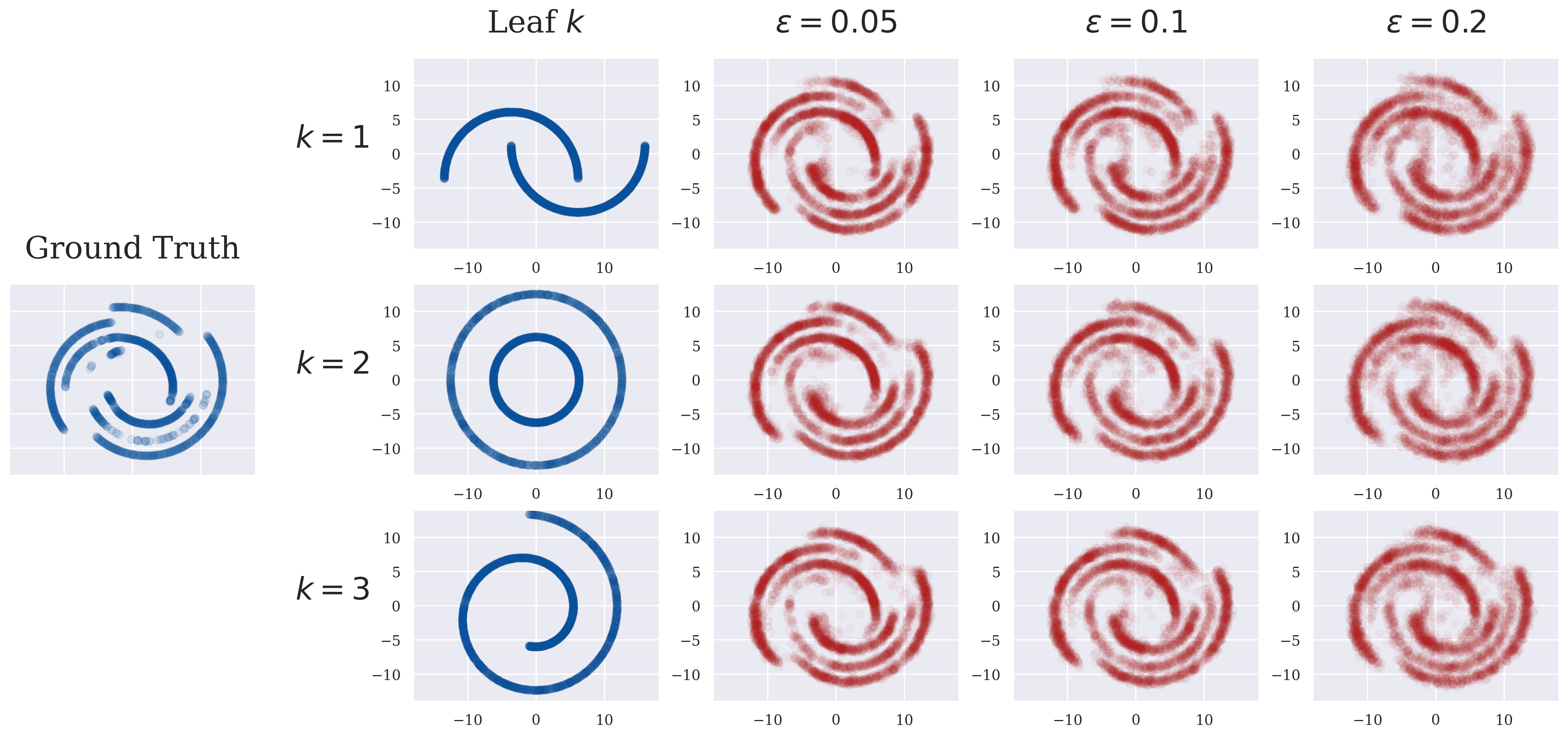}
    \caption{TreeDSBM samples in the 2$d$ experiment, comparing different regularisation values $\eps$.}
    \label{fig:compare epsilon}
  \end{minipage}%
  \hfill
  \begin{minipage}[t]{0.36\textwidth}
    \centering
    \includegraphics[width=0.6\linewidth]{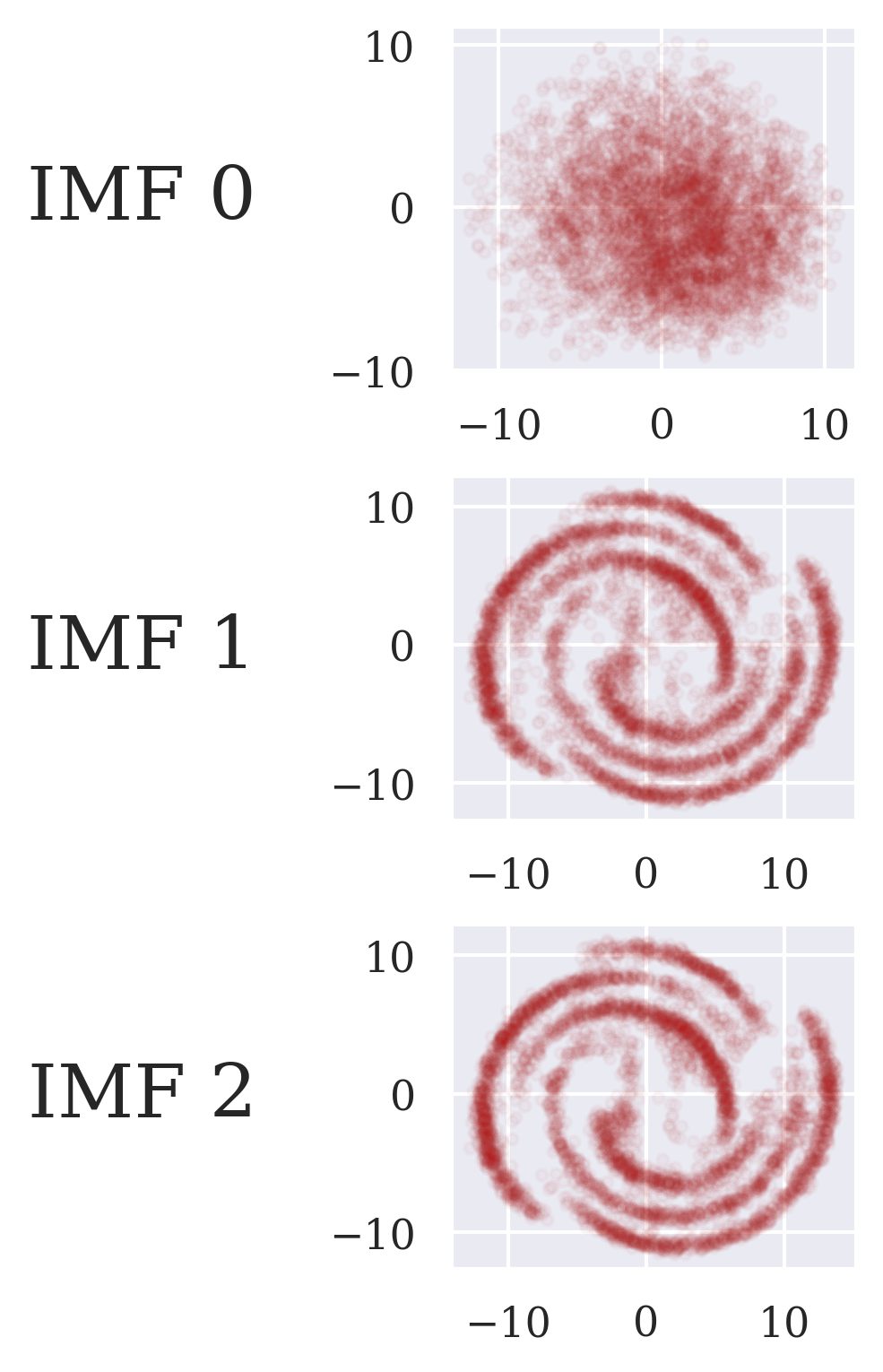}
    \caption{Progression of TreeDSBM ($\eps=0.1$) barycentre approximations through the IMF iterations.}
    \label{fig:IMF progression}
  \end{minipage}
\end{figure}

\paragraph{Datasets} We follow the experimental setup of \citet{noble2023treebased}. The marginals consist of moon, circle, and spiral datasets from scikit-learn \citep{scikit-learn}, centred and scaled by a factor of 7.0. We aim to learn the ($\tfrac{1}{3}, \tfrac{1}{3}, \tfrac{1}{3}$)-barycentre of the dataset. This is a challenging problem—the lower-dimensional and discontinuous structures in the marginals mean that the barycentre has a complex and fragmented support, and the transport maps to the barycentre are highly discontinuous.

\paragraph{Hyperparameters} For TreeDSBM, we use $\eps=0.1$  and run for 6 IMF iterations. For training the vector fields, we use 10,000 training steps and a batch size of 4096. We use the Adam optimiser (with default parameters 0.9, 0.999) with learning rate 1e-3 and exponential moving average parameter of 0.99. At inference we use the Euler-Maruyama scheme with 50 steps. We generate a batch of 10,000 training couplings for subsequent TreeIMF iterations (a third simulated from each marginal). For the other algorithms, we use their default parameters provided in their respective codebases.

\begin{table}[t]
\caption{Progression of the Sinkhorn divergence to the ground truth, and the average Sinkhorn divergence to the marginals, during the IMF iterations (mean$\pm$std, over 5 runs).}
\vspace{1em}
\centering
\setlength{\tabcolsep}{3pt}
\begin{tabular}{@{}lcccccc@{}}
\toprule
& \textbf{IMF 1} & \textbf{IMF 2} & \textbf{IMF 3} & \textbf{IMF 4} & \textbf{IMF 5} & \textbf{IMF 6} \\
\midrule
Fit to solution & 17.7 $\pm$ 0.2 & 1.28 $\pm$ 0.04 & 1.21 $\pm$ 0.07 & 1.13 $\pm$ 0.05 & 1.12 $\pm$ 0.05 & 1.09 $\pm$ 0.06 \\
Fit to marginals & 0.16 $\pm$ 0.03 & 0.17 $\pm$ 0.03 & 0.18 $\pm$ 0.04 & 0.19 $\pm$ 0.03 & 0.19 $\pm$ 0.03 & 0.20 $\pm$ 0.02 \\
\bottomrule
\end{tabular}
\label{tab:IMF_progression_2d}
\end{table}

\paragraph{Comparison with alternative algorithms}

For the ground-truth, we use the in-sample free-support barycentre algorithm of \citet{cuturidoucet14} implemented in Python Optimal Transport \citep{python-OT}, with 1500 datapoints. Note that the aim of this experiment is not to outperform in-sample methods; it is known that such approaches perform well in low dimensions, but do not scale well as dimension increases. The in-sample method is used here to provide a close approximation to the ground-truth, allowing us to judge the success of the continuous Wasserstein-2 barycentre approaches that we compare.

For TreeDSB, we use the checkpoints provided by \citet{noble2023treebased} which were trained for 50 IPF iterations.

We also report results for the WIN algorithm from \citet{korotin2022wasserstein}. This is an iterative algorithm inspired by \citet{alvarez16_fixedpoint}; it pushes forward a source latent distribution $\rho$ through a function $G$ to give a generative model $\nu = G \# \rho$ for the barycentre, and also learns maps $T_i$ and $T_i^{-1}$ transporting from the generated barycentre to and from the marginals respectively. In our experiments, the barycentre generator $\nu = G \# \rho$ was unable to fit the true barycentre accurately, nor were the maps $T_i^{-1} \# \mu_i$. 
We hypothesise that the neural maps struggle to model the discontinuous transports well.
However, the combined map $(\sum_i \lambda_i T_i) \# \nu$ was able to give a good approximation of the true barycentre, which are the results we report in \Cref{fig:2d_3datasets_plot} and \Cref{tab:2d_table}.

Additionally, we applied the W2CB \citep{korotin2021_bary_no_minimax} and NOTWB \citep{kolesov2024estimating} algorithms to this example, but were unable find hyperparameters to make the algorithms to converge to the correct solution. Again, this may be due to the difficulty in modelling the discontinuous transports with neural networks. We also anticipate that the loss landscapes caused by this complex example may have caused these methods to get stuck in local minima. It appears that the iterative schemes of TreeDSBM, TreeDSB, and WIN aid in overcoming such issues.

\paragraph{Convergence speed}
In \Cref{fig:IMF progression}, we demonstrate the progression of the TreeDSBM barycentre approximation as we run the IMF iterations. We run with $\eps=0.1$, and plot samples generated from leaf $0$. 
In \Cref{tab:IMF_progression_2d}, we also show how the Sinkhorn divergence to the ground truth evolves as the IMF iterations progress.
After only two IMF iterations, TreeDSBM already gives a good approximation to the solution. Such behaviour reflects similar results reported in \citet{Lindheim2023_simplebaryapprox}, which observes that iterative fixed-point approaches \citep{alvarez16_fixedpoint} exhibit very fast convergence to the solution. 
TreeDSBM performs only a single bridge-matching iteration along each edge before updating the barycentre, rather than computing full OT maps (which would be expensive). As such, it strikes a good balance between the efficiency of iterative fixed-point-based approaches, without requiring full OT map computations before updating the barycentre approximation.

\paragraph{Runtime analysis}
We report approximate runtimes for the three methods that converged. All experiments were ran on a single Nvidia GeForce RTX 2080Ti GPU.

Our TreeDSBM implementation took approximately 1 minute for each IMF iteration when training the edges jointly, and took around 7 minutes to run the 6 IMF iterations. Note that one could also obtain good results using fewer IMF iterations or fewer training steps.

In contrast, the alternative methods were significantly slower. WIN required around 8000 training steps to obtain a good barycentre approximation, which took approximately 1 hour 20 minutes. The provided checkpoint for TreeDSB is for 50 IPF iterations, each of which would require 6 time-reversal training procedures, and would thus take significantly longer to train than TreeDSBM.

\paragraph{Additional results} In \Cref{fig:compare epsilon}, we plot TreeDSBM samples obtained for different values of entropy-regularisation $\eps$. As expected, increasing $\eps$ leads to a slight blurring bias in the solution.
In \Cref{tab:IMF_progression_2d} we also report the quality of fit to the marginals, calculated by simulating from the `moon' marginal to the centre and then out to the other leaf nodes, and averaging the resulting Sinkhorn divergences to ground-truth samples from these marginals. We see that in this experiment there is negligible drift accumulation in the marginals.

\vspace{1em}

We overall found TreeDSBM to perform strongest in this experiment. Its bridge-matching objectives and iterative nature provide fast and stable training in what is a complex and challenging problem setting, and its dynamic-transport approach means that it is able to model the discontinuous transport maps accurately.

\subsection{MNIST 2,4,6 barycentre}
\label{app:mnist_details}

\begin{figure}[t]
    \centering
    \begin{minipage}{0.7\textwidth}
        \centering
        \begin{subfigure}[b]{0.31\textwidth}
            \centering
            \includegraphics[width=\linewidth]{figures/DSBM_mnist_grid_0_01.png}
            \caption{Barycentre samples, $\varepsilon = 0.02$.}
            \label{fig:subfig-a}
        \end{subfigure}
        \hfill
        \begin{subfigure}[b]{0.31\textwidth}
            \centering
            \includegraphics[width=\linewidth]{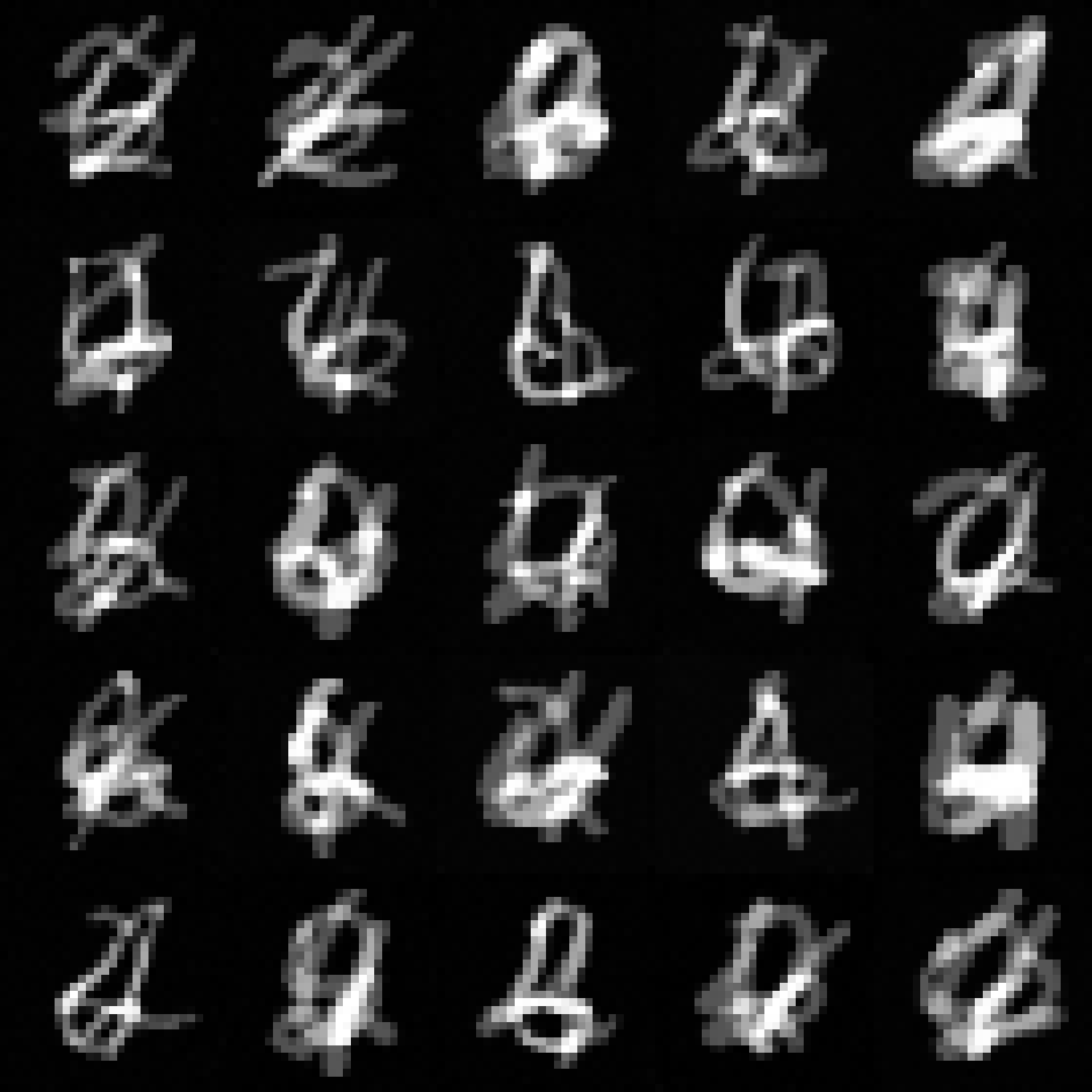}
            \caption{Samples as $\sum_i \lambda_i Y_i$, $\varepsilon = 0.02$.}
            \label{fig:subfig-b}
        \end{subfigure}
        \hfill
        \begin{subfigure}[b]{0.31\textwidth}
            \centering
            \includegraphics[width=\linewidth]{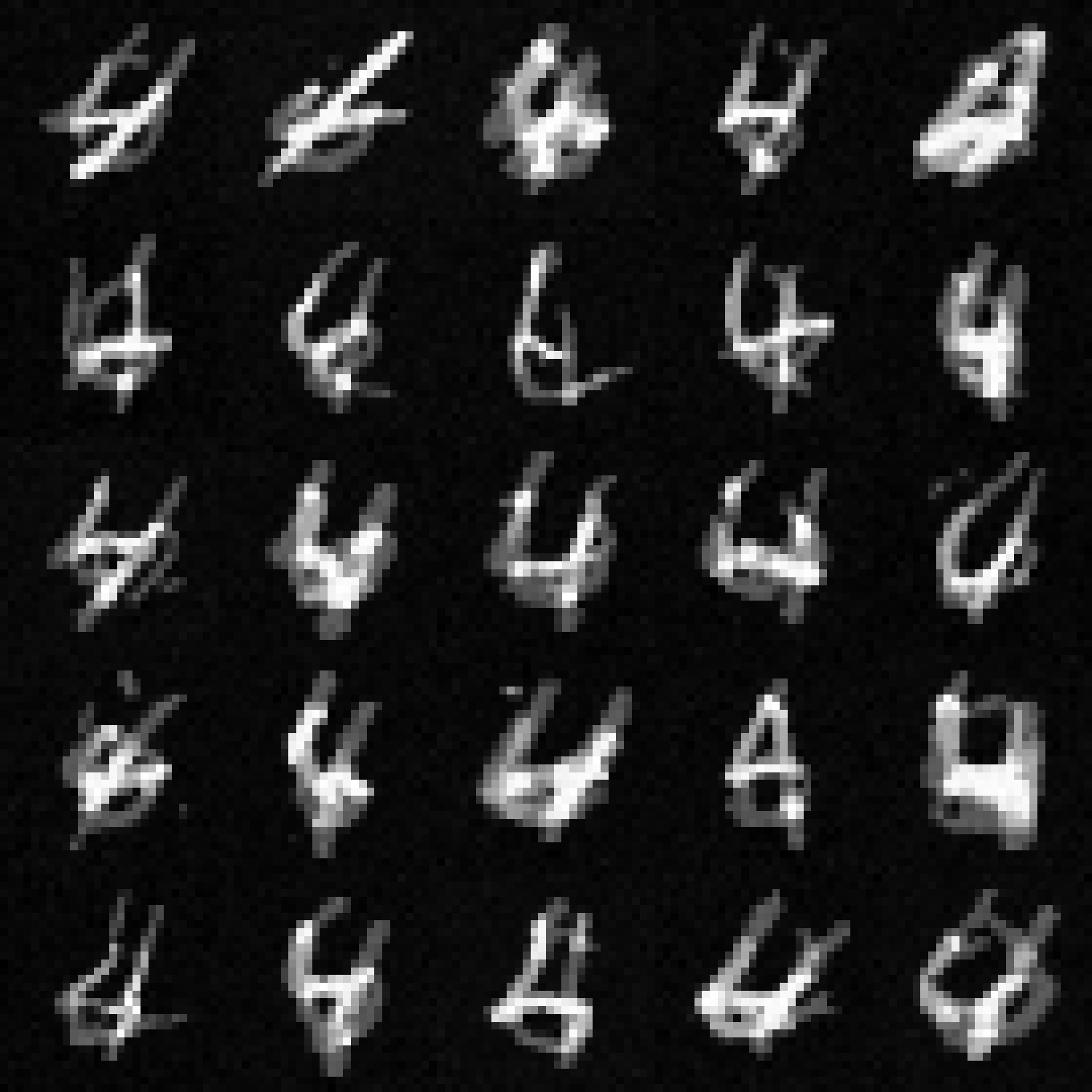}
            \caption{Barycentre samples, $\varepsilon = 0.001$.}
            \label{fig:subfig-c}
        \end{subfigure}
        \caption{(a) The true TreeSB solution will have noise present due to entropy-regularisation. If one wishes to reduce this, one can instead (b) construct samples as $\sum_i \lambda_i Y_i$, or (c) use a smaller $\eps$ value.}
        \label{fig:mnist appendix plots}
    \end{minipage}
\end{figure}

We also follow the experimental setup of \citet{noble2023treebased} and compute the ($\tfrac{1}{3}, \tfrac{1}{3}, \tfrac{1}{3}$)-barycentre of the 2,4 and 6 digits in the MNIST dataset \citep{lecun2010mnist}.

\paragraph{Hyperparameters} We use the UNet architecture of \citet{song23_consistency}, with 64 channels, channel multiples of (1,2,2), attention at layers (16,8), and 2 residual blocks at each layer. We train each bridge-matching procedure with 10,000 steps with batch size 64, at a learning rate of 1e-4 and with exponential moving average weight of 0.999. For subsequent IMF iterations, we simulate 8,192 coupling samples. We run for 4 IMF iterations, beyond which we did not see much change between iterations. The TreeDSB samples plotted in \Cref{fig:mnist_grids} are those displayed in \citet{noble2023treebased}, as we did not have the computational resources to run TreeDSB to convergence in this setting.

\paragraph{Role of $\eps$} In \Cref{fig:mnist_grids}, the TreeDSBM barycentre has some noise present in the samples. We remark here that this noise \textit{should} be present in an accurately computed solution, as we are solving for the $\eps$-TreeSB solution which adds entropic regularisation. If one wishes for less noise in the samples, one can run with a smaller entropy-regularisation value $\eps$. Alternatively, one could generate samples by sampling $Y_i$ according to the learned coupling, and weighting them as $\sum_i \lambda_i Y_i$ (though note that would not be clear what kind of barycentre such samples would be from). We plot results for a smaller $\eps = 0.001$, and for the weighted coupling samples $\sum_i \lambda_i Y_i$ for $\eps=0.02$ in \Cref{fig:mnist appendix plots}, to demonstrate that TreeDSBM can generate samples with minimal noise if so desired.

\paragraph{Convergence speed}
We provide an analysis of the convergence speed of TreeDSBM in terms of the number of IMF iterations required. Unfortunately there is no ground-truth barycentre to compare to in this example; this makes quantitative evaluation of the obtained barycentre difficult. We therefore instead assess the fit to the marginals (from which we have true samples) as a proxy for the success of the algorithm, along with the transport cost — the transport cost provides an indication of the optimality of the maps, while if the marginals are not fitted accurately, then the resulting barycentre will be unreliable. While not an ideal measure of the `quality' of the barycentre itself, this does provide a quantitative and, importantly, tractable proxy for the `success' of the algorithm.

To this end, we report in \Cref{tab:IMF_progression_mnist} the transport cost and FID values for samples from the marginals, for the 4 IMF iterations (note that we train a classifier and use the obtained features for the FID calculation, so these values should not be compared with those in other works). We initialise the sampling from 1000 unseen test samples of the digit 6, and report the FID values of the obtained 2s and 4s (averaged). We see that, as expected, the transport cost decreases as the IMF iterations proceed, indicating that the barycentre approximation is improving. We also observe that the FID scores increase slightly (though there is little visible difference) — this is a consequence of the drift that can accrue in the marginals, and is consistent with the expected behaviour of DSBM from which this effect is inherited (this can be reduced by training for longer or using the techniques discussed in Appendix C.1). 

\begin{table}[h]
\caption{Progression of the total transport cost and fit to the marginals (as measured by FID), during the IMF iterations. Note that the FID values are obtained using a trained classifier, so should not be compared to values in other works.}
\vspace{1em}
\centering
\centering
\begin{tabular}{@{}lcccc@{}}
\toprule
& \textbf{IMF 1} & \textbf{IMF 2} & \textbf{IMF 3} & \textbf{IMF 4} \\
\midrule
Transport cost & 431 & 402 & 389 & 378 \\
Ave. FID (2s and 4s) & 61 & 82 & 89 & 93 \\
\bottomrule
\end{tabular}
\label{tab:IMF_progression_mnist}
\end{table}

\subsection{Subset posterior aggregation}
\label{app:subset_details}

The previous experiments have shown TreeDSBM to improve over its IPF counterpart TreeDSB. In the following experiments, we provide a more detailed comparison with current strongly-performing methods for continuous Wasserstein-2 barycentre estimation, by reporting results for standard experiments in the literature. We include comparisons against the methods WIN \citep{korotin2022wasserstein}, W2CB \citep{korotin2021_bary_no_minimax}, and the recent method NOTWB \citep{kolesov2024estimating}. These methods have demonstrated strong empirical performance in their respective works, and are chosen here to be representative of different approaches in the literature—WIN is an iterative method inspired by \citet{alvarez16_fixedpoint}, W2CB is an Input Convex Neural Network-based approach \citep{amos2017icnn,Makkuva2020}, and NOTWB is based on recent Neural OT methodology \citep{korotin2023NOT}. We use the implementations in their publicly available code, to which we provide the links in \Cref{app:licenses}.
We note that, as ever in barycentre studies, it is somewhat challenging to assess the performance of solvers due to the lack of the ground-truth solution (other than in certain specific examples). Here, we report results on standard experiments used in the literature.

\paragraph{Experimental setup}
We work with the experimental setup and dataset used in \citet{korotin2021_bary_no_minimax} (the same dataset was also used previously in \citet{Li2020_SCWB, fan21_icnnbary}), which uses Poisson and negative-binomial regressions on a bike-rental dataset \citep{bike_sharing_275}. The aim is to predict the hourly number of bike rentals using features including day of the week, weather conditions, and more. The dataset is 8-dimensional and is split into 5 distinct subsets each of size 100,000. The `ground-truth' barycentre consists of 100,000 samples from the full dataset posterior.

Following the literature, we report the $\mathrm{BW}_2^2\mathrm{-UVP}$ metric between the `ground-truth' and the obtained samples in \Cref{tab:subset posterior aggregation}. 
For methods that generate from each marginal, we report the average over generations from each marginal, and for WIN we report results for the barycentre generator.
The  $\mathrm{BW}_2^2\mathrm{-UVP}$ metric is defined as
\begin{equation}
    \mathrm{BW}_2^2\mathrm{-UVP}(\nu, \Tilde{\nu}) = 100 \cdot \frac{\mathrm{BW}_2^2(\nu, \Tilde{\nu})}{\tfrac{1}{2} \mathrm{Var}(\Tilde{\nu})} \%,
\end{equation}
where the Bures-Wasserstein metric is defined as $\mathrm{BW}_2^2(\nu, \Tilde{\nu}) = W_2^2(\cN(m_\nu, \Sigma_\nu), \cN(m_{\Tilde{\nu}}, \Sigma_{\Tilde{\nu}}))$ for the respective means and covariances of the distributions.

\paragraph{Hyperparameters} For TreeDSBM, we use $\eps=0.001$ and run for 4 IMF iterations. For training the vector fields, we use 2000 training steps and a batch size of 4096. We use the Adam optimiser (with default parameters 0.9, 0.999) with learning rate 1e-3 and exponential moving average parameter of 0.99. At inference we use the Euler-Maruyama scheme with 50 steps. We generate a batch of 50,000 training couplings for subsequent TreeIMF iterations (10,000 simulated from each marginal).

For W2CB, we use a learning rate of 1e-4 in the negative binomial setting. For WIN, in the negative binomial case we rescale the source $z$-sampler by a factor of 10.0 to match the scale of the data better; without this, it did not appear to converge.
We run W2CB and WIN for 10000 training iterations, and NOTWB for 2500 iterations.
Other than those mentioned, we use the default parameters provided in the respective codebases.

\paragraph{Runtime analysis:}
We report approximate runtimes for the different approaches; all experiments were ran on a single Nvidia GeForce RTX 2080Ti GPU.
To compare approximate time taken, we report time taken to for the methods to converge close to their final output - chosen by monitoring the $\mathrm{BW}_2^2\mathrm{-UVP}$ metric and choosing the time beyond which it no longer decreases significantly (note these are not the amount time used in \Cref{tab:subset posterior aggregation}, which we trained using the hyperparameters described above).
This is somewhat subjective, but is a fairer comparison than just reporting times for running with default parameters. We remark that it may be possible to improve these runtimes with further hyperparameter tuning and by optimising the algorithm implementations, but investigating such optimisations is beyond the scope of this work.

The W2CB algorithm appeared to give good results after approximately 1000 training steps in both cases, which took around 10 minutes in our experiments.
WIN converged after around 2500 iterations, which took around 45 minutes. NOTWB converged quickly after only around 200 iterations, which took around 2 minutes.

In both experiments, each TreeIMF iteration for our TreeDSBM implementation took approximately 20 seconds when training the edges jointly. TreeDSBM converged well using 4 IMF iterations, and training took around 2 minutes 30 seconds (including time for simulating training samples for the next iteration). This is comparable with NOTWB, the fastest of the alternative methods.

\paragraph{Convergence speed} We provide the values of the $\mathrm{BW}_2^2\mathrm{-UVP}$ metric as the IMF iterations progress in \Cref{tab:IMF_progression_subset}, and again we observe very fast convergence.

\begin{table}[h]
\caption{Progression of the $\mathrm{BW}_2^2\mathrm{-UVP}$ metric during the IMF iterations, for the subset posterior aggregation experiment (mean$\pm$std, over 5 runs).}
\vspace{1em}
\centering
\centering
\begin{tabular}{@{}lcccc@{}}
\toprule
& \textbf{IMF 1} & \textbf{IMF 2} & \textbf{IMF 3} & \textbf{IMF 4} \\
\midrule
$\downarrow$ Poisson & 31.1 $\pm$ 0.03 & 0.0085 $\pm$ 0.0003 & 0.0075 $\pm$ 0.0006 & 0.0076 $\pm$ 0.0005 \\
$\downarrow$ Negative Binomial & 31.0 $\pm$ 0.01 & 0.0123 $\pm$ 0.0003 & 0.0118 $\pm$ 0.0007 & 0.0121 $\pm$ 0.0004 \\
\bottomrule
\end{tabular}
\label{tab:IMF_progression_subset}
\end{table}

\subsection{Higher-dimensional Gaussian experiments}
\label{app:high_dim_gaussian_details}

\paragraph{Experimental setup}
We follow the experimental setup previously used in \citep{korotin2021_bary_no_minimax, korotin2022wasserstein, kolesov2024estimating}, in which 3 Gaussian distributions and its ground-truth ($\tfrac{1}{3},\tfrac{1}{3},\tfrac{1}{3}$)-barycentre are randomly generated, for each dimension in $\{64, 96, 128\}$. We report results for $\mathrm{BW}_2^2\mathrm{-UVP}$ (which measures the quality of the overall generated barycentre) as described above, and additionally report the $L^2\mathrm{-UVP}$ metric, which measures the quality of the individual maps to the barycentre and is defined for each marginal as
\begin{equation}
    L^2\mathrm{-UVP}(\hat{T}, T^*) = 100 \cdot \frac{\lVert \hat{T} - T^* \rVert^2}{\mathrm{Var}(\Tilde{\nu})} \%,
\end{equation}
where $\hat{T}$ denotes the learned map from the marginal to the barycentre, and $T^*$ is the known ground truth mapping. Again, we provide the averages over the marginals.

\paragraph{Hyperparameters} For TreeDSBM, we use $\eps=$1e-4 and run for 4 IMF iterations. For training the vector fields, we use 10,000 training steps and a batch size of 4096. We use learning rate 1e-3 and exponential moving average parameter of 0.99.  We generate a batch of 50,000 training couplings for subsequent TreeIMF iterations (simulated equally from each marginal). Results reported for TreeDSBM are the average over 5 runs.

For the alternative methods, we run W2CB with learning rate 1e-4, and otherwise use the default parameters provided in their respective codebases.

\paragraph{Convergence speed} We provide the values of the $\mathrm{BW}_2^2\mathrm{-UVP}$ metric as the IMF iterations progress in \Cref{tab:IMF_progression_subset}, and again we observe very fast convergence.

\begin{table}[h]
\caption{Progression of the $\mathrm{BW}_2^2\mathrm{-UVP}$ and $L^2\mathrm{-UVP}$ metrics during the IMF iterations, for the Gaussian $d=64$ experiment (mean$\pm$std, over 5 runs).}
\vspace{1em}
\centering
\centering
\begin{tabular}{@{}lcccc@{}}
\toprule
& \textbf{IMF 1} & \textbf{IMF 2} & \textbf{IMF 3} & \textbf{IMF 4} \\
\midrule
$\downarrow \mathrm{BW}_2^2\mathrm{-UVP}$ & 16.0 $\pm$ 0.01 & 0.12 $\pm$ 0.01 & 0.13 $\pm$ 0.02 & 0.14 $\pm$ 0.03 \\
$\downarrow L^2\mathrm{-UVP}$ & 16.7 $\pm$ 0.01 & 1.19 $\pm$ 0.02 & 1.18 $\pm$ 0.02 & 1.18 $\pm$ 0.03 \\
\bottomrule
\end{tabular}
\label{tab:IMF_progression_gaussian}
\end{table}

\vspace{1em}

\subsection*{Discussion of continuous Wasserstein-2 barycentre solver comparisons}

Our experiments show that our TreeDSBM algorithm exhibits strong performance, and is competitive against state-of-the-art methods for continuous Wasserstein-2 barycentre estimation in a range of settings. 
In particular, TreeDSBM offers \textit{fast} and \textit{stable} training—even in complex settings—due to its bridge-matching loss objectives, and comes with a well-understood theoretical analysis.

The best choice of barycentre algorithm may depend on the specific problem setting. For example, if the transport maps exhibit complex behaviour (possibly due to lower-dimensional, manifold-like structures in the datasets) then the flow-based approach of TreeDSBM will likely perform strongly (such as in the 2$d$ barycentre experiment in \Cref{sec:experiments}). Also, when fast training is required then our experiments suggest TreeDSBM is a strong option. On the other hand, if fast inference is important then a one-step-generation solver such as NOTWB might be preferable. Note that one could incorporate distillation techniques from the flow-matching literature for improving the speed of TreeDSBM inference after training.

Overall, TreeDSBM offers a compelling new addition to the taxonomy of continuous Wasserstein-2 barycentre solvers, with distinctly different characteristics to alternative approaches due to its flow-based nature.

\section{Additional experiments}
\label{app:additional_experiments}

\subsection{Further comments regarding computational considerations}

\paragraph{Choice of entropy regularisation}
Choosing the entropy regularisation parameter $\eps$ is a perennial question in entropic OT. Standard methods to choose this value in commonly used OT libraries (for example, choosing in proportion to the costs) provide good guidance for choosing suitable values.
Typically, one may want to choose $\eps$ as small as possible the reduce the entropic bias. One advantage of TreeDSBM over TreeDSB is that it allows for much smaller epsilon (TreeDSB does not converge for too-small $\eps$, as simulated trajectories struggle reach the other marginals). We provide a visualisation of the role of $\eps$ in the 2$d$ example in \Cref{fig:compare epsilon}, and also for two values of $\eps$ for the MNIST experiment in \Cref{fig:mnist appendix plots}. In \Cref{tab:epsilon_dependence}, we also add some further quantitative results for different $\eps$ values, in the 2$d$ and subset posterior aggregation settings.

\paragraph{Fitting to the marginals}
One of the limitations of our approach is that errors can accumulate in the marginals as IMF iterations proceed; this is a limitation inherited from standard IMF and similar reflow methods. Standard techniques from the literature can be used to mitigate this (such as rotating the starting marginal as in \citet{shi2023diffusion}, or using the projection methods in \citet{kim2025simple}). It is therefore important that the bridge-matching steps fit the marginals accurately, and hyperparameters should be chosen accordingly. To provide an indication of how the learned bridge-matching quality affects the overall solution, we provide results for varying batch size on the 2$d$ and data aggregation experiments in \Cref{tab:batchsize_dependence}, for the hyperparameters used in the paper. We have also provided results assessing how the fit to the marginals changes as the IMF iterations progress in the 2$d$ and MNIST experiments in \Cref{tab:IMF_progression_2d,tab:IMF_progression_mnist} respectively.

\begin{table}[t]
\caption{Effect of entropy-regularisation parameter $\eps$ in the 2$d$ and data aggregation experiments.}
\vspace{1em}
\centering
\begin{tabular}{@{}lcccc@{}}
\toprule
& $\boldsymbol{\eps=1.0}$ &  $\boldsymbol{\eps=0.3}$ & $\boldsymbol{\eps=0.1}$ \\
2$d$, Sinkhorn-divergence & 1.24 & 0.99 & 1.02 \\
\midrule
& $\boldsymbol{\eps=}$\textbf{1e-3} &  $\boldsymbol{\eps=}$\textbf{3e-4}  & $\boldsymbol{\eps=}$\textbf{1e-4} \\  
Data Aggregation (Poisson), $\mathrm{BW}_2^2\mathrm{-UVP}$ & 0.012 & 0.008 & 0.008 \\
\bottomrule
\end{tabular}
\label{tab:epsilon_dependence}
\end{table}

\begin{table}[t]
\caption{Effect of batch size in the 2$d$ and data aggregation experiments.}
\vspace{1em}
\centering
\begin{tabular}{@{}lcccc@{}}
\toprule
\textbf{Batch size} & \textbf{64} & \textbf{246} & \textbf{1024} & \textbf{4096} \\
\midrule
2$d$, Sinkhorn-divergence & 1.57 & 1.24 & 1.04 & 1.04 \\
Data Aggregation (Poisson), $\mathrm{BW}_2^2\mathrm{-UVP}$ & 0.032 & 0.017 & 0.013 & 0.012 \\
\bottomrule
\end{tabular}
\label{tab:batchsize_dependence}
\end{table}

\subsection{Ave! Celeba benchmark}

In this section, we provide an example that illustrates a potential limitation of our approach. As previously discussed, it is difficult to evaluate performance of barycentre algorithms in high dimensions due to the lack of a ground-truth.
To combat this, \citet{korotin2022wasserstein} proposed the Ave, celeba! barycenter benchmark, which consists of 3 distributions of transformed CelebA faces \citep{liu15_celeba}, for which the $(\tfrac{1}{4}, \tfrac{1}{2}, \tfrac{1}{4})$-Wasserstein-2 barycentre recovers the true CelebA dataset. The resulting dataset consists of around 67k samples in each marginal, and each image is shape $64 \times 64 \times 3$.

We consider applying the TreeDSBM in this example. It is known that performing bridge-matching between complex datasets such as images can be challenging, so for the first step we instead pretrain models using single bridge-matching iteration from a standard Gaussian to each marginal. To obtain the next coupling for training, we run the process from one of of the marginals to the latent representation in the Gaussian, and then out to the other marginals. This aids in learning, as there is often good structure preserved between the obtained samples from each marginal. For subsequent iterations we also warmstart the parameters from these pretrained models. The experiments were conducted on Nvidia A100 GPUs on Google Colaboratory.

\paragraph{Hyperparameters} 
We use the UNet architecture of \citet{song23_consistency}, with 128 channels, channel multiples of (1,2,2,2), attention at layers (32,16,8), and 4 residual blocks at each layer. 

We use $\sigma=0.01$ and train each bridge-matching procedure with batch size 32, at a learning rate of 1e-4 and with exponential moving average weight of 0.999.
For pretraining, we run for 20,000 training steps (which takes approximately 5 hours), and for subsequent IMF iterations we run for 10,000 steps (which each take around 2.5 hours).

For training subsequent IMF iterations along each edge, we generate samples from the coupling from each datapoint in the corresponding marginal. This mitigates drift in the sample quality at the marginals, as we always use true datapoints from the marginal during training. We run for 2 IMF iterations, and did not see much change for subsequent IMF iterations beyond this.

\paragraph{Discussion of results}
TreeDSBM is able to scale to the high-dimensional setting, but the obtained samples do not match the visual quality of state-of-the-art results such as those reported in \citet{kolesov2024estimating}. Observe that some of the generated samples are good, but some contain additional artifacts that should not be present. We anticipate that this is due to the initial pretraining coupling. When generating samples $Y_i$ from the pretraining coupling, we simulate from one marginal to the Gaussian latent, and then out to the other marginals. This often results in strong structural similarities between the obtained $Y_i$ which yields good barycentre samples for training the next IMF iteration. However, sometimes the coupling samples $Y_i$ do not resemble each other, and the resulting barycentre sample consists of separate overlaid images. This appears to be difficult for the algorithm to recover from, resulting in the artifacts visible in some of the generated images.

Overall this suggests a limitation of TreeDSBM for this particular benchmark—the true transport maps are in fact very simple in this specific example (primarily just colour changes), but it is difficult for TreeDSBM to learn this because communication between the edges is infrequent and only occurs after each IMF iteration. In contrast, state-of-the-art methods for this benchmark optimise all the maps together and with much more interaction between them, which we anticipate is a better inductive bias for the shared structure present in this benchmark. Note that the fact that TreeDSBM optimises the edges separately is in fact a \textit{strength} of the approach in many settings; it results in stable training without needing adversarial objectives, and allows for speed-ups by training the edges simultaneously. However, this experiment suggests that this may be a limitation of our method in scenarios where the true maps exhibit a lot of shared structure (as in the case in this example), as communication between edges occurs too infrequently to recognise this shared structure.

We remark that we observed improved performance in this benchmark by using a shared architecture over the edges (conditioning on the edge and the direction), compared to using a different network along each edge. Such an architecture makes sense in this example, as there are shared features along the edges that the network can learn and this reduces the computational and memory cost. When using a shared architecture, the network also appeared to create more consistent samples from the initial coupling. However, there are still unwanted artifacts present in many of the generated samples, and so alternative methods such as \citet{kolesov2024estimating} are likely more suitable for settings such as these, as discussed above.

We anticipate that the performance of TreeDSBM in this setting could be improved through architectural changes and other implementation tricks from the flow-matching literature (for example, using preconditioned flow parameterisations \citep{karras2022elucidating}, and techniques to mitigate marginal drift in reflow methods \citep{kim2025simple}). Such investigations offer promising directions for future work.

\begin{figure}[t]
    \centering
    \includegraphics[width=0.95\textwidth]{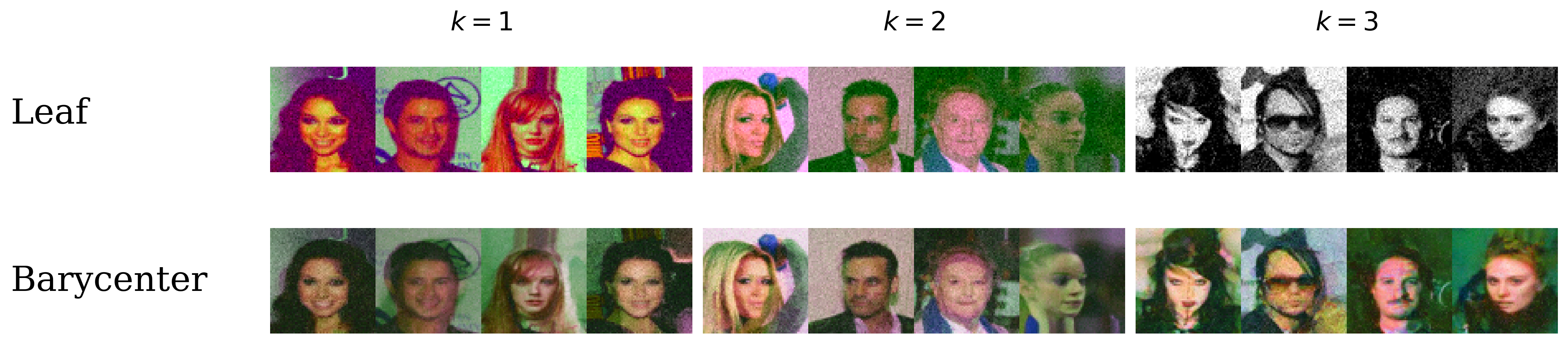}
    \caption{Samples from TreeDSBM applied to the Ave! Celeba benchmark. Many samples are transported well, but some pick up unwanted artifacts. We discuss the findings from this experiment below.}
    \label{fig:ave celeba appendix}
\end{figure}

\subsection{Beyond star-shaped trees}
\label{app:non_star_shaped}

So far, we have demonstrated the empirical performance of TreeDSBM only on star-shaped trees, as we have focused on computing Wasserstein barycentres.
Finally, we demonstrate that our TreeDSBM also works for non-star shaped trees, and thus has potential applications beyond only barycentre computation. We consider a simple 2-dimensional example with the same tree structure as shown in the TreeIMF diagram in \Cref{fig:TreeIMF_diagram_nonstar}, with standard scikit-learn distributions on the observed leaves and each edge having length 1. As in the previous 2-dimensional barycentre problem, this is a challenging task due to the discontinuous transport maps and fragmented supports at the solution.
We run TreeDSBM for 4 IMF iterations with $\eps = 0.1$, and train each edge for 10,000 iterations with learning rate 1e-3 and exponential moving average parameter of 0.99. We plot the obtained measures in \Cref{fig:non-star tree}, and see that TreeDSBM is again able to learn the complex mappings required for this setting. While direct applications of non-star-shaped trees are less clear than in the barycentre case, examples have been studied in \citet{haasler21_treeSB, solomon15_conv}, and they could have potential applications for modelling temporal behaviour of population dynamics, for example if populations were known to split according to a known structure. We leave investigating possible applications of general tree-structured costs for future work.

\begin{figure}[t]
    \centering
    \includegraphics[width=0.75\textwidth]{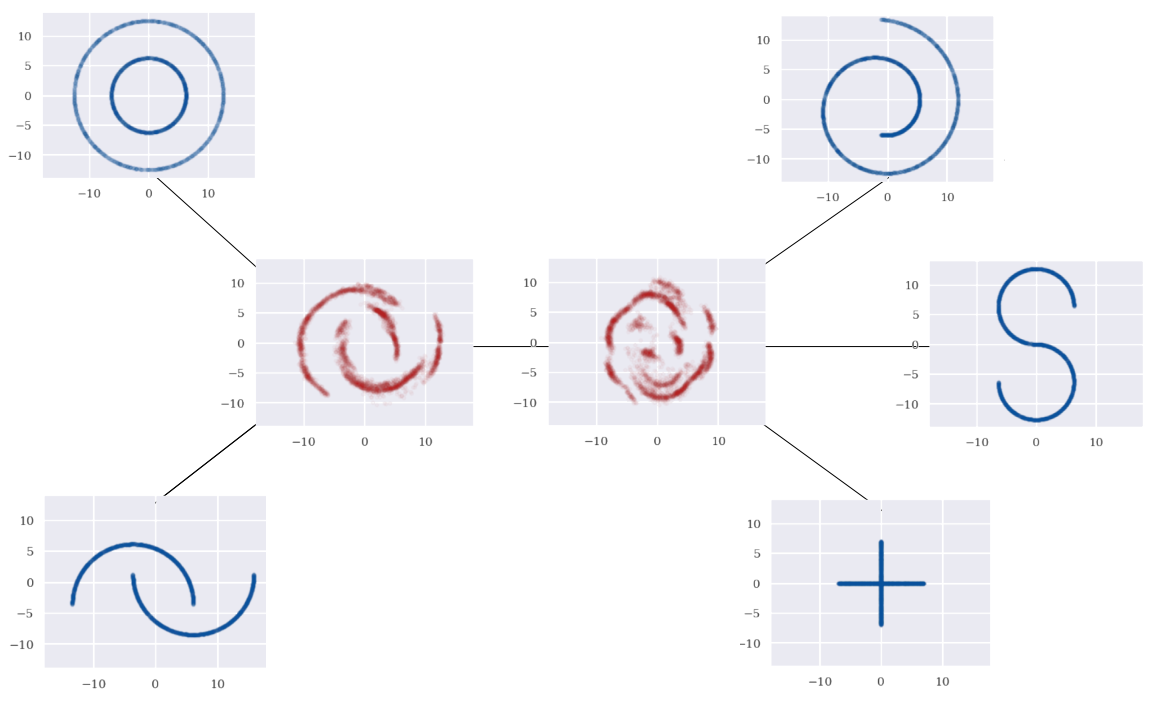}
    \caption{TreeDSBM applied to a non-star-shaped tree.}
    \label{fig:non-star tree}
\end{figure}

\section{Licenses}
\label{app:licenses}

The following assets were used in this work.
\begin{itemize}
    \item TreeDSB \citep{noble2023treebased}, MIT License \\ \url{https://github.com/maxencenoble/tree-diffusion-schrodinger-bridge}
    \item WIN, Ave! Celeba dataset \citep{korotin2022wasserstein}, MIT License \\
    \url{https://github.com/iamalexkorotin/WassersteinIterativeNetworks}
    \item W2CB \citep{korotin2021_bary_no_minimax}, MIT License \\
    \url{https://github.com/iamalexkorotin/Wasserstein2Barycenters}
    \item NOTWB \citep{kolesov2024estimating}, MIT License \\
    \url{https://github.com/justkolesov/NOTBarycenters}
    \item JAX Consistency Models \citep{song23_consistency}, Apache-2.0 License \\
    \url{https://github.com/openai/consistency_models_cifar10}
    \item Bike Sharing, UCI Machine Learning Repository \citep{bike_sharing_275}, CC BY 4.0 License \\
    \url{https://archive.ics.uci.edu/dataset/275/bike+sharing+dataset}
    \item MNIST digits classification dataset \citep{lecun2010mnist}, CC BY-SA 3.0 License
    \item OTT-JAX \citep{ott-JAX}, Apache-2.0 License
    \item Python Optimal Transport \citep{python-OT}, MIT License
    
\end{itemize}


\newpage

\newpage
\section*{NeurIPS Paper Checklist}
 

\begin{enumerate}

\item {\bf Claims}
    \item[] Question: Do the main claims made in the abstract and introduction accurately reflect the paper's contributions and scope?
    \item[] Answer: \answerYes{} 
    \item[] Justification: 
    In the abstract and introduction, we state that we extend the IMF procedure to the tree-based SB setting. In \Cref{sec:IMF for trees} we demonstrate the theoretical soundness of our proposed approach, and we explain how to implement it in \Cref{sec:implementation}. In \Cref{sec:experiments} we demonstrate  the empirical performance of our approach, showing it inherits the benefits on IMF over IPF in the tree-based setting.

\item {\bf Limitations}
    \item[] Question: Does the paper discuss the limitations of the work performed by the authors?
    \item[] Answer: \answerYes{} 
    \item[] Justification:
    We discuss the limitations of our approach in \Cref{sec:discussion}. We also provide an experiment illustrating a potential limitation of our approach in Appendix \ref{app:additional_experiments}.

\item {\bf Theory assumptions and proofs}
    \item[] Question: For each theoretical result, does the paper provide the full set of assumptions and a complete (and correct) proof?
    \item[] Answer: \answerYes{} 
    \item[] Justification: 
    We present simple statements of the theoretical results in the main text, with full assumptions and proofs included in Appendix \ref{app:proofs}.

    \item {\bf Experimental result reproducibility}
    \item[] Question: Does the paper fully disclose all the information needed to reproduce the main experimental results of the paper to the extent that it affects the main claims and/or conclusions of the paper (regardless of whether the code and data are provided or not)?
    \item[] Answer: \answerYes{} 
    \item[] Justification: 
    We report full implementation details of our reported experiments in Appendix C for reproducibility, including neural architecture and hyperparameter choices and data generation procedures.

\item {\bf Open access to data and code}
    \item[] Question: Does the paper provide open access to the data and code, with sufficient instructions to faithfully reproduce the main experimental results, as described in supplemental material?
    \item[] Answer: \answerYes{} 
    \item[] Justification: 
    We provide code to run our experiments in the supplementary material. For alternative algorithms, we use the open-source code provided by authors to which we provide links in Appendix \ref{app:licenses}.

\item {\bf Experimental setting/details}
    \item[] Question: Does the paper specify all the training and test details (e.g., data splits, hyperparameters, how they were chosen, type of optimizer, etc.) necessary to understand the results?
    \item[] Answer: \answerYes{} 
    \item[] Justification:
    We include these details in Appendix \ref{app:experimental_details}.

\item {\bf Experiment statistical significance}
    \item[] Question: Does the paper report error bars suitably and correctly defined or other appropriate information about the statistical significance of the experiments?
    \item[] Answer: \answerNo{} 
    \item[] Justification: 
    It is not computationally feasible to include error bars for all results, due to long training times for some of the algorithms. For the TreeDSBM algorithm, we report results for mean$\pm$std over 5 runs in Appendix \ref{app:experimental_details}.

\item {\bf Experiments compute resources}
    \item[] Question: For each experiment, does the paper provide sufficient information on the computer resources (type of compute workers, memory, time of execution) needed to reproduce the experiments?
    \item[] Answer: \answerYes{} 
    \item[] Justification: 
    These details are included in Appendix \ref{app:experimental_details}.
    
\item {\bf Code of ethics}
    \item[] Question: Does the research conducted in the paper conform, in every respect, with the NeurIPS Code of Ethics \url{https://neurips.cc/public/EthicsGuidelines}?
    \item[] Answer: \answerYes{} 
    \item[] Justification: 
    We have reviewed the Code of Ethics and confirm that our work conforms to these guidelines. We see no potential harmful consequences of our work, and include extensive reproducibility details in Appendix \ref{app:experimental_details}.

\item {\bf Broader impacts}
    \item[] Question: Does the paper discuss both potential positive societal impacts and negative societal impacts of the work performed?
    \item[] Answer: \answerNA{} 
    \item[] Justification: 
    Our work is mostly theory and methodology focused for the tree-structured Sch\"odinger Bridge problem. We do not see any immediate societal impacts, though certain applications may share similar societal consequences as in flow-based generative models upon which our approach is based.
    
\item {\bf Safeguards}
    \item[] Question: Does the paper describe safeguards that have been put in place for responsible release of data or models that have a high risk for misuse (e.g., pretrained language models, image generators, or scraped datasets)?
    \item[] Answer: \answerNA{} 
    \item[] Justification: 
    \item[] Guidelines:
    Our work poses no such risks.

\item {\bf Licenses for existing assets}
    \item[] Question: Are the creators or original owners of assets (e.g., code, data, models), used in the paper, properly credited and are the license and terms of use explicitly mentioned and properly respected?
    \item[] Answer: \answerYes{} 
    \item[] Justification: 
    We cite the works that we use, and include the licenses of code and datasets in Appendix \ref{app:licenses}.

\item {\bf New assets}
    \item[] Question: Are new assets introduced in the paper well documented and is the documentation provided alongside the assets?
    \item[] Answer: \answerNA{} 
    \item[] Justification: 
    The paper does not release new assets.

\item {\bf Crowdsourcing and research with human subjects}
    \item[] Question: For crowdsourcing experiments and research with human subjects, does the paper include the full text of instructions given to participants and screenshots, if applicable, as well as details about compensation (if any)? 
    \item[] Answer:  \answerNA{} 
    \item[] Justification: 
    The paper does not involve crowdsourcing nor research with human subjects.

\item {\bf Institutional review board (IRB) approvals or equivalent for research with human subjects}
    \item[] Question: Does the paper describe potential risks incurred by study participants, whether such risks were disclosed to the subjects, and whether Institutional Review Board (IRB) approvals (or an equivalent approval/review based on the requirements of your country or institution) were obtained?
    \item[] Answer: \answerNA{} 
    \item[] Justification: 
    The paper does not involve crowdsourcing nor research with human subjects.

\item {\bf Declaration of LLM usage}
    \item[] Question: Does the paper describe the usage of LLMs if it is an important, original, or non-standard component of the core methods in this research? Note that if the LLM is used only for writing, editing, or formatting purposes and does not impact the core methodology, scientific rigorousness, or originality of the research, declaration is not required.
    \item[] Answer: \answerNA{} 
    \item[] Justification: 
    The core method development in this research does not involve LLMs as any important, original, or non-standard components.

\end{enumerate}

\end{document}